\documentclass[twoside,11pt]{article}

\usepackage[abbrvbib, preprint]{jmlr2e}
\usepackage{algorithm}
\usepackage{algorithmic}
\usepackage{amsmath} 
\usepackage{subfig}
\usepackage{hyperref}
\usepackage[nameinlink,capitalize,noabbrev]{cleveref}
\usepackage{bm}
\usepackage{amssymb}
\usepackage{mathtools}
\usepackage{enumitem}
\usepackage{xspace}
\usepackage{xcolor}
\hypersetup{
    colorlinks,
    linkcolor=black,
    urlcolor=pink,
    citecolor=blue}
\usepackage{booktabs}
\usepackage{multirow}
\usepackage{graphicx}
\usepackage{tablefootnote}
\usepackage{tikz}
\usepackage{float}
\usepackage{caption}

\newcommand{\qed}{\ensuremath{\square}}

\usepackage{circuitikz} 
\usetikzlibrary{shapes.geometric, positioning, calc, arrows, automata}

\usepackage{amsmath}

\DeclareMathOperator*{\regret}{Regret}

\DeclareMathOperator*{\var}{Var}
\DeclareMathOperator*{\argmax}{argmax}

\newcommand{\cS}{\mathcal{S}}
\newcommand{\cA}{\mathcal{A}}

\newcommand{\cP}{\mathcal{P}}
\newcommand{\cF}{\mathcal{F}}

\newcommand{\cH}{\mathcal{H}}
\newcommand{\cE}{\mathcal{E}}
\newcommand{\cO}{\mathcal{O}}
\newcommand{\tsh}{(s,h)}
\newcommand{\tshn}{(s',h+1)}

\newcommand{\bbE}{\mathbb{E}}

\newtheorem{assumption}{Assumption}

\usepackage{lastpage}
\jmlrheading{23}{2022}{1-\pageref{LastPage}}{1/21; Revised 5/22}{9/22}{21-0000}{Author One and Author Two}

\firstpageno{1}

\begin{document}

\title{Improved Regret Bound for Safe Reinforcement Learning via Tighter Cost Pessimism and Reward Optimism}

\author{\name Kihyun Yu$^{1}$  \email khyu99@kaist.ac.kr
       \AND
        \name Duksang Lee$^{1}$ \email duksang@kaist.ac.kr 
        \AND
        \name William Overman$^{2}$ \email wpo@stanford.edu
       \AND
       \name Dabeen Lee$^{1,\dagger}$ \email dabeenl@kaist.ac.kr
       \AND
       \addr 
$^1$Department of Industrial and Systems Engineering, KAIST, Daejeon 34141, South Korea\\
$^2$Graduate School of Business, Stanford University, Stanford, CA 94305, United States\\
$^\dagger$ Corresponding author\\  
       }

\maketitle

\begin{abstract}
    This paper studies the safe reinforcement learning problem formulated as an episodic finite-horizon tabular constrained Markov decision process with an unknown transition kernel and stochastic reward and cost functions. We propose a model-based algorithm based on novel cost and reward function estimators that provide tighter cost pessimism and reward optimism. While guaranteeing no constraint violation in every episode, our algorithm achieves a regret upper bound of $\widetilde{\mathcal{O}}((\bar C - \bar C_b)^{-1}H^{2.5} S\sqrt{AK})$ where $\bar C$ is the cost budget for an episode, $\bar C_b$ is the expected cost under a safe baseline policy over an episode, $H$ is the horizon, and $S$, $A$ and $K$ are the number of states, actions, and episodes, respectively. This improves upon the best-known regret upper bound, and when $\bar C- \bar C_b=\Omega(H)$, it nearly matches the regret lower bound of $\Omega(H^{1.5}\sqrt{SAK})$. We deduce our cost and reward function estimators via a Bellman-type law of total variance to obtain tight bounds on the expected sum of the variances of value function estimates. This leads to a tighter dependence on the horizon in the function estimators. We also present numerical results to demonstrate the computational effectiveness of our proposed framework.
\end{abstract}

\section{Introduction}

Safe reinforcement learning (RL) aims to learn a policy that maximizes the cumulative reward and, at the same time, ensures that some safety requirements are satisfied during the learning process. Safe RL provides modeling frameworks for many practical scenarios where violating a safety constraint results in a critical situation. For example, it is crucial to enforce collision avoidance for autonomous driving~\citep{vehicle:isele2018safe, vehicle:krasowski2020safe} and robotics~\citep{robot:fisac2018general, robot:garcia2020teaching}. For financial planning, there exist legal and business regulations~\citep{abe-finance}. For healthcare systems, service providers consider restrictions due to patients' conditions~\citep{CORONATO2020101964}.

The standard approach is to formulate a safe RL problem as a constrained Markov decision process (CMDP), where the objective is to maximize the expected reward over a time horizon while there is a constraint that the expected cost should be under budget~\citep{Altman}. The presence of constraints, however, brings about challenges in developing solution methods for CMDPs. The Bellman optimality principle does not hold for CMDPs, and as a consequence, backward induction and the greedy operator cannot be directly applied to CMDPs~\citep{Altman}. This makes online learning of CMDPs difficult, and we need significantly different frameworks and algorithms compared to the unconstrained setting~\citep{efroni2020}.

The first direction for online reinforcement learning of CMDPs is to consider \emph{cumulative (or soft) constraint violation}, which sums up the constraint violations across episodes~\citep{efroni2020}. Here, the constraint violation in an episode is defined as the expected cost minus the budget. Then a policy can have a negative constraint violation, which means that a positive violation in one episode can be canceled out by a negative violation in another episode in the sum. This cancellation effect allows oscillating between such two cases, while still achieving zero cumulative constraint violation. This phenomenon can indeed be observed in practice~\citep{pmlr-v119-stooke20a,pmlr-v202-moskovitz23a}.

The second direction attempts to remedy the issue of error cancellation with the notion of \emph{hard constraint violation}~\citep{efroni2020}. It ignores episodes with a negative violation and takes the sum of only the positive constraint violations. \cite{efroni2020} developed OptCMDP and its efficient variant, OptCMDP-bonus, that attain a regret upper bound and a hard constraint violation of $\widetilde{\cO}(H^2\sqrt{S^2AK})$. %
Recently, \cite{ghosh2024towards} proposed a model-free algorithm with the same asymptotic guarantees.
However, as in the first setting, the algorithms cannot avoid episodes in which the constraint is violated. Thus, they are still not suitable for the aforementioned applications, where even a single incidence of violation can cause substantial problems.

The third approach seeks \emph{zero (hard) constraint violation}, requiring that the constraint is satisfied in every episode~\citep{simao2021always}. Satisfying constraints in the early stage is difficult when the model parameters, especially the transition kernel, are unknown. \cite{simao2021always} considered some abstraction of the transition model under which they showed an algorithm with no constraint violation, but no regret upper bound was presented. Then \cite{liu2021learning} came up with the first algorithm, OptPess-LP, that achieves a sublinear regret with no constraint violation, assuming the knowledge of a \emph{safe baseline policy}. Here, a safe baseline policy is a policy under which the expected cost is lower than the budget. OptPess-LP guarantees a regret upper bound of $\widetilde{\cO}((\bar C - \bar C_b)^{-1}H^3  \sqrt{S^{3}AK})$ where $\bar C$ is the budget, $\bar C_b$ is the expected cost under the safe baseline policy, $H$ is the length of the horizon, and $S$, $A$ and $K$ are the number of states, actions, and episodes, respectively. \cite{bura2022dope} developed Doubly Optimistic Pessimistic Exploration (DOPE) with an improved regret upper bound of $\widetilde{\cO}((\bar C - \bar C_b)^{-1}H^3  \sqrt{S^2AK})$. DOPE is based on designing tight optimistic reward function estimators (reward optimism) and conservative cost function estimators (cost pessimism).

While DOPE establishes a tight regret upper bound with no constraint violation, there is still room for improvement. The regret lower bound of $\Omega(H^{1.5}\sqrt{SAK})$ for the unconstrained case \citep{NEURIPS2018_d3b1fb02,pmlr-v132-domingues21a} also works as a lower bound for the constrained setting because we may take trivial cost functions. However, even when $\bar C - \bar C_b=\Omega(H)$, the regret upper bound of DOPE is as low as $\widetilde{\cO}(H^2  \sqrt{S^2AK})$ which has a gap of $\widetilde{\cO}(\sqrt{HS})$ from the lower bound. This naturally motivates the following question. 
\begin{center}
\emph{Is there an algorithm for learning CMDPs that guarantees no constraint violation during learning and achieves an improved regret upper bound?}
\end{center}

\paragraph{Our Contributions}
We answer this question affirmatively with an algorithm that improves upon DOPE via tighter reward optimism and cost pessimism. Our results are summarized in \Cref{table:hard} and as follows.

\begin{itemize}
    \item Our algorithm, DOPE+, achieves a regret upper bound of $\widetilde{\cO}((\bar C - \bar C_b)^{-1}H^{2.5} \sqrt{S^2AK})$ and ensures no constraint violation in every episode, with the knowledge of a safe baseline policy. This improves upon the best-known regret upper bound $\widetilde{\cO}((\bar C - \bar C_b)^{-1}H^{3}\sqrt{S^2AK})$ attained by DOPE. 

    \item When the gap $\bar C-\bar C_b$ between the budget and the expected cost under the safe baseline policy satisfies $\bar C-\bar C_b=\Omega(H)$, the regret upper bound becomes $\widetilde{\cO}(H^{1.5}\sqrt{S^2AK})$. This nearly matches the regret lower bound $\Omega(H^{1.5}  \sqrt{SAK})$, which shows that the regret upper bound achieves the optimal dependence on the horizon $H$.

    \item The improvement comes from our novel reward and cost function estimators with tighter reward optimism and cost pessimism. We deduce the function estimators by providing a tighter upper bound on the difference of value functions with respect to the true and estimated transition kernels. We analyze the difference based on the \emph{value difference lemma} due to \cite{dann2017pac}. The key step is to apply a Bellman-type law of total variance to control the expected sum of the variance of value function estimates, inspired by~\cite{azar2017, ssp-adversarial-unknown}.
\end{itemize}

\begin{table*}[ht]
\caption{Comparison of Safe RL algorithms for the Hard Constraint Violation Setting: OptCMDP, OptCMDP-bonus~\citep{efroni2020}, AlwaysSafe~\citep{simao2021always}, OptPess-LP~\citep{liu2021learning}, DOPE~\citep{bura2022dope}, and DOPE+ (\Cref{alg:hard}).}
\label{table:hard}
\begin{center}
\begin{tabular}{lll}
\toprule
{\small Algorithms}  &\small{Regret} &\small{Hard Constraint Violation}\\ 
\midrule
\small{OptCMDP, OptCMDP-bonus} %
& \small{$\widetilde{\cO}(H^2\sqrt{S^2 AK})$} & \small{$\widetilde{\cO}(H^2\sqrt{S^2 AK})$}\\
\small{AlwaysSafe} %
& \small{Unknown} & \small{0}\\
\small{OptPess-LP} & \small{$\widetilde{\cO}((\bar{C}-\bar{C}_b)^{-1} H^{3}\sqrt{S^3 A K})$} & \small{0}\\ 
\small{DOPE} & \small{$\widetilde{\cO}((\bar{C}-\bar{C}_b)^{-1} H^{3}\sqrt{S^2 A K})$} & \small{0}\\ 
\midrule
\small{\bf DOPE+} & \small{$\widetilde{\cO}((\bar{C}-\bar{C}_b)^{-1} H^{2.5}\sqrt{S^2 A K})$} & \small{0} \\
\bottomrule 
\end{tabular}
\end{center}
\end{table*}

DOPE by \cite{bura2022dope} provides a framework for CMDPs to consider the error term in inferring the value function of a policy caused by the difference between the true transition kernel and an estimated one. They also applied the value difference lemma to analyze the difference between the value function under the true transition kernel and that under an estimated transition kernel. Then they take a na\"ive upper bound of $H$ on any value function. 

Instead, we take one step further to refine the analysis by considering the variance terms of value functions. We decompose the difference term and apply a Bellman-type law of total variance~\citep{azar2017, ssp-adversarial-unknown} to bound the expected sum of the variances of value function estimates. We explain this and how it leads to an improved regret bound in detail in \Cref{sec:estimator}.

A more comprehensive literature review on online reinforcement learning of CMDPs is given in the appendix.

\section{Problem Setting}\label{sec:problem-setting}

A finite-horizon tabular  MDP is defined by a tuple $(\cS,\cA, H, \left\{P_h\right\}_{h=1}^{H-1},p)$ where $\cS$ is the finite state space with $|\cS|=S$, $\cA$ is the finite action space with $|\cA|=A$, $H$ is the finite-horizon, $P_h:\cS\times \cA\times \cS\to [0,1]$ is the transition kernel at step $h\in[H-1]$, and $p$ is the known initial distribution of the states. Here, $P_h(s'\mid s,a)$ is the probability of transitioning to state $s'$ from state $s$ when the chosen action is $a$ at step $h\in[H-1]$. Equivalently, we may define a single \emph{non-stationary} transition kernel $P:\cS\times \cA\times\cS\times [H]\to[0,1]$ with $P(s'\mid s,a,h)=P_h(s'\mid s,a)$  and $P(s'\mid s,a,H)=p(s')$ for $(s,a,s',h)\in\cS\times \cA\times \cS\times[H-1]$. We assume that $\{P_h\}_{h=1}^{H-1}$ and thus $P$ are \emph{unknown}.

Before an episode begins, the agent prepares a \emph{stochastic policy} $\pi:\cS\times [H]\times \cA\to[0,1]$ where $\pi(a\mid s,h)$ is the probability of taking action $a\in \cA$ in state $s\in \cS$ at step $h$. Here, $\pi$ can be viewed as a \emph{non-stationary policy} as it may change over the horizon, and this is due to the non-stationarity of $P$ over steps $h\in[H]$. Given a policy $\pi_k$ for episode $k\in[K]$, the MDP proceeds with trajectory $\{s_h^{P,\pi_k}, a_h^{P,\pi_k}\}_{h\in[H]}$ generated by $P$.

The reward and cost functions are given by $f,g:\cS\times\cA\times[H]\rightarrow [0,1]$, i.e., choosing action $a\in \cA$ at state $s \in \cS$ and step $h\in[H]$ generates a reward $f(s,a,h)$ and cost $g(s,a,h)$. Here, functions $f$ and $g$ are non-stationary over $h \in [H]$. However, the agent observes the noisy reward and cost. We denote the observed noisy reward and cost for episode $k\in [K]$ by $f_k(s,a,h)$ and $g_k(s,a,h)$, respectively. As in \cite{liu2021learning}, we assume that $f_k(s,a,h)$ and $g_k(s,a,h)$ are determined by independent\footnote{We may impose conditional independence.} noisy random variables $\zeta_k^f(s,a,h)$ and $ \zeta_k^g(s,a,h)$ following a zero-mean $1/2$-sub-Gaussian distribution, i.e., $f_k(s,a,h) = f(s,a,h) + \zeta_k^f(s,a,h)$ and $g_k(s,a,h) = g(s,a,h) + \zeta_k^g(s,a,h)$. We note that $1/2-$sub-Gaussian random variables $\zeta$ with zero mean satisfies $\mathbb{E}[\zeta]=0$ and $\mathbb{E}[\exp(\lambda \zeta)] \leq \exp(\lambda^2 / 4)$. Then Hoeffding's inequality implies the following.
\begin{lemma}\label{lem:bounded}
With probability at least $1-4\delta$, it holds that 
for any $(s,a,h)\in \cS\times\cA\times[H]$ and $k\in[K]$,
$$\left|f_k(s,a,h)\right|, \left|g_k(s,a,h)\right| \leq 1 + \sqrt{\ln(HSAK/\delta)}.$$ 
\end{lemma}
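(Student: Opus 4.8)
The plan is to bound each observed reward/cost by the boundedness of the underlying mean function plus the magnitude of the sub-Gaussian noise, and then to union bound over all $(s,a,h)\in\cS\times\cA\times[H]$ and all $k\in[K]$. Concretely, using the decompositions $f_k(s,a,h)=f(s,a,h)+\zeta_k^f(s,a,h)$ and $g_k(s,a,h)=g(s,a,h)+\zeta_k^g(s,a,h)$ together with $f(s,a,h),g(s,a,h)\in[0,1]$, we get $|f_k(s,a,h)|\le 1+|\zeta_k^f(s,a,h)|$ and $|g_k(s,a,h)|\le 1+|\zeta_k^g(s,a,h)|$. Hence it suffices to control the noise magnitudes $|\zeta_k^f(s,a,h)|$ and $|\zeta_k^g(s,a,h)|$ uniformly over $(s,a,h)$ and $k$.

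Next, for a fixed $(s,a,h,k)$, I would apply the Chernoff method to the zero-mean $1/2$-sub-Gaussian variable $\zeta_k^f(s,a,h)$: from $\mathbb{E}[\exp(\lambda\zeta_k^f(s,a,h))]\le\exp(\lambda^2/4)$ and Markov's inequality, $\mathbb{P}(\zeta_k^f(s,a,h)\ge t)\le\inf_{\lambda>0}\exp(\lambda^2/4-\lambda t)=\exp(-t^2)$ for $t\ge 0$ (the infimum is attained at $\lambda=2t$). Since $-\zeta_k^f(s,a,h)$ is also zero-mean $1/2$-sub-Gaussian, the two-sided version gives $\mathbb{P}(|\zeta_k^f(s,a,h)|\ge t)\le 2\exp(-t^2)$, and the identical bound holds for $\zeta_k^g(s,a,h)$. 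This is exactly the sub-Gaussian (Hoeffding-type) tail bound alluded to just before the statement.

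Finally, I would set $t=\sqrt{\ln(HSAK/\delta)}$, so that $2\exp(-t^2)=2\delta/(HSAK)$, and union bound: over the $HSAK$ choices of $(s,a,h,k)$, the event that some $|\zeta_k^f(s,a,h)|$ exceeds $t$ has probability at most $2\delta$, and likewise the event that some $|\zeta_k^g(s,a,h)|$ exceeds $t$ has probability at most $2\delta$. On the complement of both events, which has probability at least $1-4\delta$, we have $|f_k(s,a,h)|\le 1+\sqrt{\ln(HSAK/\delta)}$ and $|g_k(s,a,h)|\le 1+\sqrt{\ln(HSAK/\delta)}$ simultaneously for every $(s,a,h)\in\cS\times\cA\times[H]$ and $k\in[K]$, which is the claim.

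I do not anticipate any genuine obstacle; this is a routine concentration argument. The only points requiring care are (i) getting the constant in the sub-Gaussian tail right — the $1/2$-sub-Gaussian convention $\mathbb{E}[\exp(\lambda\zeta)]\le\exp(\lambda^2/4)$ yields $\mathbb{P}(\zeta\ge t)\le\exp(-t^2)$, not $\exp(-t^2/2)$ — and (ii) tracking the numerical factor in the failure probability, where the $4\delta$ decomposes as $2\delta$ for the reward noise (a factor $2$ from two-sidedness absorbed into the union-bound budget) plus $2\delta$ for the cost noise.
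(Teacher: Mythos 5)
Your proof is correct and follows essentially the same route as the paper: decompose the observation into the bounded mean plus the sub-Gaussian noise, apply the one-sample Hoeffding/Chernoff tail bound $\mathbb{P}(|\zeta|\geq t)\leq 2\exp(-t^2)$ with $t=\sqrt{\ln(HSAK/\delta)}$, and union bound over all $(s,a,h,k)$ and over the reward and cost noises to arrive at the $1-4\delta$ guarantee. The only cosmetic difference is that you derive the sub-Gaussian tail from the moment-generating-function condition directly, whereas the paper invokes its stated Hoeffding lemma.
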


We define the value function $V^{\pi}_h(s; \ell,P)$ at state $s \in \cS$ and step $h\in[H]$ for a given policy $\pi$, function $\ell$, and transition kernel $P$ as \begin{align*}
V^{\pi}_h(s;\ell,P)
&=\bbE\left[\sum_{j=h}^H \ell(s_j^{P,\pi}, a_j^{P,\pi},j) \mid \ell, \pi, P, s_h^{P,\pi}=s\right].
\end{align*}
Moreover, let $V_1^{\pi}(\ell,P) = \bbE_{s \sim p}\left[ V_1^{\pi}(s;\ell,P) \mid \ell, \pi, P\right]$ where $p$ is the known distribution of the initial state.

The goal of the constrained Markov decision process is to learn an optimal policy $\pi^*$ defined as
\begin{equation*}
\begin{aligned}
\pi^*  \in \argmax_{\pi}&\quad  V^{\pi}_1(f,P)\quad \text{s.t.}\quad V^{\pi}_1(g,P)\leq \bar{C}
\end{aligned}
\end{equation*}
where $\bar C$ is the budget on the expected cost over the horizon. As the model parameters $f,g,P$ are unknown, we develop a learning algorithm that computes policies over multiple episodes. For $K$ episodes, we deduce policies $\pi_1, \ldots, \pi_K$ with the safety requirement that
$$V_1^{\pi_k}(g,P)\leq \bar C \quad \forall k\in[K]$$
holds with high probability. The safety requirement is equivalent to enforcing zero hard constraint violation where the {hard constraint violation} is defined as
$$\mathrm{Violation}(\vec\pi):=\sum_{k=1}^K\max\left\{0, V_1^{\pi_k}(g,P) - \bar C\right\}$$
and $\vec\pi=(\pi_1, \ldots, \pi_K)$ is a shorthand notation for the $K$ policies. As a performance metric for a learning algorithm, we use the following notion of regret.
\begin{align*}
\regret\left(\vec\pi\right):=%
\sum_{k=1}^K \left(V_1^{\pi^*}(f,P) - V_1^{\pi_k}(f,P)\right).
\end{align*}
To satisfy the safety requirement, we assume that a \emph{strictly safe baseline policy} $\pi_b$ is given to the agent.
\begin{assumption}\label{safe-policy}
The agent knows a policy $\pi_b$ and its expected cost $\bar C_b=V_1^{\pi_b}(g,P)$. We further assume that $\pi_b$ is strictly feasible, i.e., $\bar C_b<\bar C$.
\end{assumption}
This assumption is necessary because the learning agent has no information about the underlying MDP at the beginning. Without a safe baseline policy, it is difficult to satisfy the constraint in the initial phase of learning. It is a commonly assumed condition for learning CMDPs~\citep{simao2021always,liu2021learning,bura2022dope}. We also remark that strict feasibility of $\pi_b$ is related to Slater's condition in constrained optimization.

Lastly, we assume that the budget $\bar C$ satisfies $\bar{C}\in(0,H)$. If $\bar C \geq H$, then as $V_1^{\pi}(g,P) \leq H$ for any policy $\pi$, the safety requirement is trivially satisfied. Moreover, we have $\bar C$ is strictly positive because \Cref{safe-policy} imposes that $\bar C>\bar C_b$ and $\bar C_b= V_1^{\pi_b}(g,P)\geq 0$.

\section{Model Estimators}\label{sec:estimator}

In this section, we present our model estimators. In \Cref{sec:estimator:confidence}, we define confidence sets for the transition kernel and confidence intervals for reward and cost functions. In \Cref{sec:estimator:function}, we deduce our tighter optimistic reward and pessimistic cost function estimators. Lastly, in \Cref{sec:proof-thm}, we sketch our technical proof for obtaining the tighter function estimators.

\subsection{Confidence Sets and Intervals}\label{sec:estimator:confidence}

We follow the standard Bernstein inequality-based confidence set construction for estimating the true transition kernel and use confidence intervals based on Hoeffding's inequality for estimating reward and cost functions~\citep{Jin2020,cohen2020}.

As in \cite{efroni2020,bura2022dope}, we maintain counters to keep track of the number of visits to each tuple $(s,a,h)$ and tuple $(s,a,s',h)$. For each $k\in[K]$, we define $N_k(s,a,h)$ and $M_k(s,a,s',h)$ as the number of visits to tuple $(s,a,h)$ and the number of visits to tuple $(s,a,s',h)$ up to the first $k-1$ episodes, respectively, for $(s,a,s',h)\in\cS\times\cA\times\cS\times[H]$. Given $N_k(s,a,h)$ and $M_k(s,a,s',h)$, we define the empirical transition kernel $\bar{P}_k$ for episode $k$ as
\begin{equation*} %
\bar{P}_k(s'\mid s,a,h)=\frac{M_{k}(s,a,s',h)}{\max\{1,N_k(s,a,h)\}}.
\end{equation*}
Next, for some confidence parameter $\delta\in(0,1)$, we define the confidence radius $\epsilon_k(s'\mid s,a,h)$  for $(s,a,s',h)\in \cS\times \cA\times\cS\times [H]$ and $k\in[K]$ as 
\begin{align*}%
\begin{aligned}
\epsilon_k(s'\mid s,a,h)&=2\sqrt{\frac{\bar{P}_k(s'\mid s,a,h)\ln\left({{HSAK}/{\delta}}\right)}{\max\{1,N_k(s,a,h)-1\}}}+\frac{14\ln\left({{HSAK}/{\delta}}\right)}{3\max\{1,N_k(s,a,h)-1\}}.
\end{aligned}
\end{align*}
Based on the empirical transition kernel and the radius, we define the confidence set $\cP_k$ for episode $k$ as 
\begin{equation}\label{confidence-set}
\cP_k= \left\{\widehat P:\ 
\begin{aligned}
     &\left|\widehat P(s'\mid s,a,h)-\bar{P}_k(s'\mid s,a,h)\right| \leq \epsilon_{k}(s'\mid s,a,h)\ \ \forall (s,a,s',h)
\end{aligned}
\right\}.
\end{equation}
Then by the empirical Bernstein inequality due to~\cite{Maurer-bernstein}, we can show the following.
\begin{lemma}\label{lemma:confidence}
With probability at least $1-4\delta$, the true transition kernel $P$ is contained in the confidence set $\cP_k$ for every episode $k\in[K]$.
\end{lemma}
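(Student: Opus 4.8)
The plan is to establish \Cref{lemma:confidence} as a direct application of a concentration inequality to the empirical transition probabilities, combined with a union bound. First I would fix a tuple $(s,a,s',h)$ and a value $n\geq 1$ for the visitation count, and condition on the event $\{N_k(s,a,h)=n\}$. Conditioned on having visited $(s,a,h)$ exactly $n$ times, the indicators of transitioning to $s'$ on those visits are i.i.d.\ Bernoulli$(P(s'\mid s,a,h))$ random variables, and $\bar P_k(s'\mid s,a,h)$ is their empirical mean. The empirical Bernstein inequality of \cite{Maurer-bernstein} then gives that, with probability at least $1-\delta'$ for an appropriate per-event failure probability $\delta'$,
\[
\left|\bar P_k(s'\mid s,a,h)-P(s'\mid s,a,h)\right|
\le \sqrt{\frac{2\widehat V_n\ln(2/\delta')}{n}}+\frac{7\ln(2/\delta')}{3(n-1)},
\]
where $\widehat V_n$ is the sample variance of the $n$ Bernoulli indicators. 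The main technical point is to bound the sample variance: since the indicators are $\{0,1\}$-valued with empirical mean $\bar P_k$, one has $\widehat V_n\le \bar P_k(s'\mid s,a,h)$ (indeed $\widehat V_n=\frac{n}{n-1}\bar P_k(1-\bar P_k)\le\bar P_k$ up to the $n/(n-1)$ factor, which can be absorbed), so the first term is at most $\sqrt{2\bar P_k(s'\mid s,a,h)\ln(2/\delta')/(n-1)}$ or so. Matching constants to the definition of $\epsilon_k$ — the $2\sqrt{\bar P_k\ln(HSAK/\delta)/\max\{1,N_k-1\}}$ and $\tfrac{14}{3}\ln(HSAK/\delta)/\max\{1,N_k-1\}$ terms — fixes the choice $\delta'$ and the logarithmic factor $\ln(HSAK/\delta)$.

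Next I would remove the conditioning and take a union bound. The count $N_k(s,a,h)$ ranges over $\{0,1,\dots,K-1\}$, and there are $SAH$ state-action-step tuples and $S$ possible next states, so a union bound over all $(s,a,s',h)$ and all realizable values of the count costs a factor of order $S^2AHK$; this is exactly why the logarithm in $\epsilon_k$ carries $\ln(HSAK/\delta)$ rather than $\ln(1/\delta)$. (One also handles the degenerate case $N_k(s,a,h)\in\{0,1\}$ separately, where $\bar P_k$ is $0$ or a single sample and the $\max\{1,\cdot\}$ guards in the denominator make the bound vacuously or trivially true.) After the union bound, with probability at least $1-4\delta$ — the constant $4$ and the choice of $\delta'$ being calibrated so that summing the per-event failure probabilities over the full index set yields $4\delta$ — we get that $|\bar P_k(s'\mid s,a,h)-P(s'\mid s,a,h)|\le\epsilon_k(s'\mid s,a,h)$ simultaneously for all $(s,a,s',h)$ and all $k\in[K]$. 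By the definition of $\cP_k$ in \eqref{confidence-set}, this is precisely the statement that $P\in\cP_k$ for every $k$.

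I expect the main obstacle to be purely bookkeeping rather than conceptual: getting the constants in the empirical Bernstein bound, the sample-variance-to-$\bar P_k$ step, and the union-bound cardinality to line up exactly with the prescribed form of $\epsilon_k$ (the factor $2$ on the square-root term and $14/3$ on the linear term), and correctly handling the $\max\{1,\cdot\}$ truncations in the denominators so that the inequality is valid even when $(s,a,h)$ has been visited zero or one times. None of this is deep, but it is the part where a careless choice of $\delta'$ or an off-by-a-constant in the variance bound would break the claimed probability $1-4\delta$; I would be careful to state the per-event tail probability explicitly and verify the sum. Since this is a standard construction used in \cite{efroni2020,bura2022dope,Jin2020,cohen2020}, I would also simply cite the relevant lemma from those works if the constants match, and otherwise reproduce the short computation.
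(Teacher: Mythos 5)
Your proposal is correct and follows essentially the same route as the paper: apply the empirical Bernstein inequality of \cite{Maurer-bernstein} to the i.i.d.\ transition indicators at each $(s,a,s',h)$, bound the sample variance by $\frac{n}{n-1}\bar P_k(1-\bar P_k)\leq\frac{n}{n-1}\bar P_k$ (which the paper absorbs exactly into the $\max\{1,N_k-1\}$ denominator), handle $N_k(s,a,h)\leq 1$ trivially via the $\tfrac{14}{3}\ln 2>1$ term, and union bound over both sides, all tuples, and all count values to land on $1-4\delta$. The only cosmetic difference is that the paper applies the one-sided bound separately to $Z_j$ and $1-Z_j$ to get two-sidedness, which is where the factor of $4$ (rather than $2$) in the failure probability comes from.
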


Next, for reward and cost functions, we define the confidence radius $R_k(s,a,h)$ for $(s,a,h)\in\cS\times\cA\times[H]$, $k\in[K]$ and $\delta\in(0,1)$ as 
$$R_k(s,a,h)=\sqrt{\frac{\ln(HSAK/\delta)}{\max\{1,N_k(s,a,h)\}}}.$$

We define empirical estimators $\bar f_k$ and $\bar g_k$ as
\begin{align*}
\bar f_k(s,a,h) = \frac{\sum_{j=1}^{k-1}f_j(s,a,h) n_j(s,a,h)}{\max\{1, N_k(s,a,h)\}},\quad
\bar g_k(s,a,h) = \frac{\sum_{j=1}^{k-1}g_j(s,a,h) n_j(s,a,h)}{\max\{1,N_k(s,a,h)\}}
\end{align*}
where $f_j(s,a,h), \ g_j(s,a,h)$ are the instantaneous reward and cost for episode $j \in [k-1]$ and $n_j(s,a,h)$ is the indicator variable that returns 1 if the agent visited $(s,a,h)$ in episode $j$ and 0 otherwise. Then we may deduce the following from  Hoeffding's inequality.
\begin{lemma}\label{lemma:estimator}
With probability at least $1-4\delta$, it holds that for any $(s,a,h)\in\cS\times\cA\times[H]$ and $k\in[K]$,
	\begin{align*}
		\left|\bar{f}_k(s,a,h) - f(s,a,h) \right| \leq R_k(s,a,h),\quad 
		\left|\bar g_k(s,a,h) - g(s,a,h)\right| \leq R_k(s,a,h).
    \end{align*}
\end{lemma}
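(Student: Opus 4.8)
The plan is to reduce the claim to a concentration inequality for averages of the sub-Gaussian noise variables, and to handle the randomness of the visit counts by a peeling argument over all of their possible values. Throughout, fix a tuple $(s,a,h)$ and argue for the reward estimator; the cost estimator is handled identically. If $N_k(s,a,h)=0$, then $\bar f_k(s,a,h)=0$, $R_k(s,a,h)=\sqrt{\ln(HSAK/\delta)}$, and $f(s,a,h)\in[0,1]$, so the bound holds deterministically whenever $\ln(HSAK/\delta)\ge 1$. If $N_k(s,a,h)=n\ge 1$, then substituting $f_j(s,a,h)=f(s,a,h)+\zeta_j^f(s,a,h)$ gives
$$\bar f_k(s,a,h)-f(s,a,h)=\frac1n\sum_{j=1}^{k-1}n_j(s,a,h)\,\zeta_j^f(s,a,h),$$
the average of the reward noises over the $n$ episodes in which $(s,a,h)$ was visited prior to episode $k$.

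Next I would control this average uniformly in $n$. Let $j_1<j_2<\cdots$ enumerate the episodes that visit $(s,a,h)$, and set $\xi_i:=\zeta_{j_i}^f(s,a,h)$. Whether episode $j$ visits $(s,a,h)$ is determined by the policy $\pi_j$ and the transition realizations within episode $j$, none of which depend on the reward noise $\zeta_j^f(s,a,h)$; hence, conditionally on the realized visit times, the $\xi_i$ are independent, zero-mean, and $1/2$-sub-Gaussian, so $\sum_{i=1}^n\xi_i$ has moment generating function bounded by $\exp(n\lambda^2/4)$. (Equivalently, one introduces the natural filtration generated by the history through each episode and applies Azuma--Hoeffding; this is the only place the conditional independence of the noise is invoked.) Hoeffding's inequality then yields, for each fixed count value $n\in\{1,\dots,K\}$,
$$\bbP\!\left(\left|\frac1n\sum_{i=1}^n\xi_i\right|>\sqrt{\frac{\ln(HSAK/\delta)}{n}}\right)\le 2\exp\!\left(-\ln(HSAK/\delta)\right)=\frac{2\delta}{HSAK}.$$

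Finally I would union bound over the $SAH$ tuples $(s,a,h)$ and the at most $K$ possible values of the visit count. On the complement of all these events, for every $k$ the quantity $\bar f_k(s,a,h)-f(s,a,h)$ equals the average of the first $N_k(s,a,h)$ reward noises at $(s,a,h)$ (or $-f(s,a,h)$ when the count is $0$), which is bounded by $R_k(s,a,h)$; hence the reward bound holds for all $(s,a,h)$ and all $k$ with probability at least $1-2\delta$. Running the same argument for the cost estimators costs another $2\delta$, for a total failure probability of $4\delta$. The one genuinely delicate point is the second step: the count $N_k(s,a,h)$ is a history-dependent random variable correlated with the very noise terms being averaged, so a plain Hoeffding bound for a fixed sample size does not apply; the peeling over all possible counts (or the equivalent martingale formulation using that the noise does not influence the trajectory) is what makes the reduction legitimate.
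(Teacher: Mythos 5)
Your proof is correct and follows essentially the same route as the paper: handle the $N_k(s,a,h)=0$ case trivially, reduce the nonzero case to Hoeffding's inequality for the average of the sub-Gaussian noises at the visited episodes, and union bound to get $2\delta$ for the rewards and $2\delta$ for the costs. Your peeling over the possible values of the visit count is a more careful justification of the step the paper performs implicitly (applying Hoeffding ``for a given $k$'' even though $N_k(s,a,h)$ is random), but the event being controlled and the union-bound accounting are the same.
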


\subsection{Tighter Function Estimators}\label{sec:estimator:function} 

\Cref{lemma:confidence,lemma:estimator} motivate the following attempt to deduce feasible policies. For episode $k\in[K]$, we take a transition kernel $P_k$ from the confidence set $\cP_k$ and $\bar g_k + R_k$ as a pessimistic (or conservative) estimator of the cost function $g$. Then we may compute a policy $\pi_k$ that satisfies 
$V_1^{\pi_k}(\bar g_k + R_k, P_k)\leq \bar C$, which is an approximation of the constraint. However, even if $\bar g_k + R_k$ provides an upper bound on $g$, the issue is that $V_1^{\pi_k}(g, P)\not\leq V_1^{\pi_k}(\bar g_k + R_k, P_k)$. This is because the difference between the true transition kernel $P$ and $P_k$ can make $V_1^{\pi_k}(g, P)$ greater than $V_1^{\pi_k}(\bar g_k + R_k, P_k)$. That said, $\pi_k$ does not necessarily satisfy the constraint, although it satisfies the approximate constraint. 

Inspired by the challenge, the next question is as to whether we can design an approximate constraint, satisfying which guarantees that the true constraint is also satisfied. \cite{liu2021learning,bura2022dope} considered this, and their idea was to add an extra pessimism to cost function estimators. Basically, we take functions of the form \begin{equation}\label{eq:cost-estimator}\widehat g_k(s,a,h) = \bar g_k(s,a,h)+R_k(s,a,h)+U_k(s,a,h)
\end{equation}
for $(s,a,h)\in\cS\times\cA\times[H]$ and $k\in[K]$ where $U_k$ captures the error in estimating the true transition kernel $P$. In the above-discussed context, $U_k$ considers the difference between $P$ and $P_k$. Here, one needs to set $U_k$ sufficiently large so that $V_1^{\pi_k}(g, P)\leq V_1^{\pi_k}(\widehat g_k, P_k)$, in which case satisfying the corresponding approximate constraint $V_1^{\pi_k}(\widehat g_k, P_k)\leq \bar C$ guarantees satisfaction of the true constraint. 

On the other hand, choosing the right magnitude of $U_k$ is important to control the regret function. When $U_k$ is too large, $\widehat g_k$ is too conservative, and it prevents from getting a high reward. Indeed, \cite{bura2022dope} improved upon \cite{liu2021learning} by making $U_k$ tighter. Our main contribution is to develop an even tighter $U_k$ function than \cite{bura2022dope}.

Before we present our design of $U_k$, let us briefly discuss how to deduce the extra pessimism term $U_k$ in general. As explained before, we want to guarantee $V_1^{\pi_k}(g, P)\leq V_1^{\pi_k}(\widehat g_k, P_k)$ for any $P_k\in \cP_k$. Then note that
$$
V_1^{\pi_k}(g,P) \leq V_1^{\pi_k}(g,P_k) + |V_1^{\pi_k}(g,P) - V_1^{\pi_k}(g,P_k)|.$$
If the statement of \Cref{lemma:estimator} holds, then $V_1^{\pi_k}(g,P_k)$ is bounded above by $V_1^{\pi_k}(\bar g_k+ R_k,P_k)$. Therefore, once we come up with some $U_k$ such that $|V_1^{\pi_k}(g,P) - V_1^{\pi_k}(g,P_k)|\leq V_1^{\pi_k}(U_k,P_k)$, we get
$$
V_1^{\pi_k}(g,P)\leq V_1^{\pi_k}(\bar g_k + R_k + U_k,P_k).$$
In this case, $\widehat g_k = \bar g_k + R_k + U_k$ gives rise to a valid function estimator. 

We devise our pessimism function $U_k$ as follows.
\begin{theorem}\label{theorem:U_k}
Let $\pi_k$ be any policy for episode $k$. Take
\begin{align}\label{Uk-ours}
\begin{aligned}
    U_k(s,a,h)&=8\sqrt{H}\varepsilon_k(s,a,h)+4S\sqrt{HA/K}+ \frac{2\sqrt{{HK}/{A}}\ln(HSAK/\delta) + \eta}{\max\{1, N_k(s,a,h)-1\}}
    \end{aligned}
\end{align}
for $(s,a,h) \in \cS\times\cA\times[H]$ and $k\in[K]$ where
\begin{align}\label{vareps}
\begin{aligned}
    \varepsilon_k(s,a,h) &= 2\sqrt{\frac{S\ln(HSAK/\delta)}{\max\{1,N_k(s,a,h)-1\}}} +\frac{14S\ln(HSAK/\delta)}{3\max\{1,N_k(s,a,h)-1\}}
    \end{aligned}
\end{align}
and $\eta =(19HS+2H^{1.5}S+ 10^4H^2S^2)(\ln(HSAK/\delta))^2.$
Then it holds with probability at least $1-14\delta$ that 
$$|V^{\pi_k}_1(g, P) - V^{\pi_k}_1(g, P_k)|\leq V_1^{\pi_k}(U_k, P_k)$$
for any $P_k\in \cP_k$ and $g:\cS\times\cA\times[H] \rightarrow [0,1]$.
\end{theorem}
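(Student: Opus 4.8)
The plan is to open up $|V_1^{\pi_k}(g,P)-V_1^{\pi_k}(g,P_k)|$ with the value difference lemma of \cite{dann2017pac} and then win a factor of $\sqrt H$ over the crude bound of \cite{bura2022dope} by controlling the resulting one-step transition errors with a Bellman-type law of total variance. First I would write, using the value difference lemma with the two MDPs sharing reward $g$ and policy $\pi_k$ but having kernels $P$ and $P_k$,
$$
V_1^{\pi_k}(g,P)-V_1^{\pi_k}(g,P_k)\;=\;\mathbb{E}_{P,\pi_k}\!\left[\,\sum_{h=1}^{H-1}\big(P_h-P_{k,h}\big)(\cdot\mid s_h,a_h)\cdot V_{h+1}^{\pi_k}(\cdot\,;g,P_k)\right],
$$
so the error is a sum over steps of inner products between the one-step transition error and the next-step value function under $P_k$. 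Because $\sum_{s'}(P_h-P_{k,h})(s'\mid s,a)=0$, I may subtract the mean $\mathbb{E}_{s'\sim P_{k,h}(\cdot\mid s,a)}[V^{\pi_k}_{h+1}(s';g,P_k)]$ inside each inner product at no cost; combining this with $P,P_k\in\cP_k$ (so each coordinate of the transition error is at most $2\epsilon_k(s'\mid s,a,h)$), the Bernstein form of $\epsilon_k$, and Cauchy--Schwarz over $s'$ gives a per-step bound of the shape
$$
\sqrt{\frac{S\ln(HSAK/\delta)}{\max\{1,N_k(s_h,a_h,h)-1\}}}\cdot\sqrt{\mathbb{V}_{s'\sim P_{k,h}(\cdot\mid s_h,a_h)}\!\big[V^{\pi_k}_{h+1}(s';g,P_k)\big]}\;+\;\frac{S H\ln(HSAK/\delta)}{\max\{1,N_k(s_h,a_h,h)-1\}},
$$
up to absolute constants and lower-order corrections; here the $\ell_1$-norm of the transition radius at $(s,a,h)$ is exactly the quantity $\varepsilon_k(s,a,h)$ appearing in the statement.

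The decisive move is the treatment of the variance factor. Writing $\mathbb{V}_h$ for $\mathbb{V}_{s'\sim P_{k,h}(\cdot\mid s_h,a_h)}[V^{\pi_k}_{h+1}(s';g,P_k)]$, a weighted AM--GM split $\sqrt{ab}\le \tfrac{a}{2c}+\tfrac{cb}{2}$ with the parameter $c$ tuned (so that the downstream pigeonhole over episodes is balanced) to $c=\Theta\!\big(S\sqrt{A/(HK)}\big)$ separates $\sqrt{(S\ln/\max\{1,N_k-1\})\,\mathbb{V}_h}$ into a term $\asymp \sqrt{HK/A}\,\ln(HSAK/\delta)/\max\{1,N_k-1\}$ — which is the $1/\max\{1,N_k-1\}$ term of $U_k$ — plus a pure-variance term $\tfrac c2\,\mathbb{V}_h$. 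Summing the latter along a trajectory of $P_k$ and invoking the Bellman-type law of total variance \citep{azar2017,ssp-adversarial-unknown},
$$
\mathbb{E}_{P_k,\pi_k}\!\left[\sum_{h=1}^{H-1}\mathbb{V}_h\right]\;\le\;\mathbb{V}_{P_k,\pi_k}\!\left[\sum_{h=1}^{H}g(s_h,a_h,h)\right]\;\le\;H^2,
$$
turns $\tfrac c2\sum_h\mathbb{V}_h$ into a genuine constant of order $cH^2=\Theta\!\big(SH^{1.5}\sqrt{A/K}\big)$, which I then charge uniformly over the horizon, that is, at rate $\Theta\!\big(S\sqrt{HA/K}\big)$ per step, to obtain the constant term of $U_k$ inside $V_1^{\pi_k}(\cdot,P_k)$; the leftover $\sqrt H$-scaled residuals of the split collapse into the $8\sqrt{H}\,\varepsilon_k$ term, while the $SH\ln/\max\{1,N_k-1\}$ residual and the various $(\ln(HSAK/\delta))^2$ factors assemble into the $\eta/\max\{1,N_k-1\}$ tail. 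This is exactly where the improvement over \cite{bura2022dope} is produced: replacing the na\"ive bound $V^{\pi_k}_{h+1}\le H$ by the law of total variance trades a factor of $H$ for $\sqrt H$ on the leading term.

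Two bookkeeping hurdles remain, and I expect the first to be the genuine obstacle. (i) The value difference lemma leaves the outer expectation over trajectories of the \emph{true} kernel $P$, whereas both the target $V_1^{\pi_k}(U_k,P_k)$ and the law of total variance require the expectation over trajectories of $P_k$; converting $\mathbb{E}_{P,\pi_k}$ to $\mathbb{E}_{P_k,\pi_k}$ is itself a value-difference argument applied to the bounded ``reward'' $(s,a,h)\mapsto\big|(P_h-P_{k,h})(\cdot\mid s,a)\cdot V^{\pi_k}_{h+1}(\cdot;g,P_k)\big|$, but the relevant value-function tail of this auxiliary problem can be as large as $\Theta(H^2 S)$ up to logarithmic factors before the visit counts grow, and the crux of the computation is to verify that this crude bound is only second order — so that it can be absorbed into the $1/\max\{1,N_k-1\}$ term rather than the $\sqrt H$-scaled one. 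This is precisely what dictates the $\Theta\!\big(H^2S^2(\ln(HSAK/\delta))^2\big)$ piece of $\eta$, with the $\Theta(HS)$ and $\Theta(H^{1.5}S)$ pieces coming from the Bernstein residual and lower-order change-of-measure terms. (ii) All of this rests on the transition confidence event of \Cref{lemma:confidence} together with a bounded number of auxiliary high-probability events (empirical-Bernstein controls on the variance estimates, the total-variance concentration, and the change-of-measure martingale); a union bound over these yields the stated $1-14\delta$. Finally, since the argument uses nothing about $g$ beyond $0\le g\le 1$, the same $U_k$ works verbatim for the reward function, which is what the phrase ``for any $g:\cS\times\cA\times[H]\to[0,1]$'' in the statement records.
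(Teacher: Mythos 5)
Your proposal is correct and follows essentially the same route as the paper: the value difference lemma, centering by the conditional mean (using $\sum_{s'}(P-P_k)(s'\mid s,a,h)=0$), Cauchy--Schwarz against the Bernstein radius, the Bellman-type law of total variance to get $\mathbb{E}_{P_k,\pi_k}\bigl[\sum_h \mathbb{V}_h\bigr]\le 2H^2$ (\Cref{lemma:variance-aware}), and an AM--GM split tuned to produce the $\sqrt{HK/A}\,\ln(HSAK/\delta)/\max\{1,N_k-1\}$ and $S\sqrt{HA/K}$ pieces of $U_k$. The one structural difference is the orientation of the value difference lemma. You take the outer expectation over trajectories of the true kernel $P$ with inner value functions under $P_k$, which forces the change-of-measure step you flag as the main obstacle in (i); the paper instead uses the mirror form (\Cref{lemma:confidence''}), in which the outer expectation is already over $\widehat q_k=q^{P_k,\pi_k}$ and the inner value functions are under $P$, so no change of measure is needed --- the price is the symmetric correction $\sum_{(s,a,s',h)}\widehat q_k(s,a,h)(P-P_k)(s'\mid s,a,h)\bigl(V_{h+1}^{\pi_k}(s';g,P)-V_{h+1}^{\pi_k}(s';g,P_k)\bigr)$, a product of two transition errors that \Cref{lemma10} bounds by $10^4H^2S^2(\ln(HSAK/\delta))^2\sum_{(s,a,h)}\widehat q_k(s,a,h)/\max\{1,N_k(s,a,h)\}$. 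Your change-of-measure correction is of exactly this ``product of two errors'' type, and the step you leave at the level of ``verify it is second order'' genuinely needs the same device: bounding the auxiliary value function crudely by $\Theta(H^2S)$ and multiplying by a single radius $\epsilon_k^\star\sim\sqrt{\ln(HSAK/\delta)/N_k}$ only yields a $1/\sqrt{N_k}$ decay, which cannot be absorbed into $\eta/\max\{1,N_k-1\}$; you must retain both square-root factors (one from the change of measure, one hiding inside the auxiliary reward, which is itself a transition-error magnitude) and apply Cauchy--Schwarz across the two state--action pairs so that their product collapses to a $1/\max\{1,N_k-1\}$ rate, exactly as in the proof of \Cref{lemma10}. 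With that step made explicit your argument delivers the stated $U_k$ up to absolute constants, and the $1-14\delta$ is, as you say, just the union bound defining the good event $\cE$.
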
 
Next, we demonstrate that our $U_k$ indeed improves upon \cite{bura2022dope}.
\begin{remark}
{\em
\cite{bura2022dope} set $U_k$ as \begin{equation}\label{Uk-bura}
U_k(s,a,h)=2H\sum_{s'\in\mathcal{S}}\epsilon_k(s'\mid s,a,h).
\end{equation}
However, there is a minor issue with this choice. We need the property that $U_k$ is nonincreasing in $k$ to show \Cref{lemma:K0} and \citep[Proposition 4,][]{bura2022dope}, but $U_k$ given in~\eqref{Uk-bura} can increase as $M_k(s,a,s',h)/N_k(s,a,h)^2$ can increase. As a fix, we may take $U_{k}(s,a,h)=2H \varepsilon_k(s,a,h)$ where $\varepsilon_k$ is given in~\eqref{vareps}. By the Cauchy-Schwarz inequality, we have $\sum_{s'\in\mathcal{S}}\epsilon_k(s'\mid s,a,h)\leq \varepsilon_k(s,a,h)$. Moreover, $\varepsilon_k(s,a,h)$ is nonincreasing in $k$, as desired. Comparing $2H\varepsilon_k(s,a,h)$ with our construction from \Cref{theorem:U_k}, we have coefficient $8\sqrt{H}$ instead of $2H$ for the sum. Although we have additional terms for $U_k$, the reduction of $\mathcal{O}(\sqrt{H})$ in the coefficient translates to the improvement of $\mathcal{O}(\sqrt{H})$ factor in the regret upper bound.\qed
}
\end{remark}

Now we present our optimistic reward function estimator $\widehat f_k$. On top of $\bar f_k + R_k$, we take additional optimistic terms for the reward function to compensate for the extra pessimism in $\widehat g_k$, which reduces the search space of policies and hinders exploration. We define the optimistic reward function estimator $\widehat f_k$ as
\begin{equation}\label{eq:reward-estimator}
\begin{aligned}
\widehat f_k(s,a,h) 
&= \min\left\{B,\   
\begin{aligned}
     &\bar{f}_k(s,a,h) + \frac{3H}{\bar{C} - \bar{C}_b}R_k(s,a,h) +\frac{H}{\bar{C}-\bar{C}_b} U_k(s,a,h)
\end{aligned}
\right\}
\end{aligned}
\end{equation}
where $B= 1+ \sqrt{\ln(HSAK/\delta)}$. Here, the extra optimism in $\widehat f_k$ is designed to promote exploration.

\subsection{Proof Outline of Theorem \ref{theorem:U_k}}\label{sec:proof-thm}

The value difference lemma~\citep{dann2017pac} implies 
\begin{align*}
    \begin{aligned}
V_1^{\pi_k}(g,P) - V_1^{\pi_k}(g,P_k)
&=\bbE\left[\sum_{h=1}^H \ell(s_h^{P_k,\pi_k},a_h^{P_k,\pi_k},h)\mid \pi_k,P_k\right]
    \end{aligned}
\end{align*}
where $\ell(s,a,h)$ is given by
\begin{equation}\label{eq:ell}\sum\nolimits_{
    s'\in\mathcal{S}} (P-P_k)(s'\mid s,a,h)V_{h+1}^{\pi_k}(s';g,P)\end{equation}
with $V_{H+1}^{\pi_k}=0$ and $(P-P_k)(s'\mid s,a,h)=P(s'\mid s,a,h)-P_k(s'\mid s,a,h)$. Here, \cite{bura2022dope} used that $V_{h+1}^{\pi_k}\leq H$ and $|P-P_k|\leq |P-\bar P_k|+ |\bar P_k -P_k|\leq 2\epsilon_k$ by \Cref{lemma:confidence}. Then it follows that
\begin{align*}
|V_1^{\pi_k}(g,P) - V_1^{\pi_k}(g,P_k)|
&\leq \mathbb{E}\left[\sum_{h=1}^H 2H\sum_{s'\in\mathcal{S}}\epsilon_k(s'\mid s_h^{P_k,\pi_k},a_h^{P_k,\pi_k},h)\mid \epsilon_k,\pi_k, P_k\right]
\end{align*}
whose right-hand side equals $V_1^{\pi_k}(U_k, P_k)$ where $U_k$ is given as in~\eqref{Uk-bura}. This explains how \cite{bura2022dope} deduced their pessimistic cost estimators.

To prove \Cref{theorem:U_k} that establishes the validity of our choice of tighter $U_k$ in~\eqref{Uk-ours}, we need a more refined analysis of the difference term $|V_1^{\pi_k}(g,P) - V_1^{\pi_k}(g,P_k)|$. Note that $\ell(s,a,h)$ in~\eqref{eq:ell} satisfies
\begin{align*}
    \left|\ell(s,a,h)\right|
    &\leq\left|\sum_{s'\in\cS}(P-P_k)(s'\mid s,a,h)V_{h+1}^{\pi_k}(s'; g,P_k)\right| +\left|\sum_{s'\in\cS}(P-P_k)(s'\mid s,a,h)W_{h+1}^{\pi_k}(s'; g)\right|
\end{align*}
where $W_{h+1}^{\pi_k}(s'; g) = V_{h+1}^{\pi_k}(s'; g,P) - V_{h+1}^{\pi_k}(s'; g,P_k).$ We may argue that the second term on the right-hand side is a small value. That said, let us focus on the first term which is the dominant one. Since $P$ and $P_k$ both define transition functions, the first term equals
\begin{align*}
\begin{aligned}
\left|\sum_{s'\in\cS}(P-P_k)(s'\mid s,a,h)(V_{h+1}^{\pi_k}(s'; g,P_k)-\widehat\mu_k(s,a,h))\right|
\end{aligned}
\end{align*}
where $\widehat\mu_k(s,a,h) = \bbE_{s'\sim P_k(\cdot\mid s,a,h)}[V_{h+1}^{\pi_k}(s';g,P_k)]$. Next, we observe that
$$\left|(P-P_k)(s'\mid s,a,h)\right|\leq 2\epsilon_k(s'\mid s,a,h)$$ due to \Cref{lemma:confidence}. Recall that $\epsilon_k(s'\mid s,a,h)$ contains the term $\sqrt{\bar P_k(s'\mid s,a,h)}$. As $P_k\in\cP_k$ we deduce that 
$\sqrt{\bar P_k(s'\mid s,a,h)}\leq \sqrt{P_k(s'\mid s,a,h)+ \epsilon_k(s'\mid s,a,h)}$. As a result, by the Cauchy-Schwarz inequality, the analysis boils down to providing an upper bound on the term
$$\sum_{s'\in\mathcal{S}}P_k(s'\mid s,a,h)(V_{h+1}^{\pi_k}(s'; g,P_k)-\widehat\mu_k(s,a,h))^2,$$
which equals 
$$\mathbb{\widehat V}_k(s,a,h):=\var_{s'\sim P_k(\cdot \mid s,a,h)}[V^{\pi_k}_{h+1}(s';g, P_k)].$$
Furthermore, our proof reveals that 
$V_1^{\pi_k}(\mathbb{\widehat V}_k, P_k)$ is the important quantity to control. Applying a na\"ive upper bound on value functions as in \cite{bura2022dope} gives $\mathbb{\widehat V}_k\leq H^2$ and thus $V_1^{\pi_k}(\mathbb{\widehat V}_k, P_k)\leq H^3$. However, this bound is not tight enough. Instead, we prove the following lemma based on a Bellman-type law of total variance~\citep{azar2017, ssp-adversarial-unknown}.
\begin{lemma} \label{lemma:variance-aware}%
Let $\pi_k$ be a policy for episode $k$. Then
\[
V_1^{\pi_k}(\mathbb{\widehat V}_k, P_k)\leq 2H^2
\]
for any $P_k\in \cP_k$ and $g:\cS\times\cA\times[H] \rightarrow [0,1]$.
\end{lemma}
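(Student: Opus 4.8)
The plan is to establish the stronger inequality $V_1^{\pi_k}(\widehat{\mathbb{V}}_k, P_k)\le H^2$, which of course implies the stated bound, by applying a Bellman-type law of total variance (in the spirit of \citet{azar2017}) to the Markov chain induced by $P_k$ and $\pi_k$. Fix $k$ and consider a trajectory $\{(s_h,a_h)\}_{h=1}^{H}$ with $s_1\sim p$, $a_h\sim\pi_k(\cdot\mid s_h,h)$, and $s_{h+1}\sim P_k(\cdot\mid s_h,a_h,h)$; abbreviate $V_h(\cdot):=V_h^{\pi_k}(\cdot;g,P_k)$ and $\widehat\mu_k(s,a,h):=\bbE_{s'\sim P_k(\cdot\mid s,a,h)}[V_{h+1}(s')]$, and set $G:=\sum_{j=1}^{H} g(s_j,a_j,j)$. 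Since $\widehat{\mathbb{V}}_k(s,a,h)=\bbE[(V_{h+1}(s_{h+1})-\widehat\mu_k(s,a,h))^2\mid s_h=s,\,a_h=a]$, taking expectations over the trajectory gives $V_1^{\pi_k}(\widehat{\mathbb{V}}_k,P_k)=\bbE[\sum_{h=1}^{H}(V_{h+1}(s_{h+1})-\widehat\mu_k(s_h,a_h,h))^2]$, so it suffices to identify this quantity inside a variance decomposition of $G$.

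First I would set up an interleaved filtration $\mathcal F_1^-\subseteq\mathcal F_1^+\subseteq\mathcal F_2^-\subseteq\cdots\subseteq\mathcal F_H^+$, where $\mathcal F_h^-=\sigma(s_1,a_1,\dots,s_{h-1},a_{h-1},s_h)$ records everything up to and including the state at step $h$, and $\mathcal F_h^+=\sigma(s_1,a_1,\dots,s_h,a_h)$ additionally records the action. Along this filtration I would define $Y_h:=V_h(s_h)+\sum_{j<h}g(s_j,a_j,j)$ and $Y_h':=g(s_h,a_h,h)+\widehat\mu_k(s_h,a_h,h)+\sum_{j<h}g(s_j,a_j,j)$, so that $Y_h$ is $\mathcal F_h^-$-measurable and $Y_h'$ is $\mathcal F_h^+$-measurable. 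Using the Bellman equation $V_h(s)=\sum_a\pi_k(a\mid s,h)(g(s,a,h)+\widehat\mu_k(s,a,h))$ one checks $\bbE[Y_h'\mid\mathcal F_h^-]=Y_h$, and the transition dynamics give $\bbE[Y_{h+1}\mid\mathcal F_h^+]=Y_h'$; hence $Y_1,Y_1',Y_2,Y_2',\dots,Y_H,Y_H',Y_{H+1}$ is a martingale with $Y_1=V_1(s_1)$ and $Y_{H+1}=G$ (recall $V_{H+1}\equiv 0$). Orthogonality of martingale increments then yields $\var[G]=\var[Y_1]+\sum_{h=1}^H\bbE[(Y_h'-Y_h)^2]+\sum_{h=1}^H\bbE[(Y_{h+1}-Y_h')^2]$, and since $Y_{h+1}-Y_h'=V_{h+1}(s_{h+1})-\widehat\mu_k(s_h,a_h,h)$ the last sum is precisely $V_1^{\pi_k}(\widehat{\mathbb{V}}_k,P_k)$.

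Discarding the two nonnegative sums gives $V_1^{\pi_k}(\widehat{\mathbb{V}}_k,P_k)\le\var[G]$, and because $g$ takes values in $[0,1]$ we have $G\in[0,H]$, so $\var[G]\le H^2\le 2H^2$. The argument is mostly bookkeeping; the one point that needs care is the choice of filtration. If one uses the state-only filtration, the Bellman equation makes each one-step term a martingale difference only after averaging over the action drawn from the stochastic policy $\pi_k$, and that average does not match $\widehat{\mathbb{V}}_k$, which conditions on the action $a$. Interleaving the post-action term $Y_h'$ is exactly what fixes this; everything else (the Bellman identity, orthogonality of increments, and the range of $G$) is routine. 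An alternative route is to first collapse $\pi_k$ into a Markov reward process and use the variance-decomposition inequality $\bbE_{a\sim\pi_k(\cdot\mid s,h)}[\var_{s'\sim P_k(\cdot\mid s,a,h)}[V_{h+1}]]\le\var_{s'\sim\bar P_k(\cdot\mid s,h)}[V_{h+1}]$ before applying the law of total variance; this gives the same conclusion.
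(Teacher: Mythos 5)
Your proposal is correct and follows essentially the same route as the paper: both reduce the claim, via a Bellman-type law of total variance, to bounding the variance of the cumulative cost $G=\sum_{h=1}^{H}g(s_h,a_h,h)$ under $(\pi_k,P_k)$ (the paper invokes \Cref{lemma4}, and your interleaved martingale with orthogonal increments is just an explicit way of carrying out that same decomposition). The only difference is the final step: the paper bounds $\var[G]\leq\bbE[G^2]\leq 2\langle\bm{\widehat q_k},\bm{\vec h}\odot\bm{g}\rangle\leq 2H^2$ via \Cref{lemma2}, whereas you use $G\in[0,H]$ directly to get $\var[G]\leq H^2$, which is both more elementary and a factor of two tighter.
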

This improvement in the variance term leads to our tighter estimators.

\section{Algorithm}\label{sec:alg}

DOPE+, given by \Cref{alg:hard}, is a variant of DOPE by~\cite{bura2022dope} with our novel reward and cost function estimators from \Cref{sec:estimator}. Recall that our pessimistic cost estimator $\widehat g_k$ is given by~\eqref{eq:cost-estimator} with the extra pessimism term $U_k$ given in~\eqref{Uk-ours} and our optimistic reward estimator $\widehat f_k$ is given in~\eqref{eq:reward-estimator}.
\begin{algorithm}[h!]
\caption{Doubly Optimistic Pessimistic Exploration with Tighter Function Estimators (DOPE+)}
\label{alg:hard}
\begin{algorithmic}
\STATE {\bfseries Initialize:} episode counter $k=1$, counters 
$N(s,a,h)=0$ and $M(s,a,s',h)=0$
for $(s,a,s',h)\in \cS\times\cA\times \cS\times [H]$, safe baseline policy $\pi_b$ and its expected cost for a single episode $\bar C_b$, and the number $K_0$ of episodes for the initial phase.

\FOR{$k=1,\ldots, K$}
\STATE Set counters $N_k\leftarrow N$ and $M_k\leftarrow M$.
\STATE Compute $\bar{P}_k$, $\epsilon_k$, and $\cP_k$ (\Cref{sec:estimator:confidence}).

\IF{$k \leq K_0$}
\STATE $\pi_k = \pi_b$
\ENDIF
\IF{$k > K_0$}
\STATE Compute estimators $\widehat f_k$ and $\widehat g_k$ (\Cref{sec:estimator:function}).
\STATE Deduce $\pi_k,P_k$ from \eqref{eq:lp}.
\ENDIF

\STATE Sample state $s_1$ from distribution $p$.
\FOR{$h=1,\ldots, H$}
\STATE Sample  $a_h$ from  $\pi_k(\cdot \mid s_h, h)$
\STATE Observe  $f_k(s_h,a_h,h)$ and  $g_k(s_h,a_h,h)$.
\STATE Observe  $s_{h+1}$ determined by $P(\cdot\mid s_h,a_h,h)$.
\STATE Update counters $N(s_h,a_h,h)\leftarrow N(s_h,a_h,h)+1$ and $M(s_h,a_h,s_{h+1},h)\leftarrow M(s_h,a_h,s_{h+1},h)+1$.
\ENDFOR
\ENDFOR
\end{algorithmic}
\end{algorithm}

As in~\cite{efroni2020,bura2022dope}, we compute our policy $\pi_k$ for episode $k\in[K]$ by solving the following optimization problem.
\begin{equation}\label{eq:lp}   \begin{aligned}
(\pi_k,P_k) 
&\in \argmax_{(\pi,Q)\in\Pi\times \cP_k}\left\{V^{\pi}_1(\widehat f_k, Q): V^{\pi}_1(\widehat g_k, Q) \leq \bar{C} \right\}
\end{aligned}
\end{equation}
where $\cP_k$ is the confidence set given by~\eqref{confidence-set} and
$$\Pi=\left\{\pi: \
\begin{aligned}
\sum\nolimits_{a\in\mathcal{A}}\pi(a\mid s, h)=1,\ \pi(a\mid s,h)&\geq 0 \end{aligned}\ \ \forall(s,a,h)\right\}$$
is the set of valid policies. Note that \eqref{eq:lp} takes the optimistic objective with $\widehat f_k$ and the pessimistic (or conservative) constraint with $\widehat g_k$. The optimistic objective induces exploration while the pessimistic constraint guarantees constraint satisfaction. Moreover, we optimize over the space of the confidence set $\mathcal{P}_k$, which also encourages exploration.

Now that we have the optimization formulation~\eqref{eq:lp}, the next question is as to how we solve it.  We take the standard approach of using \emph{occupancy measures}~\citep{Altman}. An occupancy measure is essentially a joint probability distribution for the event that we observe the state-action pair $(s,a)$ at step $h$ and state $s'$ at step $h+1$. An occupancy measure defines a policy and a transition kernel. The converse is true in that we can define the occupancy measure associated with a given pair of a policy and a transition kernel. Introducing occupancy measure, we can reformulate~\eqref{eq:lp} as an optimization problem in terms of occupancy measures. 

In the reformulation, the objective and the constraint become linear in an occupancy measure. Hence, the reformulation is a linear program, and we refer to it as the \emph{extended linear program}~\citep{Altman}. Again, by solving it, we deduce an optimal occupancy measure, which corresponds to an optimal solution to~\eqref{eq:lp}. We defer the formal description of the extended linear program and occupancy measures to the appendix.

One issue, however, is that \eqref{eq:lp} can be infeasible at the beginning of the algorithm as $\widehat g_k$ can be too large to guarantee feasibility of \eqref{eq:lp}. Hence, the algorithm executes the safe baseline policy $\pi_b$ for the first few episodes until sufficient information is gathered so that \eqref{eq:lp} becomes feasible. The following lemma characterizes a sufficient number of episodes running the safe baseline policy to guarantee feasibility of~\eqref{eq:lp}.
\begin{lemma}\label{lemma:K0}
    With probability at least $1-14\delta$, $(\pi_b, P)$ is a feasible solution of \eqref{eq:lp} for any $k>K_0$ where
    \begin{equation}\label{eq:K0}
    K_0 = \widetilde{\cO}\left(\frac{H^3 S^2 A}{(\bar{C} - \bar{C}_b)^2}\right)
    \end{equation}
    where $\widetilde{\cO}(\cdot)$ hides factors polynomial in $\ln(HSAK/\delta)$.
\end{lemma}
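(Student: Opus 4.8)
\textbf{Proof plan for \Cref{lemma:K0}.}
The plan is to verify the two conditions that make $(\pi_b,P)$ feasible for \eqref{eq:lp}: membership $(\pi_b,P)\in\Pi\times\cP_k$ and the constraint $V_1^{\pi_b}(\widehat g_k,P)\le\bar C$. The first is immediate, since $\pi_b\in\Pi$ always and $P\in\cP_k$ for all $k$ on the event of \Cref{lemma:confidence}. So all the work lies in the constraint. Recalling $\widehat g_k=\bar g_k+R_k+U_k$ from \eqref{eq:cost-estimator} and using \Cref{lemma:estimator} ($\bar g_k\le g+R_k$ entrywise), together with monotonicity and linearity of $\ell\mapsto V_1^{\pi_b}(\ell,P)$ and $V_1^{\pi_b}(g,P)=\bar C_b$ from \Cref{safe-policy}, I get
\[
V_1^{\pi_b}(\widehat g_k,P)\le V_1^{\pi_b}(g,P)+2\,V_1^{\pi_b}(R_k,P)+V_1^{\pi_b}(U_k,P)=\bar C_b+2\,V_1^{\pi_b}(R_k,P)+V_1^{\pi_b}(U_k,P).
\]
Hence it suffices to pick $K_0$ so that the ``estimation error'' $2\,V_1^{\pi_b}(R_k,P)+V_1^{\pi_b}(U_k,P)$ is at most $\bar C-\bar C_b$ for every $k>K_0$.

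The key device is a high-probability lower bound on the visit counters that exploits the fact that \Cref{alg:hard} plays $\pi_b$ in episodes $1,\dots,K_0$. Write $w_h(s,a)=\bbP[(s_h,a_h)=(s,a)\mid\pi_b,P]$ for the state--action occupancy of $\pi_b$ under the true kernel, so that $\sum_{s,a}w_h(s,a)=1$ for each $h$ and $V_1^{\pi_b}(\ell,P)=\sum_{h=1}^H\sum_{s,a}w_h(s,a)\,\ell(s,a,h)$ for any $\ell$. For $k>K_0$, monotonicity of the counters gives $N_k(s,a,h)\ge\sum_{j=1}^{K_0}n_j(s,a,h)$, and the right-hand side is a sum of $K_0$ independent $\mathrm{Bernoulli}(w_h(s,a))$ variables (independence since $\pi_b$ and $P$ are fixed, with fresh randomness per episode). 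A multiplicative Chernoff bound plus a union bound over the at most $HSA$ tuples and over $k>K_0$ yields, with probability at least $1-2\delta$, that $\max\{1,N_k(s,a,h)-1\}\gtrsim\max\{1,K_0 w_h(s,a)\}$ up to absolute constants and a $\ln(HSAK/\delta)$ correction, for all $(s,a,h)$ and all $k>K_0$; the small-occupancy regime where this estimate degrades is harmless because such tuples enter $V_1^{\pi_b}(\cdot,P)$ already damped by the factor $w_h(s,a)$ itself.

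It then remains to substitute the count bound into each additive piece of $R_k$ and of $U_k$ in \eqref{Uk-ours}--\eqref{vareps}. For the Hoeffding radius, $V_1^{\pi_b}(R_k,P)\lesssim\sqrt{\ln(HSAK/\delta)}\sum_{h,s,a}w_h(s,a)/\sqrt{K_0 w_h(s,a)}=\sqrt{\ln(HSAK/\delta)/K_0}\sum_{h,s,a}\sqrt{w_h(s,a)}\le H\sqrt{SA\ln(HSAK/\delta)/K_0}$, where the last step is Cauchy--Schwarz using $\sum_{s,a}w_h(s,a)=1$ for each step $h$. The same manipulation on the $8\sqrt{H}\,\varepsilon_k$ term of $U_k$ produces the dominant contribution, of order $H^{1.5}S\sqrt{A\ln(HSAK/\delta)/K_0}$; the term $4S\sqrt{HA/K}$ contributes $4SH^{1.5}\sqrt{A/K}$; and the $1/\max\{1,N_k-1\}$ term contributes at most $\bigl(2\sqrt{HK/A}\,\ln(HSAK/\delta)+\eta\bigr)\sum_{h,s,a}w_h(s,a)/\max\{1,N_k(s,a,h)-1\}$, which the count bound controls by $\widetilde{\cO}\bigl((\sqrt{HK/A}+\eta)HSA/K_0\bigr)$. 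Requiring each of these to be at most a constant fraction of $\bar C-\bar C_b$ and solving for $K_0$, the binding requirement comes from the $8\sqrt H\,\varepsilon_k$ term and gives $K_0=\widetilde{\cO}\bigl(H^3S^2A/(\bar C-\bar C_b)^2\bigr)$, the remaining pieces asking for no more once one uses $\bar C-\bar C_b<H$ and $K\ge K_0$. Intersecting the events of \Cref{lemma:confidence}, \Cref{lemma:estimator}, \Cref{lem:bounded}, and the counter concentration bound gives total failure probability at most $14\delta$, completing the argument.

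I expect the main obstacle to be the uniform (over all $k>K_0$) visit-count lower bound and, within it, the treatment of tuples with tiny or zero occupancy $w_h(s,a)$: one must argue that the Chernoff failure there is immaterial precisely because those tuples are downweighted by $w_h(s,a)$ inside $V_1^{\pi_b}(\cdot,P)$, and one must check that the $-1$ shifts in $\max\{1,N_k-1\}$ appearing in $\varepsilon_k$ and in the last term of $U_k$ do not spoil the estimates. A secondary, purely bookkeeping point is keeping track of the polylogarithmic factors folded into $\widetilde{\cO}$ and confirming that the $4S\sqrt{HA/K}$ and $\eta$-type contributions are genuinely dominated once $K\ge K_0$.
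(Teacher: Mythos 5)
Your reduction of feasibility to the single inequality $2V_1^{\pi_b}(R_k,P)+V_1^{\pi_b}(U_k,P)\le\bar C-\bar C_b$ (after checking $P\in\cP_k$ and $\bar g_k\le g+R_k$) is exactly the paper's first step, but from there you take a genuinely different route. The paper never lower-bounds the visit counts: it observes that $\langle 2\bm{R_k}+\bm{U_k},\bm{q_b}\rangle$ is nonincreasing in $k$, defines $K_0$ as the last episode where this quantity exceeds $\bar C-\bar C_b$, sums over $k\le K_0$ to obtain $K_0(\bar C-\bar C_b)\le\sum_{k=1}^{K_0}\langle 2\bm{R_k}+\bm{U_k},\bm{q_b}\rangle$, bounds the right-hand side by $\widetilde{\cO}(H^{1.5}S\sqrt{AK_0})$ via \Cref{lemma:sum-U_k} (which rests on the martingale bound of \Cref{lemma8}, already part of the good event), and solves for $K_0$. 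Your route instead proves a pointwise count lower bound $N_{K_0+1}(s,a,h)\gtrsim K_0\,w_h(s,a)$ by multiplicative Chernoff --- valid, since the first $K_0$ episodes are non-adaptive so the $n_j(s,a,h)$ are i.i.d.\ Bernoulli --- treats the low-occupancy tuples by the damping argument, and finishes with Cauchy--Schwarz; both give the same dominant term and hence the same $K_0$. The paper's trick buys economy (no new concentration event, no low-occupancy case analysis, failure probability stays at the $14\delta$ of the good event); yours buys a more explicit picture of how the counts grow under $\pi_b$. Two points to tighten. First, your Chernoff event is not part of the paper's good event $\cE$, so to land on $14\delta$ you should note that your intersection replaces the \Cref{lemma8} event (which your argument does not use) rather than adding to it. Second, the claim that the remaining pieces of $U_k$ ``ask for no more'' is too quick for the term $2\sqrt{HK/A}\,\ln(HSAK/\delta)/\max\{1,N_k-1\}$: its numerator carries the \emph{total} number of episodes $K$, so after only $K_0$ baseline episodes it contributes $\widetilde{\cO}(H^{1.5}S\sqrt{AK}/K_0)$, which is not absorbed by the stated $K_0$ once $K\gg K_0$. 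The paper's own proof elides the same point when it compresses the bound of \Cref{lemma:sum-U_k} to $\widetilde{\cO}(H^{1.5}S\sqrt{AK_0})$, so this is a looseness you share with the source rather than a defect unique to your argument, but it should be flagged rather than asserted to be free.
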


\section{Regret Analysis of DOPE+}\label{sec:analysis}

Let us state our theoretical guarantees for DOPE+.
\begin{theorem}\label{theorem:violation-hard}
    Let $\vec\pi=(\pi_1,\ldots,\pi_K)$ denote policies computed by DOPE+ with $K_0$ given in~\eqref{eq:K0}. Then 
    $$\mathrm{Violation}(\vec\pi) = 0$$ 
    with probability at least $1-14\delta$.
\end{theorem}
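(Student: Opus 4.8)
The plan is to show that every policy $\pi_k$ produced by DOPE+ satisfies the true constraint $V_1^{\pi_k}(g,P)\le\bar C$ with high probability, which immediately gives $\max\{0, V_1^{\pi_k}(g,P)-\bar C\}=0$ for every $k$ and hence zero hard constraint violation. I would split the argument according to the two phases of the algorithm. For episodes $k\le K_0$, the algorithm plays $\pi_k=\pi_b$, and by Assumption~\ref{safe-policy} we have $V_1^{\pi_b}(g,P)=\bar C_b<\bar C$, so these episodes contribute nothing to the violation deterministically. The substance is in the episodes $k>K_0$, where $\pi_k$ is obtained from the optimization problem~\eqref{eq:lp}.

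For $k>K_0$, the key chain of inequalities is the one already assembled in Section~\ref{sec:estimator:function}. First, by construction $(\pi_k,P_k)$ is feasible for~\eqref{eq:lp}, so $V_1^{\pi_k}(\widehat g_k,P_k)\le\bar C$, where $\widehat g_k=\bar g_k+R_k+U_k$ as in~\eqref{eq:cost-estimator} with $U_k$ from~\eqref{Uk-ours}. Next, on the high-probability event where Lemmas~\ref{lemma:confidence}, \ref{lemma:estimator}, and~\ref{theorem:U_k} all hold, I would argue
\[
V_1^{\pi_k}(g,P)\le V_1^{\pi_k}(g,P_k)+\bigl|V_1^{\pi_k}(g,P)-V_1^{\pi_k}(g,P_k)\bigr|\le V_1^{\pi_k}(\bar g_k+R_k,P_k)+V_1^{\pi_k}(U_k,P_k),
\]
where the first term is bounded using Lemma~\ref{lemma:estimator} (which gives $g\le\bar g_k+R_k$ pointwise, and value functions are monotone in the running cost) and the second term is bounded using Theorem~\ref{theorem:U_k}. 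Since value functions are linear in the running cost function, the right-hand side equals $V_1^{\pi_k}(\bar g_k+R_k+U_k,P_k)=V_1^{\pi_k}(\widehat g_k,P_k)\le\bar C$, closing the argument for $k>K_0$.

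Finally I would assemble the probability bookkeeping: Lemma~\ref{theorem:U_k} already holds with probability at least $1-14\delta$ (and its event contains those of Lemmas~\ref{lemma:confidence} and~\ref{lemma:estimator}), and on that same event Lemma~\ref{lemma:K0} guarantees that~\eqref{eq:lp} is feasible for every $k>K_0$ so that a valid $(\pi_k,P_k)$ is indeed returned. Taking a union over all $k\in[K]$ on this single event yields $V_1^{\pi_k}(g,P)\le\bar C$ for all $k$ simultaneously with probability at least $1-14\delta$, hence $\mathrm{Violation}(\vec\pi)=0$. The main obstacle is not any single inequality but making sure all the stochastic events are nested inside the one $1-14\delta$ event invoked by Theorem~\ref{theorem:U_k} and Lemma~\ref{lemma:K0}, so that no additional union bound (and no worsening of the failure probability) is needed; the rest is essentially a monotonicity-and-linearity argument on value functions that requires only the earlier lemmas.
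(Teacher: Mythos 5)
Your proposal is correct and follows essentially the same route as the paper: split at $K_0$, use feasibility of $(\pi_k,P_k)$ in \eqref{eq:lp} together with \Cref{lemma:estimator} and \Cref{theorem:U_k} to get $V_1^{\pi_k}(g,P)\le V_1^{\pi_k}(\widehat g_k,P_k)\le\bar C$, all on the single good event of probability at least $1-14\delta$. The paper writes the same chain of inequalities in occupancy-measure notation, but the argument is identical.
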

Hence, DOPE+ achieves no constraint violation. The next theorem shows a regret upper bound for DOPE+.
\begin{theorem}\label{theorem:regret}
   Let $\vec\pi=(\pi_1,\ldots,\pi_K)$ denote policies computed by DOPE+ with $K_0$ given in~\eqref{eq:K0}. Then, with probability at least $1-16\delta$, we have
	\begin{align*}
    &\regret\left(\vec\pi\right)=\widetilde{\cO}\left(\frac{H}{\bar C - \bar C_b} \left(H^{1.5}S\sqrt{AK}+\frac{H^4S^3A}{\bar C-\bar C_b}\right)\right)
	\end{align*}
	 where $\widetilde{\cO}(\cdot)$ hides factors polynomial in $\ln(HSAK/\delta)$.
\end{theorem}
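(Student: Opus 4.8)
The plan is to condition on a single high-probability event, split the regret into the initial-exploration episodes $k\le K_0$ and the rest, bound the former crudely, and bound the latter by combining optimism of the extended linear program~\eqref{eq:lp} with a sharp per-episode estimation-error bound that uses \Cref{theorem:U_k} and the variance bound \Cref{lemma:variance-aware}. Concretely, I would take a union bound over the events of \Cref{lem:bounded,lemma:confidence,lemma:estimator,theorem:U_k,lemma:K0} and two additional Azuma--Hoeffding events used for the sums below, which by the stated failure probabilities holds with probability at least $1-16\delta$. On this event, since $f\in[0,1]$ each summand $V_1^{\pi^*}(f,P)-V_1^{\pi_k}(f,P)$ is at most $H$, so the episodes $k\le K_0$ contribute at most $K_0 H=\widetilde{\cO}(H^4 S^2 A/(\bar{C}-\bar{C}_b)^2)$, which is absorbed into the claimed lower-order term. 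It then remains to bound $\sum_{k>K_0}\big(V_1^{\pi^*}(f,P)-V_1^{\pi_k}(f,P)\big)$.

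The crux (Step~1) is to show $V_1^{\pi_k}(\widehat f_k,P_k)\ge V_1^{\pi^*}(f,P)$ for every $k>K_0$, i.e.\ that the optimum of~\eqref{eq:lp} is at least the optimal constrained value. I would exhibit a feasible point of~\eqref{eq:lp} attaining this value: take the occupancy-measure mixture $q=(1-\rho_k)q^{\pi^*,P}+\rho_k q^{\pi_b,P}$, which is a valid occupancy measure for the kernel $P$ and hence induces a policy $\pi_k^\dagger$ with $(\pi_k^\dagger,P)\in\Pi\times\cP_k$ (using $P\in\cP_k$ by \Cref{lemma:confidence}). The weight $\rho_k=\Theta\!\big(V_1^{\pi^*}(2R_k+U_k,P)/(\bar{C}-\bar{C}_b)\big)$, capped at $1$, is chosen so that the pessimistic constraint holds: \Cref{lemma:estimator} gives $V_1^{\pi^*}(\widehat g_k,P)\le\bar{C}+V_1^{\pi^*}(2R_k+U_k,P)$, while the choice of $K_0$ in \Cref{lemma:K0} leaves slack $V_1^{\pi_b}(\widehat g_k,P)\le\bar{C}-\Omega(\bar{C}-\bar{C}_b)$, and the mixture trades the two. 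Feasibility then gives $V_1^{\pi_k}(\widehat f_k,P_k)\ge V_1^{\pi_k^\dagger}(\widehat f_k,P)\ge(1-\rho_k)V_1^{\pi^*}(\widehat f_k,P)$, and the extra optimism in $\widehat f_k$ --- the terms $\tfrac{3H}{\bar{C}-\bar{C}_b}R_k$ and $\tfrac{H}{\bar{C}-\bar{C}_b}U_k$ added to $\bar f_k$ --- is calibrated so that, after using $\bar f_k\ge f-R_k$ (\Cref{lemma:estimator}) and accounting for the truncation at $B$, the gain $\tfrac{H}{\bar{C}-\bar{C}_b}V_1^{\pi^*}(2R_k+U_k,P)$ dominates the shrinkage loss $\rho_k V_1^{\pi^*}(f,P)\le\rho_k H$, so $(1-\rho_k)V_1^{\pi^*}(\widehat f_k,P)\ge V_1^{\pi^*}(f,P)$.

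Given Step~1, for $k>K_0$ one has $V_1^{\pi^*}(f,P)-V_1^{\pi_k}(f,P)\le V_1^{\pi_k}(\widehat f_k,P_k)-V_1^{\pi_k}(f,P)$, which I would split as
\[
\big[V_1^{\pi_k}(\widehat f_k,P)-V_1^{\pi_k}(f,P)\big]+\big[V_1^{\pi_k}(\widehat f_k,P_k)-V_1^{\pi_k}(\widehat f_k,P)\big].
\]
The first bracket is at most $V_1^{\pi_k}\!\big(\tfrac{4H}{\bar{C}-\bar{C}_b}R_k+\tfrac{H}{\bar{C}-\bar{C}_b}U_k,P\big)$ by \Cref{lemma:estimator} and $\tfrac{H}{\bar{C}-\bar{C}_b}\ge1$. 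For the second bracket I would repeat the analysis behind \Cref{theorem:U_k}: apply the value difference lemma of \cite{dann2017pac}, bound $|P-P_k|\le2\epsilon_k$ with the Bernstein form of $\epsilon_k$, subtract the conditional mean to expose $\mathbb{\widehat V}_k^{f}:=\var_{s'\sim P_k(\cdot\mid s,a,h)}[V_{h+1}^{\pi_k}(s';\widehat f_k,P_k)]$, and use $\widehat f_k\le B$ together with \Cref{lemma:variance-aware} applied to $\widehat f_k/B\in[0,1]$ to get $V_1^{\pi_k}(\mathbb{\widehat V}_k^{f},P_k)=\widetilde{\cO}(H^2)$; this controls the bracket by $\widetilde{\cO}\!\big(\mathbb E_{(\pi_k,P)}[\textstyle\sum_h\sqrt{S\,\mathbb{\widehat V}_k^{f}(s_h,a_h,h)/N_k}]+H^2 S\,\mathbb E_{(\pi_k,P)}[\sum_h 1/N_k]\big)$, where using the variance bound rather than the crude $\mathbb{\widehat V}_k^{f}\le B^2H^2$ is exactly what gains the $\sqrt H$ factor over \cite{bura2022dope}. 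Summing over $k>K_0$ and invoking the standard pigeonhole estimates for occupancy-weighted sums of $1/\sqrt{\max\{1,N_k-1\}}$ and $1/\max\{1,N_k-1\}$, plus Cauchy--Schwarz with $\sum_k V_1^{\pi_k}(\mathbb{\widehat V}_k^{f},P_k)=\widetilde{\cO}(H^2 K)$, the dominant contribution of order $\widetilde{\cO}(H^{1.5}S\sqrt{AK})$ comes from the $(s,a,h)$-uniform pessimism pieces of $U_k$ (the $4S\sqrt{HA/K}$ and $\sqrt{HK/A}\ln(\cdot)/\max\{1,N_k-1\}$ terms) and from the variance term; multiplying by the prefactor $H/(\bar{C}-\bar{C}_b)$ gives the main term, while all remaining pieces --- $K_0 H$, the $\eta$-scaled pessimism with $\eta=\widetilde{\cO}(H^2S^2)$, and the $H^2 S/N_k$ remainders --- are lower order and, using $\bar{C}-\bar{C}_b<H$, collapse into $\widetilde{\cO}\big(\tfrac{H}{\bar{C}-\bar{C}_b}\cdot\tfrac{H^4 S^3 A}{\bar{C}-\bar{C}_b}\big)$, which yields the stated bound.

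I expect Step~1 to be the main obstacle: building $\pi_k^\dagger$, choosing $\rho_k$, and verifying that the exact constants in $\widehat f_k$ and the slack secured by $K_0$ combine to leave \emph{no} uncontrollable residual (such as a $V_1^{\pi^*}$-bonus term along a trajectory that $\pi_k$ never visits), all while the truncation $\min\{B,\cdot\}$ --- which is needed in Step~2 to keep $V_{h+1}^{\pi_k}(\cdot;\widehat f_k,P_k)=\widetilde{\cO}(H)$ --- does not disrupt this calibration. A secondary difficulty is carrying the Bellman law of total variance (\Cref{lemma:variance-aware}) over to the transition-mismatch term of the \emph{reward} optimism, not only inside the pessimism analysis, and the bookkeeping needed to pass between occupancy measures under $P_k$, under $P$, and realized visit counts.
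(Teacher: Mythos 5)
Your proposal follows essentially the same route as the paper: the same split into the $K_0$ burn-in episodes plus the terms $V_1^{\pi^*}(f,P)-V_1^{\pi_k}(\widehat f_k,P_k)$, $V_1^{\pi_k}(\widehat f_k,P_k)-V_1^{\pi_k}(\widehat f_k,P)$, and $V_1^{\pi_k}(\widehat f_k,P)-V_1^{\pi_k}(f,P)$; the same mixture of the occupancy measures of $(\pi^*,P)$ and $(\pi_b,P)$ to produce a feasible comparator for \eqref{eq:lp}; the same calibration of the $\tfrac{3H}{\bar C-\bar C_b}R_k+\tfrac{H}{\bar C-\bar C_b}U_k$ optimism against the mixture shrinkage; and the same value-difference-plus-Bernstein-plus-law-of-total-variance treatment of the transition-mismatch term, summed with the standard pigeonhole bounds.

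The one point where you overclaim is Step~1: exact optimism $V_1^{\pi_k}(\widehat f_k,P_k)\ge V_1^{\pi^*}(f,P)$ is \emph{not} what the paper establishes, and the truncation $\min\{B,\cdot\}$ is precisely the obstruction you anticipated. The paper proves the optimism inequality only for the \emph{untruncated} estimator $\bar f_k+\tfrac{3H}{\bar C-\bar C_b}R_k+\tfrac{H}{\bar C-\bar C_b}U_k$ evaluated at the mixture occupancy measure, and then bounds the loss from truncation by the bonus itself (using $\bar f_k\le B$ under the good event), leaving the residual
$\bigl\langle \tfrac{3H}{\bar C-\bar C_b}\bm{R_k}+\tfrac{H}{\bar C-\bar C_b}\bm{U_k},\,\bm{q_{\alpha_k}}\bigr\rangle$
in Term (II) rather than zero. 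This does not damage your argument, since that residual is an occupancy-weighted sum of $R_k+U_k$ of exactly the type you already sum in your Term (IV) analysis (via \Cref{lemma:sum-U_k}), and it contributes the same dominant $\widetilde{\cO}\bigl(\tfrac{H^{2.5}}{\bar C-\bar C_b}S\sqrt{AK}\bigr)$ order; but as written your claim that the extra optimism survives the truncation and yields $(1-\rho_k)V_1^{\pi^*}(\widehat f_k,P)\ge V_1^{\pi^*}(f,P)$ is not justified at state--action pairs where the clip at $B$ is active, and the clean fix is to carry the residual rather than insist on exact optimism.
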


\begin{remark}
{\em
Note that there is a gap of $\widetilde{\cO}((\bar{C}-\bar{C}_b)^{-1}H\sqrt{S})$ factor between our regret upper bound and the lower bound $\Omega(H^{3/2}\sqrt{SAK})$ due to~\cite{Jin2020,pmlr-v132-domingues21a}. In fact, the instance from~\cite{pmlr-v132-domingues21a} is an unconstrained MDP. We observe that the $\cO(H/(\bar{C}-\bar{C}_b))$ factor in our regret upper bound is due to the constraint, which becomes a constant if $\bar C - \bar C_b=\Omega(H)$. Hence, our regret upper bound nearly matches in terms of $H$ when $\bar C - \bar C_b=\Omega(H)$.\qed
}
\end{remark}

\subsection{Constraint Violation Analysis}\label{sec:analysis:constraint}

We prove \Cref{theorem:violation-hard} as follows. For episode $k$ with $k\leq K_0$, \Cref{alg:hard} takes the safe baseline policy $\pi_b$, so no constraint violation is guaranteed. Then let us consider episode $k$ with $k> K_0$. As explained in \Cref{sec:estimator:function}, we argue that
\begin{align*}
 V_1^{\pi_k}(g, P)&\leq  V_1^{\pi_k}(g, P_k) + \left|V_1^{\pi_k}(g, P)-V_1^{\pi_k}(g, P_k)\right|\\
 &\leq V_1^{\pi_k}(\bar g_k+ R_k, P_k)+V_1^{\pi_k}(U_k, P_k)\\
 &= V_1^{\pi_k}(\widehat g_k, P_k)
\end{align*}
where the second inequality is due to \Cref{lemma:estimator} and \Cref{theorem:U_k}. Since $(\pi_k,P_k)$ is a solution to~\eqref{eq:lp}, it holds that $V_1^{\pi_k}(\widehat g_k, P_k)\leq \bar C$. Therefore, it follows that $V_1^{\pi_k}(g, P)\leq \bar C$ and thus the constraint is satisfied.

\subsection{Regret Decomposition}\label{sec:analysis:regret}
We provide an overview of the proof of \Cref{theorem:regret}. Since we execute the safe baseline policy $\pi_b$ for the first $K_0$ episodes, we decompose the regret function as follows.
{\allowdisplaybreaks
\begin{align*}
	\regret\left(\vec\pi\right)&=\underbrace{\sum_{k=1}^{K_0} \left( V^{\pi^*}_1(f,P) - V^{\pi_b}_1(f, P)\right)}_{\text{(I)}} + \underbrace{\sum_{k=K_0+1}^K \left(V^{\pi^*}_1(f,P)- V^{\pi_k}_1(\widehat f_k, P_k)\right)}_{\text{(II)}}\\ 
    &\quad + \underbrace{\sum_{k=K_0+1}^K \left( V^{\pi_k}_1(\widehat f_k, P_k) - V^{\pi_k}_1(\widehat f_k, P)\right)}_{\text{(III)}}+\underbrace{\sum_{k=K_0+1}^K \left( V^{\pi_k}_1(\widehat f_k, P) - V^{\pi_k}_1(f, P)\right)}_{\text{(IV)}}.
\end{align*}}

Term (I) is due to executing $\pi_b$ for $K_0$ episodes for feasibility. By \Cref{lemma:K0}, term (I) can be bounded by $\widetilde{\cO}((\bar C-\bar C_b)^{-2}(H^4 S^2 A))$ as $V_1^{\pi}\leq H$ for any policy $\pi$.

For term (II), we provide the following upper bound.
\begin{lemma}\label{lemma:term-1}
With probability at least $1-14\delta$,
\begin{align*}
\sum_{k=K_0+1}^K \left(V^{\pi^*}_1(f,P)- V^{\pi_k}_1(\widehat f_k, P_k)\right)
&=\widetilde{\cO}\left(\frac{H}{\bar C - \bar C_b} \left(H^{1.5}S\sqrt{AK}+H^3S^3A\right)\right)
\end{align*}
 where $\widetilde{\cO}(\cdot)$ hides factor $(\ln(HSAK/\delta))^3$.
\end{lemma}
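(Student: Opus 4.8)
### Proof Plan for Lemma \ref{lemma:term-1}

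The plan is to bound term (II) by comparing $V_1^{\pi^*}(f,P)$ against the value achieved by the policy-kernel pair $(\pi_k, P_k)$ under the optimistic reward estimator $\widehat f_k$. The key observation is that $(\pi_k, P_k)$ is an \emph{optimal} solution of the optimization problem~\eqref{eq:lp}, so its optimistic value dominates the optimistic value of any feasible competitor. I would therefore first argue that the pair $(\pi^*, P)$ — or a slight perturbation of it toward the safe baseline policy — is feasible for~\eqref{eq:lp} for every $k > K_0$. Feasibility of $(\pi_b, P)$ is exactly \Cref{lemma:K0}, and feasibility of $(\pi^*, P)$ for the \emph{true} constraint $V_1^{\pi^*}(g,P) \le \bar C$ holds by definition of $\pi^*$; the issue is that $\widehat g_k$ is inflated relative to $g$, so $V_1^{\pi^*}(\widehat g_k, P)$ may exceed $\bar C$. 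The standard fix is to take the mixture policy $\widetilde\pi_k = (1-\rho_k)\pi^* + \rho_k \pi_b$ for a suitable mixing weight $\rho_k = \Theta((\bar C - \bar C_b)^{-1} \cdot (\text{slack in }\widehat g_k))$; the extra pessimism in $\widehat g_k$ over $g$ is at most $V_1^{\pi}(R_k + U_k, P)$, and the $H/(\bar C - \bar C_b)$ weighting baked into $\widehat f_k$ in~\eqref{eq:reward-estimator} is precisely calibrated so that the reward loss incurred by mixing toward $\pi_b$ is absorbed by the extra optimism in $\widehat f_k$. This gives $V_1^{\pi^*}(f, P) \le V_1^{\widetilde\pi_k}(\widehat f_k, P) + (\text{error})$ and then $V_1^{\widetilde\pi_k}(\widehat f_k, P) \le V_1^{\pi_k}(\widehat f_k, P_k)$ by optimality of $(\pi_k, P_k)$, after checking $P \in \cP_k$ via \Cref{lemma:confidence}.

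The next step is to control the accumulated error terms. After the telescoping/optimality argument, term (II) is bounded by a sum over $k > K_0$ of quantities of the form $\frac{H}{\bar C - \bar C_b}\bigl(V_1^{\widetilde\pi_k}(R_k, P) + V_1^{\widetilde\pi_k}(U_k, P)\bigr)$ plus lower-order terms from \Cref{theorem:U_k} and \Cref{lemma:estimator}. I would then pass from expectations under $P$ to the realized trajectory counts using standard concentration (the visitation-count-to-expectation conversion, as in the analysis of \cite{efroni2020, bura2022dope}), picking up an $\widetilde{\cO}(H^2 S\sqrt{AK})$-type lower-order term. The dominant contribution comes from $\sum_k V_1^{\pi_k}(R_k, P_k)$ and $\sum_k V_1^{\pi_k}(8\sqrt{H}\,\varepsilon_k, P_k)$. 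Using $R_k(s,a,h) = \sqrt{\ln(HSAK/\delta)/\max\{1,N_k(s,a,h)\}}$ and $\varepsilon_k(s,a,h) = \widetilde{\Theta}(\sqrt{S/\max\{1,N_k(s,a,h)\}})$, the pigeonhole/elliptical-potential bound $\sum_{k=1}^K \mathbb{E}[\sum_h 1/\sqrt{\max\{1,N_k(s_h,a_h,h)\}}] = \widetilde{\cO}(H\sqrt{SAK})$ yields $\sum_k V_1^{\pi_k}(R_k,\cdot) = \widetilde{\cO}(H\sqrt{SAK})$ and $\sum_k V_1^{\pi_k}(\sqrt{H}\,\varepsilon_k,\cdot) = \widetilde{\cO}(\sqrt{H}\cdot\sqrt{S}\cdot H\sqrt{SAK}) = \widetilde{\cO}(H^{1.5}S\sqrt{AK})$. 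Multiplying by the $H/(\bar C - \bar C_b)$ prefactor from $\widehat f_k$ gives the stated $\widetilde{\cO}\bigl(\frac{H}{\bar C-\bar C_b} H^{1.5}S\sqrt{AK}\bigr)$ leading term. The constant and $1/\max\{1,N_k-1\}$-type terms in $U_k$ — namely $4S\sqrt{HA/K}$ summed over $k$ giving $\widetilde{\cO}(S\sqrt{HAK})$, and the $\eta/\max\{1,N_k-1\}$ term giving $\widetilde{\cO}(\eta \cdot HSA)$ since $\sum_k 1/\max\{1,N_k-1\} = \widetilde{\cO}(HSA)$ per fixed $(s,a,h)$ — contribute to the $\frac{H}{\bar C - \bar C_b} H^3 S^3 A$ term after accounting for $\eta = \widetilde{\Theta}(H^2 S^2)$ and the $\sqrt{HK/A}$ factor inside $U_k$.

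I would organize the bookkeeping by writing term (II) $= \frac{H}{\bar C - \bar C_b}\bigl[\sum_k V_1^{\pi_k}(3R_k + U_k, P_k)\bigr] + (\text{conversion error})$, then splitting $U_k$ into its three summands according to~\eqref{Uk-ours} and bounding each accumulated sum separately. For the $\min\{B, \cdot\}$ truncation in~\eqref{eq:reward-estimator}, I would note it only decreases $\widehat f_k$, so all upper bounds on term (II) obtained by dropping the truncation remain valid, while for the feasibility/optimality direction I would handle the truncation by a short case analysis or simply observe that on the good event the un-truncated estimator already lies below $B$ for well-visited triples and the contribution of poorly-visited triples is absorbed into $K_0$ and lower-order terms.

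The main obstacle I anticipate is the feasibility construction for~\eqref{eq:lp}: getting the mixing weight $\rho_k$ right so that (i) $\widetilde\pi_k$ paired with $P$ satisfies the inflated constraint $V_1^{\widetilde\pi_k}(\widehat g_k, P) \le \bar C$ exactly — which requires carefully tracking how $V_1^{\pi^*}(\widehat g_k, P)$ exceeds $\bar C$ in terms of $V_1^{\pi^*}(R_k + U_k, P)$ and using $\bar C_b < \bar C$ with the $\pi_b$-component pulling the constraint value down by a factor proportional to $\rho_k(\bar C - \bar C_b)$ — and (ii) the reward sacrifice $V_1^{\pi^*}(f,P) - V_1^{\widetilde\pi_k}(\widehat f_k, P)$ is controlled by $\rho_k H$ plus the built-in $\frac{H}{\bar C - \bar C_b}$-weighted optimism, so that the two cancel up to the error terms above. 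This is where the precise coefficients $3H/(\bar C - \bar C_b)$ and $H/(\bar C - \bar C_b)$ in~\eqref{eq:reward-estimator} matter, and verifying the cancellation requires care that the constant $3$ (rather than, say, $1$) is large enough to cover both the $R_k$-inflation of $\widehat g_k$ and the sub-Gaussian noise slack from \Cref{lemma:estimator}. Everything downstream is a routine application of the counting lemmas.
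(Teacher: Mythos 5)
Your plan matches the paper's proof essentially step for step: the paper forms the convex combination $\bar q_{\alpha_k}=(1-\alpha_k)\bar q_b+\alpha_k\bar q^*$ of occupancy measures, chooses $\alpha_k$ as in \eqref{eq:alpha} so that the inflated constraint $\langle\bm{\widehat g_k},\bm{q_{\alpha_k}}\rangle\le\bar C$ holds for $k>K_0$, invokes optimality of $(\pi_k,P_k)$ in \eqref{eq:lp}, verifies that the coefficient $\beta=H/(\bar C-\bar C_b)$ built into $\widehat f_k$ absorbs the reward lost to mixing toward $\pi_b$, and finishes with \Cref{lemma:sum-U_k}. Two details to tighten when writing this up: the mixture must be taken at the level of occupancy measures rather than policy probabilities (value functions are linear in $q$, not in $\pi$), and for the truncation $\min\{B,\cdot\}$ the relevant observation is that $\widehat f_k\ge\bar f_k$ on the good event (since $\bar f_k\le B$), so the untruncated estimator exceeds $\widehat f_k$ by at most $\frac{3H}{\bar C-\bar C_b}R_k+\frac{H}{\bar C-\bar C_b}U_k$ --- your remark that truncation ``only decreases $\widehat f_k$'' points in the unhelpful direction for term (II), where $\widehat f_k$ enters with a minus sign.
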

To prove the lemma, we define a new policy $\pi_k^{\alpha_k}$ for $k\in [K]$, which is a convex combination of the occupancy measures associated with $(\pi^*,P)$ and $(\pi_b,P)$ with coefficients $\alpha_k,1-\alpha_k \in (0,1)$. We choose the value of $\alpha_k$ so that $(\pi_k^{\alpha_k},P)$ is feasible to \eqref{eq:lp}. Then the optimality of $(\pi_k,P_k)$ implies $V_1^{\pi_k^{\alpha_k}}(\widehat f_k, P) \leq V_1^{\pi_k}(\widehat f_k, P_k)$, which lets us to analyze $V^{\pi^*}_1(f,P)- V_1^{\pi_k^{\alpha_k}}(\widehat f_k, P)$ with the same transition kernel $P$.

Term (III) comes from learning the unknown transition kernel. We apply a Bellman-type law of total variance to provide an upper bound on term (III).
\begin{lemma}\label{lemma:term-2}
With probability at least $1-16\delta$,
\begin{align*}
\sum_{k=K_0+1}^K \left( V^{\pi_k}_1(\widehat f_k, P_k) - V^{\pi_k}_1(\widehat f_k, P)\right)
&=\widetilde{\cO}\left(H^{1.5}S\sqrt{AK}+H^3S^3A\right)
\end{align*}
where $\widetilde{\cO}(\cdot)$ hides factor $(\ln(HSAK/\delta))^4$.
\end{lemma}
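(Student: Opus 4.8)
The plan is to prove \Cref{lemma:term-2} by applying the value difference lemma of \cite{dann2017pac} to each summand $V^{\pi_k}_1(\widehat f_k, P_k) - V^{\pi_k}_1(\widehat f_k, P)$, so that the difference becomes an expected sum over $h \in [H]$ of the local error terms $\sum_{s' \in \cS} (P_k - P)(s' \mid s,a,h) V^{\pi_k}_{h+1}(s'; \widehat f_k, P)$ along the trajectory generated by $(\pi_k, P_k)$. This is structurally the same decomposition used in the proof outline of \Cref{theorem:U_k}, so the first step is to reuse that machinery: split each local term into a ``$V$ under $P_k$'' part and a ``$W = V(\cdot;P) - V(\cdot;P_k)$'' part, discard the $W$ part as a lower-order contribution, recenter $V^{\pi_k}_{h+1}(\cdot;\widehat f_k,P_k)$ by its conditional mean $\widehat\mu_k$ (which is free since $P_k - P$ sums to zero), bound $|P_k - P| \le 2\epsilon_k$ via \Cref{lemma:confidence}, absorb $\sqrt{\bar P_k}$ into $\sqrt{P_k + \epsilon_k}$, and apply Cauchy--Schwarz. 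After this reduction the dominant term is controlled by $\sum_{k} V_1^{\pi_k}(\widehat{\mathbb V}_k, P_k)^{1/2}$ weighted by the inverse visitation counts, where $\widehat{\mathbb V}_k(s,a,h) = \var_{s' \sim P_k(\cdot \mid s,a,h)}[V^{\pi_k}_{h+1}(s';\widehat f_k, P_k)]$. One caveat: $\widehat f_k$ is not bounded by $1$ but by $B = 1 + \sqrt{\ln(HSAK/\delta)}$, so the value functions are bounded by $HB = \widetilde{\cO}(H)$ rather than $H$; the law-of-total-variance bound analogous to \Cref{lemma:variance-aware} then reads $V_1^{\pi_k}(\widehat{\mathbb V}_k, P_k) \le \widetilde{\cO}(H^2)$, which only changes constants and log factors.

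**Carrying out the summation over episodes.**

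Once the per-episode bound is in hand it has the shape $\widetilde{\cO}\big(\sqrt{H}\, \bbE[\sum_{h} \varepsilon_k(s_h^{P_k,\pi_k}, a_h^{P_k,\pi_k}, h)]\big)$ plus lower-order terms coming from the $\epsilon_k^2$ part of the confidence radius and the $W$ part — exactly mirroring the structure of $U_k$ in \eqref{Uk-ours}, minus the $K$-dependent ``budget'' corrections that were needed there only to make $U_k$ monotone and to handle the constraint. Summing over $k = K_0+1, \dots, K$, I would pass from the expectation under $P_k$ to the empirical trajectory counts using a standard concentration argument (e.g. an Azuma--Hoeffding bound on the martingale difference between the expected step-$h$ contribution under $(\pi_k, P_k)$ and the realized one, noting that DOPE+ actually runs $(\pi_k, P)$, so one also needs to bridge $P$ and $P_k$ here — this bridging term is again lower order by the same $\epsilon_k$ bound). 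Then $\sum_{k} \sum_{h} \mathbf 1[(s,a,h) \text{ visited}] / \sqrt{N_k(s,a,h)} = \widetilde{\cO}(\sqrt{HSAK})$ by the usual pigeonhole/integral bound $\sum_{n=1}^{N} n^{-1/2} = \cO(\sqrt N)$ over the $HSA$ tuples, and $\sum_{k}\sum_{h} \mathbf 1[\cdot]/N_k(s,a,h) = \widetilde{\cO}(HSA)$. Multiplying the first by the $\sqrt H \cdot \sqrt{\widehat{\mathbb V}_k/P_k}$-type factors and the Cauchy--Schwarz-induced extra $\sqrt S$ gives the leading $\widetilde{\cO}(H^{1.5} S \sqrt{AK})$; the second, together with the $\eta$-type and $H^2 S^2$ terms, collapses into the additive $\widetilde{\cO}(H^3 S^3 A)$.

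**The main obstacle.**

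I expect the crux to be the correct handling of the law of total variance for $\widehat{\mathbb V}_k$ under the \emph{estimated} kernel $P_k$ rather than the true one: the Bellman-type recursion $\sum_h \var[\cdot]$ telescopes cleanly only when the variances are taken along a single Markov chain, which is the chain induced by $(\pi_k, P_k)$ — this is precisely why \Cref{lemma:variance-aware} is stated for $P_k \in \cP_k$, and I would invoke it directly (in its $HB$-scaled form). The second delicate point is bookkeeping the three ``lower-order'' error sources — the $W_{h+1}$ difference term, the $\epsilon_k^2$ tail of the Bernstein radius, and the martingale concentration gaps — and checking that each is genuinely $\widetilde{\cO}(H^3 S^3 A)$ or smaller rather than, say, $\widetilde{\cO}(H^3 S^2 A \sqrt K)$; in particular the $W$ term is itself a value-function difference of the same type, so one has to either set up a crude self-bounding recursion on $\max_{s} |W_h^{\pi_k}(s)|$ (losing a factor of $H$ but no $\sqrt K$) or iterate the argument once. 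Everything else is the standard $\sum n^{-1/2}$ and $\sum n^{-1}$ visitation-count algebra together with the union bound collecting the $16\delta$ failure probability (the $14\delta$ from \Cref{theorem:U_k}/\Cref{lemma:confidence}/\Cref{lemma:estimator} plus one more $2\delta$ for the martingale concentration used in this step).
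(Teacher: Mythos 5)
Your overall architecture (value difference lemma, recentering by the conditional mean, Cauchy--Schwarz against a variance term, law of total variance, then visitation-count pigeonhole) is the right family of tools, and the paper's actual proof is a one-line invocation of the general \Cref{lemma9} with $B=\cO(\ln(HSAK/\delta))$, which internally does exactly this. However, you run the decomposition in the wrong direction for this lemma, and that creates a genuine gap. You expand the difference along the trajectory generated by $(\pi_k,P_k)$, i.e.\ with weights $\widehat q_k = q^{P_k,\pi_k}$, and you take the variance under $P_k$ via \Cref{lemma:variance-aware} --- mirroring the proof of \Cref{theorem:U_k}. That choice is fine for a single-episode statement, but it breaks at the episode-summation step: the counters $N_k(s,a,h)$ accumulate along the \emph{actual} trajectories, which follow the true kernel $P$, so $\mathbb{E}[n_k(s,a,h)\mid \mathcal{F}_k] = q_k(s,a,h)$ and not $\widehat q_k(s,a,h)$. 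The martingale/pigeonhole machinery of \Cref{lemma8} therefore applies to $\sum_k q_k/\sqrt{N_k}$ but not to $\sum_k \widehat q_k/\sqrt{N_k}$, and your proposed "Azuma--Hoeffding bound between the expected step-$h$ contribution under $(\pi_k,P_k)$ and the realized one" is not a martingale-difference bound at all.

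The bridging you wave off as "lower order by the same $\epsilon_k$ bound" is precisely where the difficulty sits. A crude estimate gives $\|\widehat q_k - q_k\|_1 \le H\sum_{(s,a,s',h)}q_k(s,a,h)\,\epsilon_k^\star(s'\mid s,a,h)$, which summed over $k$ is already $\widetilde{\cO}(H^2S\sqrt{AK})$; multiplying by a worst-case bound on $\varepsilon_k$ yields a cross term of order $\widetilde{\cO}(H^{2.5}S^2\sqrt{AK})$, \emph{larger} than the claimed leading term $H^{1.5}S\sqrt{AK}$. Controlling it correctly requires exploiting that both factors carry a $1/\sqrt{N_k}$ (an AM--GM/Cauchy--Schwarz argument of the kind carried out in \Cref{lemma10}), which is a nontrivial additional lemma, not a footnote. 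The paper avoids all of this by writing the value difference identity in the $q_k$-weighted form (\Cref{lemma:confidence''}), i.e.\ $\langle \bm{\ell_k},\bm{q_k}-\bm{\widehat q_k}\rangle=\sum q_k(s,a,h)(P-P_k)(s'\mid s,a,h)V_{h+1}^{\pi_k}(s';\ell_k,P_k)$, swapping the value function to the one under $P$ (with \Cref{lemma10} absorbing the correction), taking the variance under the \emph{true} kernel, and then legitimately converting $q_k\to n_k$ via \Cref{cohen-concentration} and \Cref{azuma} together with the total-variance bound \Cref{lemma4}. Your remaining steps (the $B$-scaling of $\widehat f_k$, the $\widetilde{\cO}(H^3S^3A)$ collection of lower-order terms, the $14\delta+2\delta$ accounting) are consistent with the paper, but as written the proposal does not close.
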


Term (IV) is due to the difference between $f$ and our estimator $\widehat f_k$. 
\begin{lemma}\label{lemma:term-3}
 With probability at least $1-14\delta$,
 \begin{align*}
     \sum_{k=K_0+1}^K \left( V^{\pi_k}_1(\widehat f_k, P) - V^{\pi_k}_1(f, P)\right)
     &=\widetilde{\cO}\left(\frac{H}{\bar C - \bar C_b} \left(H^{1.5}S\sqrt{AK}+H^3S^3A\right)\right)
 \end{align*}
 where $\widetilde{\cO}(\cdot)$ hides factor $(\ln(HSAK/\delta))^3$.
\end{lemma}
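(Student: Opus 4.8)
The plan is to reduce term (IV) to two occupancy-measure sums after a pointwise comparison of $\widehat f_k$ and $f$. Dropping the clip $\min\{B,\cdot\}$ in \eqref{eq:reward-estimator} (which only decreases $\widehat f_k$ and is therefore harmless for an upper bound), and using \Cref{lemma:estimator} to bound $\bar f_k(s,a,h)-f(s,a,h)\le R_k(s,a,h)$ together with the elementary fact $H/(\bar C-\bar C_b)>1$ (which holds because $\bar C<H$ and $\bar C_b\ge 0$), I would obtain, on the event of \Cref{lemma:estimator}, the pointwise estimate
\begin{equation*}
\widehat f_k(s,a,h)-f(s,a,h)\ \le\ \frac{4H}{\bar C-\bar C_b}\,R_k(s,a,h)+\frac{H}{\bar C-\bar C_b}\,U_k(s,a,h)
\end{equation*}
for every $(s,a,h)$. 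Since $V^{\pi_k}_1(\cdot,P)$ is linear in its function argument and monotone with respect to the pointwise order, this yields
\begin{equation*}
V^{\pi_k}_1(\widehat f_k,P)-V^{\pi_k}_1(f,P)\ \le\ \frac{H}{\bar C-\bar C_b}\Bigl(4\,V^{\pi_k}_1(R_k,P)+V^{\pi_k}_1(U_k,P)\Bigr),
\end{equation*}
so it remains to bound $\sum_{k>K_0}V^{\pi_k}_1(R_k,P)$ and $\sum_{k>K_0}V^{\pi_k}_1(U_k,P)$.

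Writing $w_k(s,a,h)=\bbP[s_h^{P,\pi_k}=s,\ a_h^{P,\pi_k}=a\mid \pi_k,P]$ for the occupancy measure induced by $(\pi_k,P)$, I would express each sum as $\sum_{h,s,a}w_k(s,a,h)R_k(s,a,h)$ (resp.\ with $U_k$) and invoke the standard counting lemmas proved in the appendix: writing $\iota=\ln(HSAK/\delta)$, with high probability
\begin{equation*}
\sum_{k=1}^{K}\sum_{h,s,a}\frac{w_k(s,a,h)}{\sqrt{\max\{1,N_k(s,a,h)-1\}}}=\widetilde{\cO}\bigl(H\sqrt{SAK}\bigr),\qquad
\sum_{k=1}^{K}\sum_{h,s,a}\frac{w_k(s,a,h)}{\max\{1,N_k(s,a,h)-1\}}=\widetilde{\cO}(HSA),
\end{equation*}
together with the trivial identity $\sum_{h,s,a}w_k(s,a,h)=H$. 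Plugging in $R_k$ gives $\sum_{k>K_0}V^{\pi_k}_1(R_k,P)=\widetilde{\cO}(H\sqrt{SAK})$, a lower-order contribution. For $U_k$ I would split term by term according to \eqref{Uk-ours} and \eqref{vareps}: the $8\sqrt H\cdot 2\sqrt{S\iota/\max\{1,N_k-1\}}$ piece contributes $\widetilde{\cO}(H^{1.5}S\sqrt{AK})$ — precisely where the coefficient $\sqrt H$ instead of $H$ produces the improvement; the $8\sqrt H\cdot\tfrac{14}{3}S\iota/\max\{1,N_k-1\}$ piece contributes $\widetilde{\cO}(H^{1.5}S^2A)$; the constant $4S\sqrt{HA/K}$ contributes $\sum_{k>K_0}H\cdot 4S\sqrt{HA/K}=\widetilde{\cO}(H^{1.5}S\sqrt{AK})$; the $2\sqrt{HK/A}\,\iota/\max\{1,N_k-1\}$ piece contributes $\widetilde{\cO}(\sqrt{HK/A}\cdot HSA)=\widetilde{\cO}(H^{1.5}S\sqrt{AK})$; and the $\eta/\max\{1,N_k-1\}$ piece contributes $\widetilde{\cO}(\eta\cdot HSA)=\widetilde{\cO}(H^{3}S^{3}A)$ since $\eta=\widetilde{\cO}(H^2S^2)$. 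Summing these and multiplying by $H/(\bar C-\bar C_b)$ gives $\widetilde{\cO}\bigl(\tfrac{H}{\bar C-\bar C_b}(H^{1.5}S\sqrt{AK}+H^{3}S^{3}A)\bigr)$, as claimed, all residual terms such as $H^{1.5}S^2A$ and $H^2SA$ being dominated by $H^{3}S^{3}A$.

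The main obstacle is bookkeeping rather than a new idea: one must split every summand of $U_k$ and $\varepsilon_k$, pair each with the correct counting lemma, and verify that every residual term is absorbed by one of the two advertised terms; some care is also needed so that the event of \Cref{lemma:estimator} and the martingale concentration underlying the counting lemmas hold simultaneously within the $1-14\delta$ probability budget used throughout. Note that \Cref{theorem:U_k} itself is not invoked in this step — only the explicit form of the function $U_k$ and the monotonicity of the value function in its cost/reward argument.
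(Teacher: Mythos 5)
Your proposal is correct and follows essentially the same route as the paper: drop the clip, use \Cref{lemma:estimator} and $H/(\bar C-\bar C_b)\ge 1$ to get the pointwise bound $\widehat f_k-f\le \frac{H}{\bar C-\bar C_b}(4R_k+U_k)$, and then control $\sum_k\langle \bm{R_k}+\bm{U_k},\bm{q_k}\rangle$ via the occupancy-weighted counting bounds of \Cref{lemma8}. The only cosmetic difference is that the paper packages the term-by-term bookkeeping of $U_k$ into \Cref{lemma:sum-U_k} (which is also reused elsewhere), whereas you carry it out inline; the probability-budget concern you raise is handled in the paper by folding the statement of \Cref{lemma8} into the good event $\cE$.
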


\section{Numerical Experiment}\label{sec:numerical}
We evaluate DOPE+ on the three-state CMDP instance of \cite{zheng2020constrained, simao2021always, bura2022dope} with a few modifications. In \Cref{fig:numerical:plot}, we compare regret and constraint violation under DOPE+ and DOPE for $200,000$ episodes when $H=30$. We consider DOPE as a benchmark algorithm because it provides the best regret bound among the existing algorithms while ensuring zero constraint violation. Our results are averaged across 5 runs with different random seeds, and we display the $95\%$ confidence interval with shaded regions. More details of the experiment setup can be found in the appendix including the MDP instance and algorithm parameters. %

In \Cref{fig:numerical:plot}, DOPE+ outperforms DOPE in terms of regret. This result demonstrates that DOPE+ improves upon DOPE computationally, in addition to our theoretical improvement. \Cref{fig:numerical:plot} shows that both algorithms achieve zero constraint violation.

\begin{figure}[ht]
    \centering
    \includegraphics[scale=0.55]{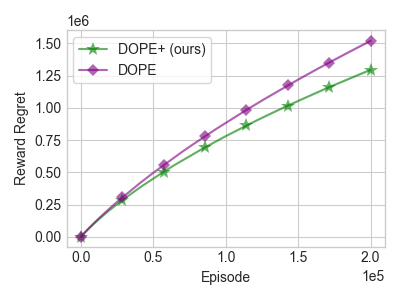}
    \includegraphics[scale=0.55]{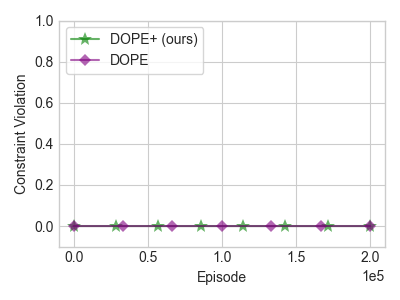}
  \caption{Comparison of DOPE+ and DOPE: Regret (Left) and Hard Constraint Violation (Right)}
  \label{fig:numerical:plot}
\end{figure}

\section{Conclusion}
In this paper, we investigate safe RL formulated as an episodic finite-horizon tabular CMDP. We propose novel reward and cost function estimators with tighter reward optimism and cost pessimism. Based on them, we develop DOPE+, which is a variant of the LP-based DOPE due to~\citep{bura2022dope}. We prove that DOPE+ achieves regret upper bound $\widetilde{\cO}((\bar C- \bar C_b)^{-1}H^{2.5}S\sqrt{AK})$ and zero hard constraint violation. The regret upper bound improves upon the best-known bound by a multiplicative factor of $\widetilde{\cO}(\sqrt{H})$ factor. When $\bar C - \bar C_b=\Omega(H)$, DOPE+ nearly matches the lower bound $\Omega(H^{1.5}\sqrt{SAK})$~\citep{Jin2020,pmlr-v132-domingues21a}. We also present numerical results that demonstrate the computational effectiveness of DOPE+ compared to DOPE.

\acks{This research is supported by the National Research Foundation of Korea (NRF) grant (No. RS-2024-00350703).}

\bibliography{biblio}

\begin{thebibliography}{39}
\providecommand{\natexlab}[1]{#1}
\providecommand{\url}[1]{\texttt{#1}}
\expandafter\ifx\csname urlstyle\endcsname\relax
  \providecommand{\doi}[1]{doi: #1}\else
  \providecommand{\doi}{doi: \begingroup \urlstyle{rm}\Url}\fi

\bibitem[Abe et~al.(2010)Abe, Melville, Pendus, Reddy, Jensen, Thomas, Bennett,
  Anderson, Cooley, Kowalczyk, Domick, and Gardinier]{abe-finance}
N.~Abe, P.~Melville, C.~Pendus, C.~K. Reddy, D.~L. Jensen, V.~P. Thomas, J.~J.
  Bennett, G.~F. Anderson, B.~R. Cooley, M.~Kowalczyk, M.~Domick, and
  T.~Gardinier.
\newblock Optimizing debt collections using constrained reinforcement learning.
\newblock In \emph{Proceedings of the 16th ACM SIGKDD International Conference
  on Knowledge Discovery and Data Mining}, KDD '10, page 75–84, New York, NY,
  USA, 2010. Association for Computing Machinery.
\newblock ISBN 9781450300551.
\newblock \doi{10.1145/1835804.1835817}.
\newblock URL \url{https://doi.org/10.1145/1835804.1835817}.

\bibitem[Altman(1999)]{Altman}
E.~Altman.
\newblock \emph{Constrained Markov Decision Processes}, volume~7.
\newblock CRC Press, 1999.

\bibitem[Azar et~al.(2017)Azar, Osband, and Munos]{azar2017}
M.~G. Azar, I.~Osband, and R.~Munos.
\newblock Minimax regret bounds for reinforcement learning.
\newblock In D.~Precup and Y.~W. Teh, editors, \emph{Proceedings of the 34th
  International Conference on Machine Learning}, volume~70 of \emph{Proceedings
  of Machine Learning Research}, pages 263--272. PMLR, 06--11 Aug 2017.
\newblock URL \url{https://proceedings.mlr.press/v70/azar17a.html}.

\bibitem[Beygelzimer et~al.(2011)Beygelzimer, Langford, Li, Reyzin, and
  Schapire]{beygelzimer11a}
A.~Beygelzimer, J.~Langford, L.~Li, L.~Reyzin, and R.~Schapire.
\newblock Contextual bandit algorithms with supervised learning guarantees.
\newblock In G.~Gordon, D.~Dunson, and M.~Dudík, editors, \emph{Proceedings of
  the Fourteenth International Conference on Artificial Intelligence and
  Statistics}, volume~15 of \emph{Proceedings of Machine Learning Research},
  pages 19--26, Fort Lauderdale, FL, USA, 11--13 Apr 2011. PMLR.
\newblock URL \url{https://proceedings.mlr.press/v15/beygelzimer11a.html}.

\bibitem[Brantley et~al.(2020)Brantley, Dud\'{\i}k, Lykouris, Miryoosefi,
  Simchowitz, Slivkins, and Sun]{10.5555/3495724.3497093}
K.~Brantley, M.~Dud\'{\i}k, T.~Lykouris, S.~Miryoosefi, M.~Simchowitz,
  A.~Slivkins, and W.~Sun.
\newblock Constrained episodic reinforcement learning in concave-convex and
  knapsack settings.
\newblock In \emph{Proceedings of the 34th International Conference on Neural
  Information Processing Systems}, NIPS '20, Red Hook, NY, USA, 2020. Curran
  Associates Inc.
\newblock ISBN 9781713829546.

\bibitem[Bura et~al.(2022)Bura, Hasanzadezonuzy, Kalathil, Shakkottai, and
  Chamberland]{bura2022dope}
A.~Bura, A.~Hasanzadezonuzy, D.~Kalathil, S.~Shakkottai, and J.-F. Chamberland.
\newblock {DOPE}: Doubly optimistic and pessimistic exploration for safe
  reinforcement learning.
\newblock In A.~H. Oh, A.~Agarwal, D.~Belgrave, and K.~Cho, editors,
  \emph{Advances in Neural Information Processing Systems}, 2022.
\newblock URL \url{https://openreview.net/forum?id=U4BUMoVTrB2}.

\bibitem[Chen and Luo(2021)]{ssp-adversarial-unknown}
L.~Chen and H.~Luo.
\newblock Finding the stochastic shortest path with low regret: the adversarial
  cost and unknown transition case.
\newblock In M.~Meila and T.~Zhang, editors, \emph{Proceedings of the 38th
  International Conference on Machine Learning}, volume 139 of
  \emph{Proceedings of Machine Learning Research}, pages 1651--1660. PMLR,
  18--24 Jul 2021.
\newblock URL \url{https://proceedings.mlr.press/v139/chen21l.html}.

\bibitem[Chen et~al.(2021)Chen, Dong, and
  Wang]{chen2021primaldualapproachconstrainedmarkov}
Y.~Chen, J.~Dong, and Z.~Wang.
\newblock A primal-dual approach to constrained markov decision processes,
  2021.
\newblock URL \url{https://arxiv.org/abs/2101.10895}.

\bibitem[Cohen et~al.(2020)Cohen, Kaplan, Mansour, and Rosenberg]{cohen2020}
A.~Cohen, H.~Kaplan, Y.~Mansour, and A.~Rosenberg.
\newblock Near-optimal regret bounds for stochastic shortest path.
\newblock In H.~D. III and A.~Singh, editors, \emph{Proceedings of the 37th
  International Conference on Machine Learning}, volume 119 of
  \emph{Proceedings of Machine Learning Research}, pages 8210--8219. PMLR,
  13--18 Jul 2020.
\newblock URL \url{https://proceedings.mlr.press/v119/rosenberg20a.html}.

\bibitem[Coronato et~al.(2020)Coronato, Naeem, {De Pietro}, and
  Paragliola]{CORONATO2020101964}
A.~Coronato, M.~Naeem, G.~{De Pietro}, and G.~Paragliola.
\newblock Reinforcement learning for intelligent healthcare applications: A
  survey.
\newblock \emph{Artificial Intelligence in Medicine}, 109:\penalty0 101964,
  2020.
\newblock ISSN 0933-3657.
\newblock \doi{https://doi.org/10.1016/j.artmed.2020.101964}.
\newblock URL
  \url{https://www.sciencedirect.com/science/article/pii/S093336572031229X}.

\bibitem[Dann et~al.(2017)Dann, Lattimore, and Brunskill]{dann2017pac}
C.~Dann, T.~Lattimore, and E.~Brunskill.
\newblock Unifying pac and regret: Uniform pac bounds for episodic
  reinforcement learning.
\newblock In I.~Guyon, U.~V. Luxburg, S.~Bengio, H.~Wallach, R.~Fergus,
  S.~Vishwanathan, and R.~Garnett, editors, \emph{Advances in Neural
  Information Processing Systems}, volume~30. Curran Associates, Inc., 2017.
\newblock URL
  \url{https://proceedings.neurips.cc/paper_files/paper/2017/file/17d8da815fa21c57af9829fb0a869602-Paper.pdf}.

\bibitem[Ding et~al.(2021)Ding, Wei, Yang, Wang, and Jovanovic]{ding2021opdop}
D.~Ding, X.~Wei, Z.~Yang, Z.~Wang, and M.~Jovanovic.
\newblock Provably efficient safe exploration via primal-dual policy
  optimization.
\newblock In \emph{International conference on artificial intelligence and
  statistics}, pages 3304--3312. PMLR, 2021.

\bibitem[Domingues et~al.(2021)Domingues, M{\'e}nard, Kaufmann, and
  Valko]{pmlr-v132-domingues21a}
O.~D. Domingues, P.~M{\'e}nard, E.~Kaufmann, and M.~Valko.
\newblock Episodic reinforcement learning in finite mdps: Minimax lower bounds
  revisited.
\newblock In V.~Feldman, K.~Ligett, and S.~Sabato, editors, \emph{Proceedings
  of the 32nd International Conference on Algorithmic Learning Theory}, volume
  132 of \emph{Proceedings of Machine Learning Research}, pages 578--598. PMLR,
  16--19 Mar 2021.
\newblock URL \url{https://proceedings.mlr.press/v132/domingues21a.html}.

\bibitem[Efroni et~al.(2020)Efroni, Mannor, and Pirotta]{efroni2020}
Y.~Efroni, S.~Mannor, and M.~Pirotta.
\newblock Exploration-exploitation in constrained mdps, 2020.

\bibitem[Fisac et~al.(2018)Fisac, Akametalu, Zeilinger, Kaynama, Gillula, and
  Tomlin]{robot:fisac2018general}
J.~F. Fisac, A.~K. Akametalu, M.~N. Zeilinger, S.~Kaynama, J.~Gillula, and
  C.~J. Tomlin.
\newblock A general safety framework for learning-based control in uncertain
  robotic systems.
\newblock \emph{IEEE Transactions on Automatic Control}, 64\penalty0
  (7):\penalty0 2737--2752, 2018.

\bibitem[Garc{\'\i}a and Shafie(2020)]{robot:garcia2020teaching}
J.~Garc{\'\i}a and D.~Shafie.
\newblock Teaching a humanoid robot to walk faster through safe reinforcement
  learning.
\newblock \emph{Engineering Applications of Artificial Intelligence},
  88:\penalty0 103360, 2020.

\bibitem[Ghosh et~al.(2022)Ghosh, Zhou, and Shroff]{NEURIPS2022_56b8f22d}
A.~Ghosh, X.~Zhou, and N.~Shroff.
\newblock Provably efficient model-free constrained rl with linear function
  approximation.
\newblock In S.~Koyejo, S.~Mohamed, A.~Agarwal, D.~Belgrave, K.~Cho, and A.~Oh,
  editors, \emph{Advances in Neural Information Processing Systems}, volume~35,
  pages 13303--13315. Curran Associates, Inc., 2022.
\newblock URL
  \url{https://proceedings.neurips.cc/paper_files/paper/2022/file/56b8f22d895c45f60eaac9580152afd9-Paper-Conference.pdf}.

\bibitem[Ghosh et~al.(2024)Ghosh, Zhou, and Shroff]{ghosh2024towards}
A.~Ghosh, X.~Zhou, and N.~Shroff.
\newblock Towards achieving sub-linear regret and hard constraint violation in
  model-free rl.
\newblock In \emph{International Conference on Artificial Intelligence and
  Statistics}, pages 1054--1062. PMLR, 2024.

\bibitem[Isele et~al.(2018)Isele, Nakhaei, and Fujimura]{vehicle:isele2018safe}
D.~Isele, A.~Nakhaei, and K.~Fujimura.
\newblock Safe reinforcement learning on autonomous vehicles.
\newblock In \emph{2018 IEEE/RSJ International Conference on Intelligent Robots
  and Systems (IROS)}, pages 1--6. IEEE, 2018.

\bibitem[Jin et~al.(2018)Jin, Allen-Zhu, Bubeck, and
  Jordan]{NEURIPS2018_d3b1fb02}
C.~Jin, Z.~Allen-Zhu, S.~Bubeck, and M.~I. Jordan.
\newblock Is q-learning provably efficient?
\newblock In S.~Bengio, H.~Wallach, H.~Larochelle, K.~Grauman, N.~Cesa-Bianchi,
  and R.~Garnett, editors, \emph{Advances in Neural Information Processing
  Systems}, volume~31. Curran Associates, Inc., 2018.
\newblock URL
  \url{https://proceedings.neurips.cc/paper_files/paper/2018/file/d3b1fb02964aa64e257f9f26a31f72cf-Paper.pdf}.

\bibitem[Jin et~al.(2020)Jin, Jin, Luo, Sra, and Yu]{Jin2020}
C.~Jin, T.~Jin, H.~Luo, S.~Sra, and T.~Yu.
\newblock Learning adversarial {M}arkov decision processes with bandit feedback
  and unknown transition.
\newblock In H.~D. III and A.~Singh, editors, \emph{Proceedings of the 37th
  International Conference on Machine Learning}, volume 119 of
  \emph{Proceedings of Machine Learning Research}, pages 4860--4869. PMLR,
  13--18 Jul 2020.
\newblock URL \url{https://proceedings.mlr.press/v119/jin20c.html}.

\bibitem[Kalagarla et~al.(2021)Kalagarla, Jain, and
  Nuzzo]{Kalagarla_Jain_Nuzzo_2021}
K.~C. Kalagarla, R.~Jain, and P.~Nuzzo.
\newblock A sample-efficient algorithm for episodic finite-horizon mdp with
  constraints.
\newblock \emph{Proceedings of the AAAI Conference on Artificial Intelligence},
  35\penalty0 (9):\penalty0 8030--8037, May 2021.
\newblock \doi{10.1609/aaai.v35i9.16979}.
\newblock URL \url{https://ojs.aaai.org/index.php/AAAI/article/view/16979}.

\bibitem[Kalagarla et~al.(2023)Kalagarla, Jain, and
  Nuzzo]{kalagarla2023safeposteriorsamplingconstrained}
K.~C. Kalagarla, R.~Jain, and P.~Nuzzo.
\newblock Safe posterior sampling for constrained mdps with bounded constraint
  violation, 2023.
\newblock URL \url{https://arxiv.org/abs/2301.11547}.

\bibitem[Krasowski et~al.(2020)Krasowski, Wang, and
  Althoff]{vehicle:krasowski2020safe}
H.~Krasowski, X.~Wang, and M.~Althoff.
\newblock Safe reinforcement learning for autonomous lane changing using
  set-based prediction.
\newblock In \emph{2020 IEEE 23rd international conference on Intelligent
  Transportation Systems (ITSC)}, pages 1--7. IEEE, 2020.

\bibitem[Liu et~al.(2021)Liu, Zhou, Kalathil, Kumar, and Tian]{liu2021learning}
T.~Liu, R.~Zhou, D.~Kalathil, P.~Kumar, and C.~Tian.
\newblock Learning policies with zero or bounded constraint violation for
  constrained {MDP}s.
\newblock In A.~Beygelzimer, Y.~Dauphin, P.~Liang, and J.~W. Vaughan, editors,
  \emph{Advances in Neural Information Processing Systems}, 2021.
\newblock URL \url{https://openreview.net/forum?id=Nl7VO_Y7K4Q}.

\bibitem[Maurer and Pontil(2009)]{Maurer-bernstein}
A.~Maurer and M.~Pontil.
\newblock Empirical {B}ernstein bounds and sample variance penalization.
\newblock In \emph{Proceedings of the 22nd Annual Conference on Learning
  Theory}, 2009.

\bibitem[Miryoosefi and Jin(2022)]{pmlr-v162-miryoosefi22a}
S.~Miryoosefi and C.~Jin.
\newblock A simple reward-free approach to constrained reinforcement learning.
\newblock In K.~Chaudhuri, S.~Jegelka, L.~Song, C.~Szepesvari, G.~Niu, and
  S.~Sabato, editors, \emph{Proceedings of the 39th International Conference on
  Machine Learning}, volume 162 of \emph{Proceedings of Machine Learning
  Research}, pages 15666--15698. PMLR, 17--23 Jul 2022.
\newblock URL \url{https://proceedings.mlr.press/v162/miryoosefi22a.html}.

\bibitem[Moskovitz et~al.(2023)Moskovitz, O'Donoghue, Veeriah, Flennerhag,
  Singh, and Zahavy]{pmlr-v202-moskovitz23a}
T.~Moskovitz, B.~O'Donoghue, V.~Veeriah, S.~Flennerhag, S.~Singh, and
  T.~Zahavy.
\newblock {R}e{LOAD}: Reinforcement learning with optimistic ascent-descent for
  last-iterate convergence in constrained {MDP}s.
\newblock In A.~Krause, E.~Brunskill, K.~Cho, B.~Engelhardt, S.~Sabato, and
  J.~Scarlett, editors, \emph{Proceedings of the 40th International Conference
  on Machine Learning}, volume 202 of \emph{Proceedings of Machine Learning
  Research}, pages 25303--25336. PMLR, 23--29 Jul 2023.
\newblock URL \url{https://proceedings.mlr.press/v202/moskovitz23a.html}.

\bibitem[M\"{u}ller et~al.(2024)M\"{u}ller, Alatur, Cevher, Ramponi, and
  He]{muller2024truly}
A.~M\"{u}ller, P.~Alatur, V.~Cevher, G.~Ramponi, and N.~He.
\newblock Truly no-regret learning in constrained {MDP}s.
\newblock In R.~Salakhutdinov, Z.~Kolter, K.~Heller, A.~Weller, N.~Oliver,
  J.~Scarlett, and F.~Berkenkamp, editors, \emph{Proceedings of the 41st
  International Conference on Machine Learning}, volume 235 of
  \emph{Proceedings of Machine Learning Research}, pages 36605--36653. PMLR,
  21--27 Jul 2024.
\newblock URL \url{https://proceedings.mlr.press/v235/muller24b.html}.

\bibitem[Qiu et~al.(2020)Qiu, Wei, Yang, Ye, and Wang]{qiu2020}
S.~Qiu, X.~Wei, Z.~Yang, J.~Ye, and Z.~Wang.
\newblock Upper confidence primal-dual reinforcement learning for cmdp with
  adversarial loss.
\newblock In H.~Larochelle, M.~Ranzato, R.~Hadsell, M.~Balcan, and H.~Lin,
  editors, \emph{Advances in Neural Information Processing Systems}, volume~33,
  pages 15277--15287. Curran Associates, Inc., 2020.
\newblock URL
  \url{https://proceedings.neurips.cc/paper_files/paper/2020/file/ae95296e27d7f695f891cd26b4f37078-Paper.pdf}.

\bibitem[Rosenberg and Mansour(2019)]{rosenberg2019}
A.~Rosenberg and Y.~Mansour.
\newblock Online convex optimization in adversarial {M}arkov decision
  processes.
\newblock In K.~Chaudhuri and R.~Salakhutdinov, editors, \emph{Proceedings of
  the 36th International Conference on Machine Learning}, volume~97 of
  \emph{Proceedings of Machine Learning Research}, pages 5478--5486. PMLR,
  09--15 Jun 2019.
\newblock URL \url{https://proceedings.mlr.press/v97/rosenberg19a.html}.

\bibitem[Sim\~{a}o et~al.(2021)Sim\~{a}o, Jansen, and Spaan]{simao2021always}
T.~D. Sim\~{a}o, N.~Jansen, and M.~T.~J. Spaan.
\newblock Alwayssafe: Reinforcement learning without safety constraint
  violations during training.
\newblock In \emph{Proceedings of the 20th International Conference on
  Autonomous Agents and MultiAgent Systems}, AAMAS '21, page 1226–1235,
  Richland, SC, 2021. International Foundation for Autonomous Agents and
  Multiagent Systems.
\newblock ISBN 9781450383073.

\bibitem[Singh et~al.(2023)Singh, Gupta, and Shroff]{9872114}
R.~Singh, A.~Gupta, and N.~B. Shroff.
\newblock Learning in constrained markov decision processes.
\newblock \emph{IEEE Transactions on Control of Network Systems}, 10\penalty0
  (1):\penalty0 441--453, 2023.
\newblock \doi{10.1109/TCNS.2022.3203361}.

\bibitem[Stooke et~al.(2020)Stooke, Achiam, and Abbeel]{pmlr-v119-stooke20a}
A.~Stooke, J.~Achiam, and P.~Abbeel.
\newblock Responsive safety in reinforcement learning by {PID} lagrangian
  methods.
\newblock In H.~D. III and A.~Singh, editors, \emph{Proceedings of the 37th
  International Conference on Machine Learning}, volume 119 of
  \emph{Proceedings of Machine Learning Research}, pages 9133--9143. PMLR,
  13--18 Jul 2020.
\newblock URL \url{https://proceedings.mlr.press/v119/stooke20a.html}.

\bibitem[Wei et~al.(2022{\natexlab{a}})Wei, Liu, and Ying]{Wei_Liu_Ying_2022}
H.~Wei, X.~Liu, and L.~Ying.
\newblock A provably-efficient model-free algorithm for infinite-horizon
  average-reward constrained markov decision processes.
\newblock \emph{Proceedings of the AAAI Conference on Artificial Intelligence},
  36\penalty0 (4):\penalty0 3868--3876, Jun. 2022{\natexlab{a}}.
\newblock \doi{10.1609/aaai.v36i4.20302}.
\newblock URL \url{https://ojs.aaai.org/index.php/AAAI/article/view/20302}.

\bibitem[Wei et~al.(2022{\natexlab{b}})Wei, Liu, and Ying]{wei2022triple}
H.~Wei, X.~Liu, and L.~Ying.
\newblock Triple-q: A model-free algorithm for constrained reinforcement
  learning with sublinear regret and zero constraint violation.
\newblock In \emph{International Conference on Artificial Intelligence and
  Statistics}, pages 3274--3307. PMLR, 2022{\natexlab{b}}.

\bibitem[Wei et~al.(2023)Wei, Ghosh, Shroff, Ying, and Zhou]{pmlr-v206-wei23b}
H.~Wei, A.~Ghosh, N.~Shroff, L.~Ying, and X.~Zhou.
\newblock Provably efficient model-free algorithms for non-stationary cmdps.
\newblock In F.~Ruiz, J.~Dy, and J.-W. van~de Meent, editors, \emph{Proceedings
  of The 26th International Conference on Artificial Intelligence and
  Statistics}, volume 206 of \emph{Proceedings of Machine Learning Research},
  pages 6527--6570. PMLR, 25--27 Apr 2023.
\newblock URL \url{https://proceedings.mlr.press/v206/wei23b.html}.

\bibitem[Yu et~al.(2021)Yu, Tian, Zhang, and Sra]{pmlr-v139-yu21b}
T.~Yu, Y.~Tian, J.~Zhang, and S.~Sra.
\newblock Provably efficient algorithms for multi-objective competitive rl.
\newblock In M.~Meila and T.~Zhang, editors, \emph{Proceedings of the 38th
  International Conference on Machine Learning}, volume 139 of
  \emph{Proceedings of Machine Learning Research}, pages 12167--12176. PMLR,
  18--24 Jul 2021.
\newblock URL \url{https://proceedings.mlr.press/v139/yu21b.html}.

\bibitem[Zheng and Ratliff(2020)]{zheng2020constrained}
L.~Zheng and L.~Ratliff.
\newblock Constrained upper confidence reinforcement learning.
\newblock In \emph{Learning for Dynamics and Control}, pages 620--629. PMLR,
  2020.

\end{thebibliography}

\newpage
\appendix

\section{Related Work}

In this section, we provide a more detailed discussion of related work to online learning of constrained Markov decision processes (CMDPs). As explained in the introduction, we review previous works for the three frameworks, cumulative constraint violation, hard constraint violation, and zero constraint violation.

\paragraph{Cumulative Constraint Violation} Starting with the work of~\cite{efroni2020}, online learning of CMDPs has been an active area of research in reinforcement learning, especially with the framework of cumulative (or soft) constraint violation~\citep{10.5555/3495724.3497093,qiu2020,zheng2020constrained,Kalagarla_Jain_Nuzzo_2021,ding2021opdop,chen2021primaldualapproachconstrainedmarkov,pmlr-v139-yu21b,liu2021learning,Wei_Liu_Ying_2022,wei2022triple,9872114,pmlr-v162-miryoosefi22a,NEURIPS2022_56b8f22d,pmlr-v206-wei23b,kalagarla2023safeposteriorsamplingconstrained}. Among these works, \cite{10.5555/3495724.3497093} studied a knapsack constrained formulation, and \cite{qiu2020} studied the setting where the reward functions are adversarially given and the cost functions are sampled from a fixed but unknown distribution. Moreover, \cite{zheng2020constrained} considered the case where the transition kernel is known to the agent, and \cite{Kalagarla_Jain_Nuzzo_2021} studied a PAC bound for learning CMDPs. \cite{ding2021opdop, chen2021primaldualapproachconstrainedmarkov} developed model-free algorithms for CMDPs, although these approaches require access to simulators, while \cite{pmlr-v139-yu21b} studied vector-valued Markov games for a variant of constrained MDPs. \cite{liu2021learning} introduced the first algorithm that achieves zero cumulative constraint violation. \cite{Wei_Liu_Ying_2022} and \cite{9872114} considered the infinite-horizon average-reward setting. Moreover, \cite{wei2022triple} came up with a model-free algorithm for finite-horizon episodic tabular CMDPs. \cite{pmlr-v162-miryoosefi22a} studied the reward-free setting, and \cite{NEURIPS2022_56b8f22d} proposed an algorithm for the linear MDP setting, which leads to a model-free algorithm for tabular CMDPs. Lastly, \cite{pmlr-v206-wei23b} considered non-stationary CMDPs, while~\cite{kalagarla2023safeposteriorsamplingconstrained} developed a posterior sampling-based algorithm that guarantees a Bayesian regret upper bound.

 \cite{wei2022triple} introduced model-free and simulator-free algorithms to solve tabular CMDPs. These algorithms were analyzed under soft constraint violations, thus they do not guarantee safety in all episodes. In contrast, \cite{muller2024truly, ghosh2024towards} presented PD-based algorithms with hard constraint violations, though these suffer from high regret and constraint violations. On the other hand, \cite{liu2021learning} proposed the LP-based algorithm OptPess-LP, which achieves zero hard constraint violations with sublinear regret by employing \emph{optimistic pessimism in the face of uncertainty (OPFU)}. The pessimism in the cost function estimator ensures safety but hampers exploration. To address this, \cite{bura2022dope} recently proposed DOPE, incorporating optimism for the transition kernel to improve the regret bound.

\paragraph{Hard Constraint Violation} The notion of hard constraint violation was introduced by~\cite{efroni2020}. \cite{efroni2020} developed an LP-based algorithm for controlling hard constraint violation and raised an open question of whether there exists a primal-dual algorithm for the setting. Recently, \cite{ghosh2024towards} established an algorithm that guarantees a sublinear regret upper bound and a sublinear upper bound on hard constraint violation. Their algorithm is for the linear MDP setting, and it provides a model-free algorithm for the tabular setting. In fact, their analysis shows that for the tabular case, one may get a tighter performance guarantees. \cite{muller2024truly} developed a simpler primal-dual algorithm that guarantees a sublinear regret upper bound and a sublinear upper bound on hard constraint violation, answering the question of \cite{efroni2020}.

\paragraph{Zero Constraint Violation}

\cite{simao2021always} considered the importance of achieving no constraint violation, which is equivalent to zero hard constraint violation. They showed an algorithm that guarantees no constraint violation, but their result relies on the assumption of some abstraction of the transition model, and moreover, there is no regret upper bound given for the algorithm. \cite{liu2021learning} established the first algorithm that achieves a sublinear regret while guaranteeing zero hard constraint violation. After \cite{liu2021learning}, \citep{bura2022dope} proposed their algorithm, DOPE, which improves upon \cite{liu2021learning} to show a smaller regret upper bound.

\section{Auxiliary Measures and Notations}\label{sec:appendix:auxiliary}

In this section, we first summarize notations in~\Cref{table:notations}. Next, we define some auxiliary measures and notations that are useful for the analysis of DOPE+.

\begin{table}[ht]
\caption{Summary of Notations}
\label{table:notations}
\begin{center}
\begin{tabular}{ll}
\toprule
\multicolumn{1}{c}{Notation}  &\multicolumn{1}{c}{Definition}\\ 
\hline
$K$ & The number of episodes \\
$H$ & The finite horizon \\
$[H]$ & The set $\{1,2,\ldots, H\}$ \\
$\cS,\ S$ & The finite state space $\cS$ and the number of states $S = |\cS|$\\
$\cA,\ A$ & The finite action space $\cA$ and the number of actions $A = |\cA|$\\
$P$ & The true transition kernel $P(s,a,s',h):\cS\times\cA\times\cS\times[H]\to[0,1]$ \\
$p$ & The initial distribution of the states\\
$\cP_k$ & The confidence set of the transition kernel for episode $k\in[K]$\\
$P_k$ & The transition kernel obtained from DOPE+ for episode $k\in [K]$, $P_k \in \cP_k$\\
$f,\ g$ & The reward and cost function\\
$f_k,\ g_k$ & The instantaneous reward and cost for episode $k\in[K]$ \\
$\bar f_k,\ \bar g_k$ & The empirical estimators of $f,g$ for episode $k\in[K]$ \\
$\widehat f_k,\ \widehat g_k$ & The optimistic/pessimistic estimators of $f,g$ for episode $k\in[K]$ \\
$V_h^{\pi}(s;f,P)$ & The value function at state $s$ and step $h$ under $f$ and $P$\\
$Q_h^{\pi}(s,a;f,P)$ & The state-action value function at state $s$ and step $h$ for action $a$ under $f$ and $P$\\
$N_k(s,a,h)$ & The number of visits $(s,a,h)$ up to the first $k-1$ episodes\\
$M_k(s,a,s',h)$ & The number of visits $(s,a,s',h)$ up to the first $k-1$ episodes\\
$n_k(s,a,h)$ & The indicator variable for visits $(s,a,h)$ for episode $k\in[K]$\\
$\pi^*$ & The benchmark policy\\
$\pi_k$ & The policy obtained from DOPE+ for episode $k\in[K]$\\
$\pi_b$ & The safe baseline policy\\
$\bar C_b$ & The expected cost of $\pi_b$ for a single episode\\
$\bar C$ & The budget on the expected cost\\
$q^{P,\pi}$ & The occupancy measure with respect to policy $\pi$ and transition kernel $P$\\
$q^*$ & The occupancy measure  $q^{P,\pi^*}$ \\
$q_b$ & The occupancy measure  $q^{P,\pi_b}$ \\
$q_k$ & The occupancy measure $q^{P,\pi_k}$ \\
$\widehat q_k$ & The occupancy measure $q^{P_k,\pi_k}$\\
$\Delta(P)$ & The set of occupancy measures inducing any policy $\pi$ and the true transition kernel $P$\\
$\Delta(P,k)$ & The set of occupancy measures inducing any policy $\pi$ and transition kernel $P_k\in\cP_k$\\
\bottomrule 
\end{tabular}
\end{center}
\end{table}

We define the \emph{state-action value function} for $(s,a)\in\cS\times \cA$ at step $h$ with a  function $\ell:\cS\times\cA\times[H]\to[0,1]$ and transition kernel $P$ as follows.
\begin{equation*}
Q^{\pi}_h(s,a;\ell,P) = \mathbb{E}\left[\sum_{j=h}^{H}\ell\left(s_{j}^{P,\pi},a_{j}^{P,\pi},j\right)\mid \ell,\pi,P, s_h^{P,\pi}=s,a_h^{P,\pi}=a\right].
\end{equation*}
Let $\bm{Q^{P,\pi,\ell}}$ denote the $(S\times A\times H)$-dimensional vector whose coordinates are for $(s,a,h)\in\cS\times\cA\times[H]$,
$$(\bm{Q^{P,\pi,\ell}})_{(s,a,h)} = Q^{\pi}_h(s,a;\ell,P).$$
Given a policy $\pi$ and transition kernel $P$, we define $q^{P,\pi}\left(s,a,h\mid s',m\right)$ as for $(s,a,s')\in \cS\times \cA\times \cS$ and $1\leq m\leq h\leq H$,
\begin{equation*}%
q^{P,\pi}\left(s,a,h\mid s',m\right)=\mathbb{P}\left[s^{ P, \pi}_{h}=s,\ a^{P,\pi}_{h}=a\mid \pi,P, s^{ P, \pi}_{m}=s'\right].
\end{equation*}

Given two vectors $\bm{u},\bm{v}\in \mathbb{R}^{S\times A\times H}$, let $\bm{u}\odot \bm{v}$, $\bm{u}\wedge\bm{v}$ be defined as the vector obtained from coordinate-wise products and coordinate-wise minimization of $\bm{u}$ and $\bm{v}$, respectively, i.e., for  $(s,a,h)\in\cS\times\cA\times[H],$
$$(\bm{u}\odot \bm{v})_{(s,a,h)} = \bm{u}_{(s,a,h)}\times \bm{v}_{(s,a,h)},\quad (\bm{u}\wedge\bm{v})_{(s,a,h)} = \min\{\bm{u}_{(s,a,h)}, \bm{v}_{(s,a,h)}\}.$$ 

Let $\bm{\vec h}$ and $\bm{\vec{B}}$ be $(S\times A\times H)$-dimensional vectors all of whose coordinates are $h$ and $1+\sqrt{\ln(HSAK/\delta)}$, respectively, i.e., for $(s,a,j)\in \cS\times\cA\times[H],$
$$\bm{\vec h}_{(s,a,j)}=j, \ \bm{\vec{B}}_{(s,a,j)}=1+\sqrt{\ln(HSAK/\delta)}.$$

\section{Extended Linear Program}\label{sec:appendix:occupancy}

In this section, we provide a formal definition of occupancy measures for a finite-horizon MDP. Then we provide a reformulation of \eqref{eq:lp} using occupancy measures, which is called the extended linear program~\citep{efroni2020, bura2022dope}.

 Given a policy $\pi$ and  a transition kernel $P$, let $\bar q^{P, \pi}:\cS\times\cA\times\cS\times [H]\to[0,1]$ be defined as $\bar q^{P,\pi}(s,a,s',h)=\mathbb{P}[(s^{P,\pi}_{h},a^{P,\pi}_{h},\ s^{P, \pi}_{h+1})=(s,a,s')\mid \pi,P]$
for $(s,a,s',h)\in \cS\times \cA\times \cS\times [H]$. Note that any $\bar q$ defined as the above equation has the following properties.
(C1) $\sum_{(s,a,s')\in \cS\times \cA\times \cS}\bar q(s,a,s',h)=1$, (C2) $\sum_{(s',a)\in\cS\times \cA}\bar q(s,a,s',h)=\sum_{(s',a)\in\cS\times\cA}\bar q(s',a,s,h-1), \quad s \in \cS, \ h=2,\ldots,H$.
The \emph{occupancy measure} $q^{P,\pi}:\cS\times\cA\times[H]\to[0,1]$ associated with policy $\pi$ and transition kernel $P$ is defined as (C3) $q^{P,\pi}(s,a,h)=\sum_{s'\in\cS} \bar q^{P, \pi}(s,a,s',h)$.
Then it follows that
$q^{P,\pi}(s,a,h)=\mathbb{P}[(s^{P,\pi}_{h},a^{P,\pi}_{h})=(s,a)\mid\pi,P]$.
Hence, if a policy $\pi$ is chosen, then the occupancy measure for a finite-horizon MDP with transition kernel $P$ is determined. Conversely, any $q\in\cS\times\cA\times [H]\to[0,1]$ with $\bar q:\cS\times\cA\times\cS\times [H]\to[0,1]$ satisfying~(C1), (C2), and (C3) induces a transition kernel $P^{q}$ and a policy $\pi^{q}$ given as follows. 
\begin{align}\label{induced}
\begin{aligned}
&P^{q}(s'\mid s,a,h)=\frac{\bar q(s,a,s',h)}{\sum_{s''\in \cS}\bar q(s,a,s'',h)},\quad \pi^{q}(a\mid s,h)=\frac{q(s,a,h)}{\sum_{b\in \cA} q(s,b,h)}.
\end{aligned}
\end{align}
Next, we provide a lemma that characterizes valid occupancy measures for a finite-horizon MDP.
\begin{lemma}\label{lemma:valid-occupancy}
Let $q:\cS\times \cA\times[H]\to[0,1]$. Then $q$ is a valid occupancy measure that induces transition kernel $P$ if and only if there exists $\bar q:\cS\times \cA\times\cS\times [H]\to[0,1]$ that satisfies~(C1), (C2), (C3), and $P^{q}=P$.
\end{lemma}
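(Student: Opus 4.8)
\textbf{Proof plan for \Cref{lemma:valid-occupancy}.}
The plan is to prove both directions of the equivalence directly from the definitions of occupancy measures and the properties (C1)--(C3). First I would handle the ``only if'' direction: suppose $q = q^{P,\pi}$ is the occupancy measure associated with some policy $\pi$ and the transition kernel $P$. Define $\bar q := \bar q^{P,\pi}$ via $\bar q(s,a,s',h) = \mathbb{P}[(s^{P,\pi}_h, a^{P,\pi}_h, s^{P,\pi}_{h+1}) = (s,a,s') \mid \pi, P]$. Properties (C1) and (C3) are immediate from the definition, and (C2) follows from the Markov property by marginalizing the joint distribution at step $h$ against the one at step $h-1$: both sides equal $\mathbb{P}[s^{P,\pi}_h = s]$. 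It remains to verify $P^{q} = P$; by the formula in \eqref{induced}, $P^{q}(s'\mid s,a,h) = \bar q(s,a,s',h)/\sum_{s''}\bar q(s,a,s'',h) = \mathbb{P}[s^{P,\pi}_{h+1}=s'\mid s^{P,\pi}_h=s, a^{P,\pi}_h=a] = P(s'\mid s,a,h)$ whenever the denominator $q(s,a,h)$ is positive. (When $q(s,a,h)=0$, the state-action pair is unreachable and the induced transition probabilities at $(s,a,h)$ can be set arbitrarily, e.g.\ to match $P$, without affecting validity.)

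For the ``if'' direction, suppose $\bar q$ satisfies (C1), (C2), (C3), and $P^{q} = P$. I would define the policy $\pi := \pi^{q}$ via \eqref{induced} and show by induction on $h$ that the occupancy measure $q^{P,\pi}$ generated by running $\pi$ on $P$ coincides with $q$. The base case $h=1$ uses (C1) together with the initial distribution $p$ encoded in $\bar q$ at step $H$ (or, depending on the convention, the marginal of $\bar q$ at $h=1$). For the inductive step, assume $q^{P,\pi}(\cdot,\cdot,h-1) = q(\cdot,\cdot,h-1)$. Then, using the definition of $\pi^{q}$ and $P^{q}=P$,
\begin{align*}
q^{P,\pi}(s,a,h)
&= \pi(a\mid s,h)\sum_{(s',b)}q^{P,\pi}(s',b,h-1)P(s\mid s',b,h-1)\\
&= \pi(a\mid s,h)\sum_{(s',b)}q(s',b,h-1)\,\frac{\bar q(s',b,s,h-1)}{\sum_{s''}\bar q(s',b,s'',h-1)}.
\end{align*}
Since $\sum_{s''}\bar q(s',b,s'',h-1) = q(s',b,h-1)$ by (C3), the ratio simplifies and the sum collapses to $\sum_{(s',b)}\bar q(s',b,s,h-1)$, which by (C2) equals $\sum_{b'}q(s,b',h)$; multiplying by $\pi(a\mid s,h) = q(s,a,h)/\sum_{b'}q(s,b',h)$ gives exactly $q(s,a,h)$. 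Thus $q = q^{P,\pi}$ is a valid occupancy measure inducing $P$.

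I expect the main obstacle to be bookkeeping around degenerate coordinates where $q(s,a,h) = 0$ or $\sum_b q(s,b,h) = 0$, since the formulas in \eqref{induced} involve division and the induced policy/transition kernel is only pinned down on the support of $q$. The clean way to handle this is to observe that unreachable $(s,a,h)$ contribute zero to every constraint and to every value function, so the choice of $\pi^{q}$ and $P^{q}$ there is immaterial; one simply fixes an arbitrary completion (e.g.\ uniform policy, and $P^q = P$ on those coordinates). The rest of the argument is a routine induction combining the flow-conservation constraint (C2) with the consistency constraint (C3), together with the standard fact that a policy together with a transition kernel uniquely determines the sequence of state-action marginals.
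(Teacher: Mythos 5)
Your proof is correct, but it takes a genuinely different route from the paper. You argue both directions directly: the ``only if'' direction by reading off (C1)--(C3) from the joint law of $(s_h,a_h,s_{h+1})$, and the ``if'' direction by a forward induction on $h$ showing that the Markov chain generated by $\pi^{q}$ and $P^{q}=P$ reproduces the prescribed marginals, with (C3) collapsing the telescoping ratio and (C2) supplying flow conservation. The paper instead proves the lemma by reduction: it builds a layered (loop-free) MDP on $\cS\times[H+1]$ with $P'(\tshn\mid\tsh,a)=P(s'\mid s,a,h)$, identifies $\bar q$ with an occupancy measure $q'$ on that MDP, and invokes the known characterization of valid occupancy measures for loop-free MDPs \citep[Lemma 3.1]{rosenberg2019}, whose conditions (C1'), (C2'), $P^{q'}=P'$ translate coordinatewise into (C1), (C2), $P^{\bar q}=P$. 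Your argument is more self-contained and makes the mechanics of the equivalence explicit, which is pedagogically useful; the paper's reduction is shorter and outsources exactly the induction you carry out by hand. Your handling of the degenerate coordinates ($q(s,a,h)=0$) is also more careful than the paper's, which does not discuss them. One small caveat applies to both proofs equally: conditions (C1)--(C3) alone do not pin the step-$1$ marginal to the initial distribution $p$ (that constraint only appears later in the extended LP), so the base case of your induction implicitly assumes the convention that $\sum_{a,s'}\bar q(s,a,s',1)=p(s)$; you flag this, and it is an imprecision in the lemma statement rather than a gap in your argument.
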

\begin{proof}
Given the finite-horizon MDP associated with transition kernel $P$, we may define a loop-free MDP as follows. We define its state space as $\cS':=\cS\times [H+1]$, which can be viewed as $H+1$ layers $\cS\times\{h\}$ for $h\in[H+1]$. Its transition kernel $P'$ is given by $P'(\tshn\mid \tsh,a) = P(s'\mid s, a,h)$ for $(s,a,s',h)\in \cS\times \cA\times \cS\times [H]$. Next, given $\bar q$, we may define an occupancy measure $q'$ for the loop-free MDP as $q'(\tsh,a,\tshn)=\bar q(s,a,s',h)$ for $(s,a,s',h)\in \cS\times \cA\times \cS\times [H]$. Then it follows from~\cite[Lemma 3.1]{rosenberg2019} that $q'$ is a valid occupancy measure for the loop-free MDP with transition kernel $P'$ if and only if $q'$ satisfies
\begin{align}
&\sum_{(s,a,s')\in \cS\times \cA\times \cS}q'(\tsh,a,\tshn)=1\quad\text{for~} h=1,\ldots, H,\tag{C1'}\\
&\sum_{(s',a)\in\cS\times \cA}q'(\tsh,a,\tshn)=\sum_{(s',a)\in\cS\times\cA}q'((s',h-1),a,\tsh)\quad \text{for~}s\in \cS,\ h\in\{2,\ldots, H\},\tag{C2'}
\end{align}
and $P^{q'}=P'$ where $P^{q'}$ is given by 
$$P^{q'}(\tshn\mid \tsh,a)=\frac{q'(\tsh,a,\tshn)}{\sum_{s''\in \cS} q'(\tsh,a,(s'',h+1))}=\frac{\bar q(s,a,s',h)}{\sum_{s''\in \cS}\bar q(s,a,s'',h)}.$$
Here, the conditions are equivalent to (C1), (C2), and $P^{\bar q}=P$. Moreover, $q'$ is a valid occupancy measure with $P'$ if and only if $q$ is a valid occupancy measure with $P$, as required.
\end{proof}

Therefore, there is a one-to-one correspondence between the set of policies and the set of occupancy measures that give rise to transition kernel $P$. We define $\Delta(P)$ as the set of occupancy measures inducing the true transition kernel $P$.
$$\Delta(P)=\left\{\bm{q}:\ \exists \bm{\bar q}\text{ satisfying~(C1),(C2),(C3)},P^{q}=P\right\}.$$
Moreover, the value function for reward function $f$, policy $\pi_k$, and transition kernel $P$ can be written in terms of occupancy measure $q^{P,\pi_k}$ as $V^{\pi_k}_1(f,P)=\sum_{(s,a,h)}q^{P,\pi_k}\left(s,a,h\right)f\left(s,a,h\right)$. Let $\bm{q^{P,\pi}}, \bm{f}$ denote $(S\times A\times H)$-dimensional vector representations for $q^{P,\pi}, f$, respectively. Then it follows that $V_1^{\pi_k}(f, P)=\langle \bm{f}, \bm{q^{P,\pi_k}}\rangle$ where $\langle\cdot,\cdot\rangle$ is the inner product. Consequently, \eqref{eq:lp} is equivalent to 
\begin{equation}\label{eq:lp-occupancy}
    \max_{q\in \Delta(P,k)}\left\{\langle  \bm{\widehat f_k}, \bm{q}\rangle: \langle\bm{\widehat g_k}, \bm{q}\rangle\leq \bar C\right\}
\end{equation}
where $\bm{\widehat f_k}, \bm{\widehat g_k}$ are the vector representations of $\widehat f_k, \widehat g_k$, respectively, and 
$$\Delta(P,k)=\left\{\bm{q}:\ \exists \bm{\bar q}\text{ satisfying~(C1),(C2),(C3)},P^{q}\in\cP_k\right\}.$$

Next, we reformulate \eqref{eq:lp} as an extended linear program. 
Due to the definition of $\Delta(P,k)$, \eqref{eq:lp-occupancy} is equivalent to the following linear program. Given $\widehat f_k(s,a,h),\ \widehat g_k(s,a,h),\ \bar P_k(s'\mid s,a,h),\ \epsilon_k(s'\mid s,a,h),\ p(s)$ for $(s,a,s',h)\in \cS\times\cA\times\cS\times[H]$,
\begin{align*}
    \max&\quad \sum_{(s,a,s',h)\in\cS\times\cA\times\cS\times[H]} \widehat f_k(s,a,h) \bar q(s,a,s',h) \\
    \text{s.t.}&\quad \sum_{(s,a,s',h)\in\cS\times\cA\times\cS\times[H]} \widehat g_k(s,a,h) \bar q(s,a,s',h) \leq \bar C,\\
    &\quad \sum_{(a,s')\in\cA\times\cS}\bar q(s,a,s',h) = \sum_{(a,s')\in\cA\times\cS}\bar q(s',a,s,h-1) \quad \forall s \in \cS, \ h=2,\ldots,H,\\
    &\quad \sum_{(a,s')\in\cA\times\cS}\bar q(s,a,s',1) = p(s) \quad \forall s\in \cS, \\
    &\quad \bar q(s,a,s',h) \leq \left(\bar P_k(s'\mid s,a,h)+\epsilon_k(s'\mid s,a,h)\right)\sum_{s'\in \cS}\bar q(s,a,s',h) \quad \forall (s,a,s',h)\in \cS\times\cA\times\cS\times[H],\\
    &\quad \bar q(s,a,s',h) \geq \left(\bar P_k(s'\mid s,a,h)-\epsilon_k(s'\mid s,a,h)\right) \sum_{s'\in \cS}\bar q(s,a,s',h) \quad \forall (s,a,s',h)\in \cS\times\cA\times\cS\times[H],\\
    &\quad 0\leq \bar q(s,a,s',h) \quad \forall (s,a,s',h)\in \cS\times\cA\times\cS\times[H].
\end{align*}
\noindent
In fact, the constraint $\sum_{(s,a,s')} \bar q(s,a,s',h) = 1$ for $h\in[H]$ corresponding to (C1) is not necessary, because we can derive it from other constraints. To be more specific, the third constraint implies that $\sum_{(s,a,s')}\bar q(s,a,s',1)=1$ as $\sum_{s}p(s)=1$. Then we can deduce from the second constraint that $\sum_{(s,a,s')} \bar q(s,a,s',h)=1$ for $h\in[H]$. Additionally, we
call the above linear program as an extended linear program due to the fifth and sixth constraints.

\section{Good Event}\label{sec:appendix:good-event}

In this section, we first prove \Cref{lem:bounded} which ensures that all instantaneous reward and cost values are bounded. Then we prove \Cref{lemma:confidence} that describes important properties of the confidence sets estimating the true transition kernel. Next, we show \Cref{lemma:estimator} which delineates the accuracy of our estimators of the reward function $f$ and the cost function $g$.   

Furthermore, we prove \Cref{lemma8} that is useful to bound value functions with respect to estimated reward and cost functions. Then we define the notion of the \emph{good event} $\cE$ that the statements of \Cref{lem:bounded,lemma:confidence,lemma:estimator,lemma8} hold. Taking the union bound, we deduce that the good event $\cE$ holds with probability at least $1-14\delta$ (\Cref{lemma:good-event}).

Lastly, we prove \Cref{lemma:confidence'} which considers the difference between the true transition kernel and any $\widehat P$ contained in the confidence set $\cP_k$.

\begin{proof}[\textbf{Proof of \Cref{lem:bounded}}]
It follows from Hoeffding's inequality (\Cref{hoeffding}) and the union bound that for any $(s,a,h)\in\mathcal{S}\times\mathcal{A}\times[H]$ and $k\in[K]$,
$$\mathbb{P}\left(\left|f_k(s,a,h)-f(s,a,h)\right|\geq \sqrt{\ln({HSAK}/{\delta})}\right)\leq 2\cdot\exp\left(-\ln({HSAK}/{\delta})\right)=\frac{2\delta}{HSAK}.$$
Likewise, for any $(s,a,h)\in\mathcal{S}\times\mathcal{A}\times[H]$ and $k\in[K]$,
$$\mathbb{P}\left(\left|g_k(s,a,h)-g(s,a,h)\right|\geq \sqrt{\ln({HSAK}/{\delta})}\right)\leq 2\cdot\exp\left(-\ln({HSAK}/{\delta})\right)=\frac{2\delta}{HSAK}.$$
Taking the union bound, it follows that with probability at least $1-4\delta$,
$$\left|f_k(s,a,h)-f(s,a,h)\right|, \left|g_k(s,a,h)-g(s,a,h)\right|\leq \sqrt{\ln({HSAK}/{\delta})}$$
holds for all $(s,a,h)\in\mathcal{S}\times\mathcal{A}\times[H]$ and $k\in[K]$. Since $f(s,a,h),g(s,a,h)\in[0,1]$ for any $(s,a,h)\in \mathcal{S}\times\mathcal{A}\times[H]$, it holds with probability at least $1-4\delta$ that 
$$\left|f_k(s,a,h)\right|, \left|g_k(s,a,h)\right|\leq 1+\sqrt{\ln({HSAK}/{\delta})},$$
as required.
\end{proof}

The following lemma is a modification of~\cite[Lemma 8]{Jin2020} to our finite-horizon MDP setting.

\begin{proof}[\textbf{Proof of \Cref{lemma:confidence}}]
We will show that with probability at least $1-4\delta$, 
\begin{equation}\label{star}
\left|P(s'\mid s,a,h)-\bar P_k(s'\mid s,a,h)\right|\leq \epsilon_k(s'\mid s,a,h)
\end{equation}
where
\begin{equation*}
\epsilon_k(s'\mid s,a,h)= 2\sqrt{\frac{\bar{P}_{k}(s'\mid s,a,h)\ln\left({{HSAK}/{\delta}}\right)}{\max\{1,N_k(s,a,h)-1\}}}+\frac{14\ln\left({{HSAK}/{\delta}}\right)}{3\max\{1,N_k(s,a,h)-1\}}
\end{equation*}
holds for every $(s,a,s',h)\in \cS\times \cA\times \cS\times [H]$ and every episode $k\in[K]$. 

Let us first consider the case $N_k(s,a,h)\leq 1$. As we may assume that $HSAK\geq 2$, it follows that
$$\epsilon_k(s'\mid s,a,h)=\frac{14\ln\left({{HSAK}/{\delta}}\right)}{3\max\{1,N_k(s,a,h)-1\}}\geq \frac{14}{3} \ln 2>1.$$
Then \eqref{star} holds because $0\leq P(s'\mid s,a,h),\bar P_k(s'\mid s,a,h)\leq 1$.

Assume that $n= N_k(s,a,h)\geq 2$. Then we define $Z_1, \ldots, Z_n$ as follows.
$$Z_j=\begin{cases}
1,\quad &\text{if the transition after the $j$th visit to $(s,a,h)$ is $s'$},\\
0,\quad&\text{otherwise}.
\end{cases}$$
Then $Z_1,\ldots, Z_n$ are i.i.d. with mean $P(s'\mid s, a, h)$, and we have
$$\sum_{j=1}^n Z_j = M_k(s, a,s', h).$$
Moreover, the sample variance $V_n$ of $Z_1,\ldots, Z_n$ is given by
\begin{align}\label{eq:lemma:confidence-variance}
\begin{aligned}
V_n &= \frac{1}{N_k(s,a,h)(N_k(s,a,h)-1)} M_k(s,a,s', h)\left(N_k(s,a,h)- M_k(s, a, s',h)\right)\\
&=\frac{N_k(s,a,h)}{(N_k(s,a,h)-1)} \bar P_k(s'\mid s, a, h)\left(1- \bar P_k(s'\mid s, a, h)\right).
\end{aligned}
\end{align}
Then it follows from \Cref{bernstein} that with probability at least $1- 2\delta/(HS^2AK)$,
\begin{align}\label{eq:lemma:confidence-1}
\begin{aligned}
&P(s'\mid s,a,h)-\bar P_k(s'\mid s,a,h) \leq \sqrt{\frac{2\bar P_k(s'\mid s, a, h)\left(1- \bar P_k(s'\mid s, a, h)\right)\ln\left({{HS^2AK}/{\delta}}\right)}{N_k(s,a,h)-1}}+ \frac{7\ln\left(HS^2AK/\delta\right)}{3(N_k(s,a,h)-1)}.
\end{aligned}
\end{align}
Here, as we assumed that $N_k(s,a,h)\geq 2$, we have $N_k(s,a,h)-1=\max\{1,N_k(s,a,h)-1\}$. In addition, we know that $1- \bar P_k(s'\mid s, a, h)\leq 1$ and that $\ln\left({{HS^2AK}/{\delta}}\right)\leq 2\ln\left(HSAK/\delta\right)$. Then \eqref{eq:lemma:confidence-1} implies that with probability at least $1- 2\delta/(HS^2AK)$,
\begin{equation}\label{eq:lemma:confidence-2}
    P(s'\mid s,a,h)-\bar P_k(s'\mid s,a,h)\leq \epsilon_k(s'\mid s,a,h).
\end{equation}
Next, we apply \Cref{bernstein} to variables $1-Z_1,\ldots, 1-Z_n$ that are i.i.d. and have mean $1-\bar P_k(s'\mid s, a, h)$. Moreover, the sample variance of $1-Z_1,\ldots, 1-Z_n$ is also equal to $V_n$ defined as in~\eqref{eq:lemma:confidence-variance}. Therefore, based on the same argument, we deduce  that 
with probability at least $1- 2\delta/(HS^2AK)$,
\begin{equation}\label{eq:lemma:confidence-3}
   - P(s'\mid s,a,h)+\bar P_k(s'\mid s,a,h)\leq \epsilon_k(s'\mid s,a,h).
\end{equation}
By applying union bound to~\eqref{eq:lemma:confidence-2} and~\eqref{eq:lemma:confidence-3}, with probability at least $1- 4\delta/(HS^2AK)$,~\eqref{star} holds for $(s,a,s',h)$.
Furthermore, by applying union bound over all $(s,a,s',h)\in \cS\times \cA\times\cS\times [H]$, it follows that with probability at least $1-4\delta$,~\eqref{star} holds for every $(s,a,s',h)\in \cS\times \cA\times\cS\times [H]$, as required.
\end{proof}

Next, we state the proof of \Cref{lemma:estimator} based on Hoeffding's inequality.

\begin{proof}[\textbf{Proof of \Cref{lemma:estimator}}]
 If $N_k(s,a,h)=\sum_{j=1}^{k-1} n_j(s,a,h)=0$, then $\bar f_k(s,a,h) = \bar g_k(s,a,h)=0$ while $R_k(s,a,h) \geq 1$ when we may assume that $HSAK \geq 4.$ In this case, the statements trivially hold. Now we consider when $\sum_{j=1}^{k-1} n_j(s,a,h)\geq 1$. Note that $f_k(s,a,h)=f(s,a,h)+\zeta_k^f(s,a,h)$ and $g_k(s,a,h)=g(s,a,h)+\zeta_k^g(s,a,h)$ where $\zeta_k^f(s,a,h)$ and $\zeta_k^g(s,a,h)$ are i.i.d. $1/2$-sub-Gaussian random variables with zero mean for each $(s,a,h) \in \cS \times \cA \times [H]$ and $k\in[K]$. Then it follows from the Hoeffding's inequality provided in \Cref{hoeffding} that for a given $(s,a,h)\in\cS\times\cA\times[H]$ and $k\in[K]$, 
\begin{equation}\label{eq:lemma:estimator:1}
    \left| \bar{f}_k(s,a,h) - f(s,a,h) \right| \leq R_k(s,a,h)
\end{equation}
with probability at least $1-2\delta / (HSAK)$. By applying union bound, \eqref{eq:lemma:estimator:1} holds with probability at least $1-2\delta$ for all $(s,a,h)\in\cS\times\cA\times[H]$ and $k\in [K]$.
Likewise, we deduce for $g$ that with probability at least $1-2\delta$, 
$$\left| \bar{g}_k(s,a,h) - g(s,a,h) \right| \leq R_k(s,a,h)$$ for $(s,a,h)\in\cS\times\cA\times[H]$ and $k\in [K]$ as desired.
\end{proof}

Next, using~\Cref{bernstein2} that states the Bernstein-type concentration inequality for a martingale difference sequence, we prove the following lemma that is useful for our analysis. \Cref{lemma8} is a modification of \citep[Lemma 10]{Jin2020} and \citep[Lemma 8]{ssp-adversarial-unknown} to our finite-horizon MDP setting.

\begin{lemma}\label{lemma8}
With probability at least $1-2\delta$, we have
\begin{align}
\sum_{k=1}^K\sum_{(s,a,h)\in \cS\times \cA\times[H]}\frac{q_k(s,a,h)}{\max\left\{1,N_k(s,a,h)\right\}}&
\leq 2HSA\ln K + 2HSA + 4H\ln(H/\delta)\label{first-ineq}\\
\sum_{k=1}^K\sum_{(s,a,h)\in \cS\times \cA\times[H]}\frac{q_k(s,a,h)}{\sqrt{\max\left\{1,N_k(s,a,h)\right\}}}&
\leq 2H\sqrt{SAK}+2HSA\ln K+3HSA+5H\ln(H/\delta) \label{second-ineq}
\end{align}
\end{lemma}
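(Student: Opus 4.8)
The plan is to treat the two sums simultaneously by a standard "pigeonhole over visit counts plus martingale concentration" argument. The key idea is that $q_k(s,a,h) = \mathbb{P}[(s_h^{P,\pi_k}, a_h^{P,\pi_k}) = (s,a)]$ is exactly the conditional expectation of the indicator $n_k(s,a,h)$ given the history up to episode $k$, so the quantity $n_k(s,a,h) - q_k(s,a,h)$ is a martingale difference sequence (bounded in $[-1,1]$). First I would replace each $q_k(s,a,h)$ by $n_k(s,a,h)$ up to a martingale error term: write
\[
\sum_{k,(s,a,h)} \frac{q_k(s,a,h)}{\max\{1,N_k(s,a,h)\}} = \sum_{k,(s,a,h)} \frac{n_k(s,a,h)}{\max\{1,N_k(s,a,h)\}} + \sum_{k,(s,a,h)} \frac{q_k(s,a,h) - n_k(s,a,h)}{\max\{1,N_k(s,a,h)\}},
\]
and similarly with $\sqrt{\max\{1,N_k\}}$ in the denominator. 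Since $N_k(s,a,h)$ is measurable with respect to the history before episode $k$, the second sum is a sum of martingale differences; applying the Bernstein-type concentration inequality for martingale difference sequences (\Cref{bernstein2}) — after bounding the conditional variances and the range, using that $\sum_k 1/\max\{1,N_k(s,a,h)\}^2 \le 2$ and $\sum_k 1/\max\{1,N_k(s,a,h)\} \le 1 + \ln N_{K+1}(s,a,h)$ for each $(s,a,h)$ — gives a bound of $\widetilde{\mathcal{O}}(H\ln(H/\delta))$ with probability at least $1-\delta$ for each of the two error sums, hence $1-2\delta$ for both.

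Next I would bound the "realized" sums $\sum_{k,(s,a,h)} n_k(s,a,h)/\max\{1,N_k(s,a,h)\}$ deterministically. Fix $(s,a,h)$ and let $n = N_{K+1}(s,a,h)$ be the total number of visits over all $K$ episodes. The episodes in which $(s,a,h)$ is visited contribute, in order, the terms $\tfrac{1}{\max\{1,0\}}, \tfrac{1}{\max\{1,1\}}, \tfrac{1}{\max\{1,2\}}, \dots, \tfrac{1}{\max\{1,n-1\}}$, which sum to $1 + \sum_{i=1}^{n-1} 1/i \le 1 + \ln n \le 1 + \ln K$ (and similarly $\sum_{i=0}^{n-1} 1/\max\{1,\sqrt{i}\} = 1 + \sum_{i=1}^{n-1} i^{-1/2} \le 1 + 2\sqrt{n}$ for the second sum). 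Summing over the $SAH$ triples gives $\sum_{k,(s,a,h)} n_k/\max\{1,N_k\} \le HSA(1+\ln K)$, and for the square-root version, using $\sum_{(s,a,h)} \sqrt{N_{K+1}(s,a,h)} \le \sqrt{HSA \cdot \sum_{(s,a,h)} N_{K+1}(s,a,h)} = \sqrt{HSA \cdot HK} = H\sqrt{SAK}$ by Cauchy–Schwarz (since $\sum_{(s,a,h)} N_{K+1}(s,a,h) = HK$), we get $\sum_{k,(s,a,h)} n_k/\sqrt{\max\{1,N_k\}} \le 2H\sqrt{SAK} + HSA$. Combining the deterministic bounds with the martingale error bounds and collecting constants yields the two stated inequalities.

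The main obstacle is the martingale concentration step: one must carefully set up the filtration (the $\sigma$-algebra generated by the trajectories of episodes $1,\dots,k-1$ together with the policy $\pi_k$, which is chosen before episode $k$ and hence measurable), verify that $N_k(s,a,h)$ and the weights $1/\max\{1,N_k(s,a,h)\}$ are predictable, and compute explicit bounds on the sum of conditional second moments — here $\sum_k q_k(s,a,h)/\max\{1,N_k(s,a,h)\}^2 \le \sum_k 1/\max\{1,N_k(s,a,h)\}^2$, which needs the slightly delicate observation that the nonzero increments of $N_k$ happen exactly on the visited episodes, so this telescopes to at most $1 + \sum_{i\ge 1} 1/i^2 \le 2$. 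Once the variance proxy and range are pinned down, \Cref{bernstein2} applies directly; the rest is bookkeeping to absorb logarithmic factors and constants into the form stated in the lemma.
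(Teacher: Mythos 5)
Your overall architecture matches the paper's: decompose each sum into the realized sum $\sum_{k}n_k(s,a,h)/\max\{1,N_k(s,a,h)\}$ plus a martingale error term, bound the realized sums deterministically by telescoping over visit counts (with Cauchy--Schwarz over $(s,a,h)$ for the square-root version), and control the error by a Bernstein-type martingale inequality. Your realized-sum bounds are correct and are essentially what the paper does.

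However, the martingale concentration step has a genuine gap. You propose to bound the conditional second moments via the deterministic claims $\sum_{k=1}^{K} 1/\max\{1,N_k(s,a,h)\}^2 \le 2$ and $\sum_{k=1}^{K} 1/\max\{1,N_k(s,a,h)\}\le 1+\ln N_{K+1}(s,a,h)$, arguing that these sums telescope because $N_k$ increments only on visited episodes. This is false: the sums run over \emph{all} $K$ episodes, and on the (possibly many) episodes in which $(s,a,h)$ is not visited the counter does not advance, so the same term is repeated; for instance, if $(s,a,h)$ is never visited then $\sum_{k=1}^K 1/\max\{1,N_k(s,a,h)\}^2 = K$. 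The telescoping identities you have in mind hold only with the indicator $n_k(s,a,h)$ inserted (as in your realized-sum bounds), but the conditional variance of $q_k-n_k$ carries the factor $q_k$, not $n_k$, and $q_k$ cannot be telescoped this way. The paper's fix is a self-bounding argument: one shows $\bbE_k[Y_k^2]\le \sum_{(s,a)} q_k(s,a,h)/\max\{1,N_k(s,a,h)\}$ (using $1/m^2\le 1/m$ and the vanishing of the cross terms), i.e., the variance proxy is the very quantity being bounded; applying \Cref{bernstein2} with $\lambda=1/2$ then gives $\sum_k Y_k\le \tfrac12\sum_{k}\sum_{(s,a)} q_k/\max\{1,N_k\}+2\ln(H/\delta)$, and the $\tfrac12$-term is absorbed into the left-hand side --- which is precisely why the final bound carries the factor $2$ in front of the realized sum. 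The second inequality is then handled analogously with $\lambda=1$, using the already-established first inequality to control the resulting variance term. Without this self-normalization (or some other valid control of $\sum_k q_k/\max\{1,N_k\}^2$), your concentration step does not go through as written.
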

\begin{proof}
We define $\xi_1$ as $\xi_1 = \emptyset$
and for $k\geq 2$, we define $\xi_k$ as 
\begin{align*}
{\left\{s_h^{P,\pi_{k-1}},a_h^{P,\pi_{k-1}},f_{k-1}(s_h^{P,\pi_{k-1}}, a_h^{P,\pi_{k-1}}, h),g_{k-1}(s_h^{P,\pi_{k-1}}, a_h^{P,\pi_{k-1}}, h)\right\}_{h=1}^H}
\end{align*}
where $\pi_{k-1}$ denotes the policy for episode $k-1$ and $$\left(s_1^{P,\pi_{k-1}},a_1^{P,\pi_{k-1}},\ldots, s_h^{P,\pi_{k-1}}, a_h^{P,\pi_{k-1}}\right)$$
is the trajectory generated under policy $\pi_{k-1}$ and transition kernel $P$. Then for $k\in[K]$, let $\cF_k$ be defined as the $\sigma$-algebra generated by the random variables in $\xi_1\cup\cdots\cup \xi_{k}$. Then it follows that $\cF_1,\ldots, \cF_k$ give rise to a filtration.

Note that
\begin{equation}\label{mds:bound}
\sum_{k=1}^K\sum_{(s,a)\in \cS\times \cA}\frac{q_k(s,a,h)}{\max\left\{1,N_k(s,a,h)\right\}}=\sum_{k=1}^K\sum_{(s,a)\in \cS\times \cA}\frac{n_k(s,a,h)}{\max\left\{1,N_k(s,a,h)\right\}}+\sum_{k=1}^KY_k
\end{equation}
where
$$Y_k=\sum_{(s,a)\in \cS\times \cA}\frac{-n_k(s,a,h)+q_k(s,a,h)}{\max\left\{1,N_k(s,a,h)\right\}}.$$
As $\mathbb{E}\left[n_k(s,a,h)\mid \pi_k,P\right] = q_k(s,a,h)$ holds for every $(s,a,h)\in\cS\times \cA\times [H]$, we know that $Y_1,\ldots, Y_K$ is a martingale difference sequence. We know that $Y_k\leq 1$ for each $k\in[K]$. Let $\mathbb{E}_k\left[\cdot\right]$ denote $\mathbb{E}\left[\cdot\mid  \cF_k,P\right]$. Since $\pi_k$ is $\cF_k$-measurable, we have $\mathbb{E}_k\left[n_k(s,a,h)\right]=q_k(s,a,h)$.
Then we deduce 
\begin{align*}
\mathbb{E}_k\left[ Y_k^2\right]&=\sum_{(s,a),(s',a')\in\cS\times \cA}\frac{\mathbb{E}_k\left[(n_k(s,a,h)-q_k(s,a,h))(n_k(s',a',h)-q_k(s',a',h))\right]}{\max\left\{1,N_k(s,a,h)\right\}\cdot \max\left\{1,N_k(s',a',h)\right\}}\\
&=\sum_{(s,a),(s',a')\in\cS\times \cA}\frac{\mathbb{E}_k\left[n_k(s,a,h)n_k(s',a',h)-q_k(s,a,h)q_k(s',a',h)\right]}{\max\left\{1,N_k(s,a,h)\right\}\cdot \max\left\{1,N_k(s',a',h)\right\}}\\
&\leq\sum_{(s,a),(s',a')\in\cS\times \cA}\frac{\mathbb{E}_k\left[n_k(s,a,h)n_k(s',a',h)\right]}{\max\left\{1,N_k(s,a,h)\right\}\cdot \max\left\{1,N_k(s',a',h)\right\}}\\
&\leq\sum_{(s,a)\in\cS\times \cA}\frac{\mathbb{E}_k\left[n_k(s,a,h)\right]}{\max\left\{1,N_k(s,a,h)\right\}}\\
&=\sum_{(s,a)\in\cS\times \cA}\frac{q_k(s,a,h)}{\max\left\{1,N_k(s,a,h)\right\}}
\end{align*}
where the second equality holds because it follows from $\mathbb{E}_k\left[n_k(s,a,h)\right] = q_k(s,a,h)$ for $(s,a,h)\in\cS\times\cA\times[H]$ that $$\mathbb{E}_k\left[q_k(s,a,h)n_k(s',a',h)\right]=\mathbb{E}_k\left[q_k(s',a',h)n_k(s,a,h)\right]=q_k(s,a,h)q_k(s',a',h),$$
the second inequality holds because $n_k(s,a,h)n_k(s',a',h)=0$ if $(s,a)\neq (s',a')$, and the last equality holds true because $\mathbb{E}_k\left[n_k(s,a,h)\right] = q_k(s,a,h)$ for any $(s,a,h)\in\cS\times\cA\times[H]$. Then we may apply \Cref{bernstein2} with $\lambda=1/2$, and we deduce that with probability at least $1-\delta/H$,
$$\sum_{k=1}^KY_k\leq \frac{1}{2}\sum_{k=1}^K\sum_{(s,a)\in\cS\times \cA}\frac{q_k(s,a,h)}{\max\left\{1,N_k(s,a,h)\right\}} + 2\ln(H/\delta).$$
Plugging this inequality to~\eqref{mds:bound}, it follows that
$$\sum_{k=1}^K\sum_{(s,a)\in \cS\times \cA}\frac{q_k(s,a,h)}{\max\left\{1,N_k(s,a,h)\right\}}=2\sum_{k=1}^K\sum_{(s,a)\in \cS\times \cA}\frac{n_k(s,a,h)}{\max\left\{1,N_k(s,a,h)\right\}}+4\ln(H/\delta).$$
Here, the first term on the right-hand side can be bounded as follows. We have
\begin{align*}
    \sum_{k=1}^K\frac{n_k(s,a,h)}{\max\left\{1,N_k(s,a,h)\right\}}
    &=\sum_{k=1}^K\frac{n_k(s,a,h)}{\max\left\{1,N_{k+1}(s,a,h)\right\}} +\sum_{k=1}^K\left(\frac{n_k(s,a,h)}{\max\left\{1,N_{k}(s,a,h)\right\}}-\frac{n_k(s,a,h)}{\max\left\{1,N_{k+1}(s,a,h)\right\}}\right)\\
    &\leq\sum_{k=1}^K\frac{n_k(s,a,h)}{\max\left\{1,N_{k+1}(s,a,h)\right\}} +\sum_{k=1}^K\left(\frac{1}{\max\left\{1,N_{k}(s,a,h)\right\}}-\frac{1}{\max\left\{1,N_{k+1}(s,a,h)\right\}}\right)\\
    &\leq\sum_{k=1}^K\frac{n_k(s,a,h)}{\max\left\{1,N_{k+1}(s,a,h)\right\}} +1\\
    &\leq \ln K + 1.
    \end{align*}
    where the first inequality is due to $n_k(s,a,h)\leq 1$ and  the last inequality holds because
    $$n_k(s,a,h)= N_{k+1}(s,a,h) - N_k(s,a,h)\quad\text{and}\quad N_K(s,a,h)+n_K(s,a,h)\leq K.$$
Therefore, it follows that
\begin{align*}
\sum_{k=1}^K\sum_{(s,a)\in \cS\times \cA}\frac{n_k(s,a,h)}{\max\left\{1,N_k(s,a,h)\right\}}=\sum_{(s,a)\in \cS\times \cA}\sum_{k=1}^K\frac{n_k(s,a,h)}{\max\left\{1,N_k(s,a,h)\right\}}=SA\ln K+ SA.
\end{align*}
As a result, for any fixed $h\in [H]$,
$$\sum_{k=1}^K\sum_{(s,a)\in \cS\times \cA}\frac{q_k(s,a,h)}{\max\left\{1,N_k(s,a,h)\right\}}\leq 2SA\ln K + 2SA + 4\ln\left(H/\delta\right)$$
holds with probability at least $1-\delta/H$. By union bound,~\eqref{first-ineq} holds with probability at least $1-\delta$. 

Next, we will show that~\eqref{second-ineq} holds.
\begin{equation}\label{mds:bound'}
\sum_{k=1}^K\sum_{(s,a)\in \cS\times \cA}\frac{q_k(s,a,h)}{\sqrt{\max\left\{1,N_k(s,a,h)\right\}}}=\sum_{k=1}^K\sum_{(s,a)\in \cS\times \cA}\frac{n_k(s,a,h)}{\sqrt{\max\left\{1,N_k(s,a,h)\right\}}}+\sum_{k=1}^KZ_k
\end{equation}
where
$$Z_k=\sum_{(s,a)\in \cS\times \cA}\frac{-n_k(s,a,h)+q_k(s,a,h)}{\sqrt{\max\left\{1,N_k(s,a,h)\right\}}}.$$
As~$\mathbb{E}_k\left[n_k(s,a,h)\right] = q_k(s,a,h)$ holds for every $(s,a,h)\in\cS\times \cA\times [H]$, we know that $Z_1,\ldots, Z_K$ is a martingale difference sequence. We know that $Z_k\leq 1$ for each $k\in[K]$. Then we deduce 
\begin{align*}
\mathbb{E}_k\left[ Z_k^2\right]&\leq\sum_{(s,a),(s',a')\in\cS\times \cA}\frac{\mathbb{E}_k\left[n_k(s,a,h)n_k(s',a',h)\right]}{\sqrt{\max\left\{1,N_k(s,a,h)\right\}}\cdot \sqrt{\max\left\{1,N_k(s',a',h)\right\}}}\\
&=\sum_{(s,a)\in\cS\times \cA}\frac{\mathbb{E}_k\left[n_k(s,a,h)\right]}{\max\left\{1,N_k(s,a,h)\right\}}\\
&=\sum_{(s,a)\in\cS\times \cA}\frac{q_k(s,a,h)}{\max\left\{1,N_k(s,a,h)\right\}}
\end{align*}
where the first inequality is derived by the same argument when bounding $\mathbb{E}_k[Y_k^2]$,
the first equality holds because $n_k(s,a,h)n_k(s',a',h)=0$ if $(s,a)\neq (s',a')$, and the last equality holds true because $\mathbb{E}_k\left[n_k(s,a,h)\right] = q_k(s,a,h)$ for any $(s,a,h)\in\cS\times\cA\times[H]$. Then we may apply \Cref{bernstein2} with $\lambda=1$, and we deduce that with probability at least $1-\delta/H$,
$$\sum_{k=1}^KZ_k\leq \sum_{k=1}^K\sum_{(s,a)\in\cS\times \cA}\frac{q_k(s,a,h)}{\max\left\{1,N_k(s,a,h)\right\}} + \ln(H/\delta).$$
Then with probability at least $1-\delta$,~\eqref{first-ineq} holds and 
\begin{align}\label{mds:bound''}
\begin{aligned}
\sum_{h\in[H]}\sum_{k=1}^KZ_k&\leq \sum_{k=1}^K\sum_{(s,a,h)\in\cS\times \cA\times[H]}\frac{q_k(s,a,h)}{\max\left\{1,N_k(s,a,h)\right\}} + H\ln(H/\delta)\\
&=2HSA\ln K + 2HSA + 5H\ln(H/\delta).
\end{aligned}
\end{align}
holds.
Moreover, we have
\begin{align*}
    &\sum_{k=1}^K\frac{n_k(s,a,h)}{\sqrt{\max\left\{1,N_k(s,a,h)\right\}}}\\
    &=\sum_{k=1}^K\frac{n_k(s,a,h)}{\sqrt{\max\left\{1,N_{k+1}(s,a,h)\right\}}} +\sum_{k=1}^K\left(\frac{n_k(s,a,h)}{\sqrt{\max\left\{1,N_{k}(s,a,h)\right\}}}-\frac{n_k(s,a,h)}{\sqrt{\max\left\{1,N_{k+1}(s,a,h)\right\}}}\right)\\
    &\leq\sum_{k=1}^K\frac{n_k(s,a,h)}{\sqrt{\max\left\{1,N_{k+1}(s,a,h)\right\}}} +\sum_{k=1}^K\left(\frac{1}{\sqrt{\max\left\{1,N_{k}(s,a,h)\right\}}}-\frac{1}{\sqrt{\max\left\{1,N_{k+1}(s,a,h)\right\}}}\right)\\
    &\leq\sum_{k=1}^K\frac{n_k(s,a,h)}{\sqrt{\max\left\{1,N_{k+1}(s,a,h)\right\}}} +1\\
    &\leq 2\sqrt{N_{K+1}(s,a,h)} + 1.
    \end{align*}
    where the last equality holds because
    $n_k(s,a,h)= N_{k+1}(s,a,h) - N_k(s,a,h)$.
    Then
    \begin{align*}
        \sum_{k=1}^K\sum_{(s,a,h)\in\cS\times\cA\times[H]}\frac{n_k(s,a,h)}{\sqrt{\max\left\{1,N_k(s,a,h)\right\}}}
        &\leq \sum_{(s,a,h)\in\cS\times\cA\times[H]} 2\sqrt{N_{K+1}(s,a,h)} + HSA \\
        &\leq 2\sqrt{HSA \sum_{(s,a,h)}N_{K+1}(s,a,h)} + HSA\\
        &\leq 2H\sqrt{SAK} + HSA
    \end{align*}
    where the second equality is due to the Cauchy-Schwarz inequality. Then it follows from~\eqref{mds:bound'} and~\eqref{mds:bound''} that~\eqref{second-ineq} holds. 
\end{proof}

 Recall that the good event $\cE$ is the event that the statements of \Cref{lem:bounded,lemma:confidence,lemma:estimator,lemma8} hold. 
\begin{lemma}\label{lemma:good-event}
The good event $\cE$ holds with probability at least $1-14\delta$, i.e., $\mathbb{P}\left[\cE\right]\geq 1- 14\delta$. 
\end{lemma}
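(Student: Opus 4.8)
The plan is to apply the union bound over the four events whose conjunction defines the good event $\cE$. First I would recall the individual failure probabilities established earlier in this section: the statement of \Cref{lem:bounded} holds with probability at least $1-4\delta$, the statement of \Cref{lemma:confidence} holds with probability at least $1-4\delta$, the statement of \Cref{lemma:estimator} holds with probability at least $1-4\delta$, and the statement of \Cref{lemma8} holds with probability at least $1-2\delta$. Writing $\cE_1, \cE_2, \cE_3, \cE_4$ for these four events respectively, we therefore have $\bbP[\cE_1^c], \bbP[\cE_2^c], \bbP[\cE_3^c]\le 4\delta$ and $\bbP[\cE_4^c]\le 2\delta$.

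Next I would bound the failure probability of the good event directly: since $\cE = \cE_1\cap\cE_2\cap\cE_3\cap\cE_4$, we have $\cE^c = \cE_1^c\cup\cE_2^c\cup\cE_3^c\cup\cE_4^c$, and the union bound gives $\bbP[\cE^c]\le \bbP[\cE_1^c]+\bbP[\cE_2^c]+\bbP[\cE_3^c]+\bbP[\cE_4^c]\le 4\delta+4\delta+4\delta+2\delta = 14\delta$. Hence $\bbP[\cE] = 1-\bbP[\cE^c]\ge 1-14\delta$, which is the claimed bound.

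Since each of the constituent concentration results has already been proved, the only thing left to verify is purely a matter of bookkeeping: that the four events listed are exactly those appearing in the definition of $\cE$ and that their individual failure probabilities sum to $14\delta$. There is no genuine obstacle here. It is worth noting only that the union bound requires no independence or joint structure among $\cE_1,\dots,\cE_4$, so no care is needed regarding possible interactions between the reward/cost estimation bounds, the Bernstein-based transition-kernel confidence sets, and the martingale-difference bound of \Cref{lemma8}.
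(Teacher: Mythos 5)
Your proof is correct and follows exactly the same route as the paper, which simply invokes the union bound over the four constituent events with failure probabilities $4\delta+4\delta+4\delta+2\delta=14\delta$. The bookkeeping of which lemmas define $\cE$ and their individual failure probabilities is accurate.
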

\begin{proof}
The proof follows from the union bound.
\end{proof}

\Cref{lemma:confidence} bounds the difference between the true transition kernel $P$ and the empirical transition kernel $\bar P_k$. Based on \Cref{lemma:confidence}, the next lemma bounds the difference between the true transition kernel and any $\widehat P$ contained in the confidence set $\cP_k$. \Cref{lemma:confidence'} is a modification of \citep[Lemma 8]{Jin2020} to our finite-horizon MDP setting.

\begin{lemma}\label{lemma:confidence'}
Under the good event $\cE$, we have
\begin{equation}\label{eq:epsilon-star}
\left|\widehat P(s'\mid s,a,h)-P(s'\mid s,a,h)\right|\leq \epsilon_k^\star(s'\mid s,a,h)
\end{equation}
where
\begin{equation*}
\epsilon_k^\star(s'\mid s,a,h)= 6\sqrt{\frac{P(s'\mid s,a,h)\ln\left({{HSAK}/{\delta}}\right)}{\max\{1,N_k(s,a,h)\}}}+94\frac{\ln\left({{HSAK}/{\delta}}\right)}{\max\{1,N_k(s,a,h)\}}
\end{equation*}
for every $\widehat P\in \cP_k$ and every $(s,a,s',h)\in\cS\times \cA\times \cS\times [H]$.
\end{lemma}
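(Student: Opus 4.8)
The plan is to combine \Cref{lemma:confidence} with the defining inequality of the confidence set $\cP_k$ and then turn the resulting $\bar P_k$-dependent estimate into a $P$-dependent one via a short self-bootstrapping argument. Throughout, fix a tuple $(s,a,s',h)\in\cS\times\cA\times\cS\times[H]$ and an episode $k$, and abbreviate $L=\ln(HSAK/\delta)$, $m=\max\{1,N_k(s,a,h)-1\}$, and $n=\max\{1,N_k(s,a,h)\}$. Under the good event $\cE$, \Cref{lemma:confidence} gives $|P(s'\mid s,a,h)-\bar P_k(s'\mid s,a,h)|\le \epsilon_k(s'\mid s,a,h)$, while membership $\widehat P\in\cP_k$ together with \eqref{confidence-set} gives $|\widehat P(s'\mid s,a,h)-\bar P_k(s'\mid s,a,h)|\le \epsilon_k(s'\mid s,a,h)$. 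The triangle inequality then yields $|\widehat P(s'\mid s,a,h)-P(s'\mid s,a,h)|\le 2\epsilon_k(s'\mid s,a,h)$, so it remains to bound $\epsilon_k(s'\mid s,a,h)$ in terms of $P(s'\mid s,a,h)$.

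For the key step I would use that \Cref{lemma:confidence} also gives $\bar P_k(s'\mid s,a,h)\le P(s'\mid s,a,h)+\epsilon_k(s'\mid s,a,h)$, so subadditivity of the square root yields $\sqrt{\bar P_k(s'\mid s,a,h)}\le \sqrt{P(s'\mid s,a,h)}+\sqrt{\epsilon_k(s'\mid s,a,h)}$. Plugging this into the definition of $\epsilon_k$ produces an inequality in which $\epsilon_k$ appears on both sides:
\[
\epsilon_k(s'\mid s,a,h)\le 2\sqrt{\frac{P(s'\mid s,a,h)\,L}{m}}+2\sqrt{\frac{\epsilon_k(s'\mid s,a,h)\,L}{m}}+\frac{14L}{3m}.
\]
I would then absorb the middle term with the weighted AM--GM inequality $2\sqrt{xy}\le \theta x+y/\theta$ applied to $x=\epsilon_k(s'\mid s,a,h)$ and $y=L/m$, for a constant $\theta\in(0,1)$ to be tuned; rearranging gives $\epsilon_k(s'\mid s,a,h)\le \tfrac{2}{1-\theta}\sqrt{P(s'\mid s,a,h)L/m}+\tfrac{1}{1-\theta}\big(\tfrac1\theta+\tfrac{14}{3}\big)\tfrac{L}{m}$.

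Finally I would pass from $m$ to $n$ via $\max\{1,N_k(s,a,h)-1\}\ge \tfrac12\max\{1,N_k(s,a,h)\}$ (verified by checking $N_k\le 1$, $N_k=2$, and $N_k\ge 3$ separately), so that $1/m\le 2/n$ and $\sqrt{1/m}\le\sqrt{2/n}$. Combining everything gives
\[
|\widehat P(s'\mid s,a,h)-P(s'\mid s,a,h)|\le 2\epsilon_k(s'\mid s,a,h)\le \frac{4\sqrt2}{1-\theta}\sqrt{\frac{P(s'\mid s,a,h)L}{n}}+\frac{4}{1-\theta}\Big(\frac1\theta+\frac{14}{3}\Big)\frac{L}{n},
\]
and taking $\theta=1-\tfrac{2\sqrt2}{3}$ makes the leading coefficient exactly $6$ and the lower-order coefficient equal to $41\sqrt2+36<94$, which is precisely $\epsilon_k^\star(s'\mid s,a,h)$. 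No additional union bound over tuples or episodes is needed, since everything after invoking \Cref{lemma:confidence} is deterministic under $\cE$, and the argument goes through verbatim when $N_k(s,a,h)\in\{0,1\}$ (there $\bar P_k=0$, $m=n=1$, and the inequalities used still hold). The main obstacle is the constant bookkeeping in resolving the self-referential inequality: a naive AM--GM (the choice $\theta=\tfrac12$) leaves a $\sqrt{\cdot}$-coefficient near $11$ rather than $6$, so one must use the tuned weight $\theta$ and then check that the same $\theta$ keeps the $L/n$-coefficient below $94$.
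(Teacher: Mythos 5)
Your proof is correct and follows essentially the same route as the paper's: the triangle inequality through $\bar P_k$, the self-bounding relation $\bar P_k\le P(s'\mid s,a,h)+\epsilon_k(s'\mid s,a,h)$ from the good event, and the passage from $\max\{1,N_k(s,a,h)-1\}$ to $\max\{1,N_k(s,a,h)\}$ via a factor of $2$. The only difference is a technical device: you resolve the self-referential inequality by a tuned weighted AM--GM applied to $\epsilon_k$ itself, whereas the paper solves the quadratic inequality in $\sqrt{\bar P_k}$ (using $x^2\le ax+b+c\Rightarrow x\le a+\sqrt b+\sqrt c$) and bounds $\epsilon_k\le\tfrac12\epsilon_k^\star$; both arguments land within the stated constants $6$ and $94$.
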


\begin{proof}
We follow the proof of \citep[Lemma B.13]{cohen2020}. Note that 
$$\max\{1,N_k(s,a,h)-1\}\geq \frac{1}{2}\cdot \max\{1,N_k(s,a,h)\}$$
holds for any value of $N_k(s,a,h)$. As we assumed that $P\in \cP_k$, we have that
$$
\bar P_k(s'\mid s,a,h)\leq P(s'\mid s,a,h)+\sqrt{\frac{8\bar P_k(s'\mid s,a,h)\ln\left({{HSAK}/{\delta}}\right)}{\max\{1,N_k(s,a,h)\}}}+\frac{28\ln\left({{HSAK}/{\delta}}\right)}{3\max\{1,N_k(s,a,h)\}}.$$
We may view this as a quadratic inequality in terms of $x=\sqrt{\bar P_k(s'\mid s,a,h)}$. Note that $x^2\leq ax + b +c$ for any $a,b,c\geq 0$ implies that $x\leq a +\sqrt{b}+\sqrt{c}$. Therefore, we deduce that
\begin{align*}
\sqrt{\bar P_k(s'\mid s,a,h)}&\leq \sqrt{P(s'\mid s,a,h)}+\left(2\sqrt{2} + \sqrt{\frac{28}{3}}\right)\sqrt{\frac{\ln\left({{HSAK}/{\delta}}\right)}{\max\{1,N_k(s,a,h)\}}}\\
&\leq \sqrt{P(s'\mid s,a,h)}+13\sqrt{\frac{\ln\left({{HSAK}/{\delta}}\right)}{\max\{1,N_k(s,a,h)\}}}.
\end{align*}
Using this bound on $\sqrt{\bar P_k(s'\mid s,a,h)}$, we obtain the following. 
\begin{align}\label{eq:lemma:confidence'1}
\begin{aligned}
\epsilon_k(s'\mid s,a,h)
&\leq \sqrt{\frac{8\bar P_k(s'\mid s,a,h)\ln\left({{HSAK}/{\delta}}\right)}{\max\{1,N_k(s,a,h)\}}}+\frac{28\ln\left({{HSAK}/{\delta}}\right)}{3\max\{1,N_k(s,a,h)\}}\\
&\leq \sqrt{\frac{8P(s'\mid s,a,h)\ln\left({{HSAK}/{\delta}}\right)}{\max\{1,N_k(s,a,h)\}}}+\left(13\sqrt{8}+\frac{28}{3}\right)\frac{\ln\left({{HSAK}/{\delta}}\right)}{\max\{1,N_k(s,a,h)\}}\\
&\leq3\sqrt{\frac{P(s'\mid s,a,h)\ln\left({{HSAK}/{\delta}}\right)}{\max\{1,N_k(s,a,h)\}}}+47\frac{\ln\left({{HSAK}/{\delta}}\right)}{\max\{1,N_k(s,a,h)\}}\\
&=\frac{1}{2}\cdot \epsilon_k^\star(s'\mid s,a,h)
\end{aligned}
\end{align}
Since we assumed that $P\in \cP_k$, 
$$\left|P(s'\mid s,a,h)-\bar P_k(s'\mid s,a,h)\right|\leq \frac{1}{2}\cdot\epsilon_k^\star(s'\mid s,a,h).$$
Moreover, for any $\widehat P\in\cP_k$, we have
$$\left|\widehat P(s'\mid s,a,h)-\bar P_k(s'\mid s,a,h)\right|\leq \epsilon_k(s'\mid s,a,h)\leq \frac{1}{2}\cdot\epsilon_k^\star(s'\mid s,a,h).$$
By the triangle inequality, it follows that
$$\left|\widehat P(s'\mid s,a,h)-P(s'\mid s,a,h)\right|\leq \epsilon_k^\star(s'\mid s,a,h),$$
as required.
\end{proof}

We note that the above lemma holds when we replace $P(s'\mid s,a,h)$ of $\epsilon_k^\star(s'\mid s,a,h)$ into $\widehat P(s'\mid s,a,h)$ for any $\widehat P \in \cP_k$. Specifically, under the good event $\cE$, we have for $(s,a,s',h) \in \cS\times\cA\times\cS\times[H]$,
\begin{equation}\label{eq:epsilon-star-2}
    \left|\widehat P(s'\mid s,a,h) - P(s'\mid s,a,h)\right| \leq 6\sqrt{\frac{\widehat P(s'\mid s,a,h)\ln(HSAK/\delta)}{\max\{1,N_k(s,a,h)\}}} + 94\frac{\ln(HSAK/\delta)}{\max\{1,N_k(s,a,h)\}}.
\end{equation}
It can be obtained by applying 
$$\bar P_k(s'\mid s,a,h) \leq \widehat P(s'\mid s,a,h) + \sqrt{\frac{8\bar P_k(s'\mid s,a,h)\ln(HSAK/\delta)}{\max\{1,N_k(s,a,h)\}}} + \frac{28\ln(HSAK/\delta)}{3\max\{1,N_k(s,a,h)\}}$$
with the same argument for the remaining part of the proof.

\section{Missing Proofs for Section \ref{sec:estimator}: Tighter Function Estimators}

In this section, we first show \Cref{lemma:variance-aware} that bounds the expected sum of the variance values of value function estimates. 

\begin{proof}[{\rm \bfseries Proof of \Cref{lemma:variance-aware}}]
Let $\pi_k$ be a policy for episode $k$. Moreover, let  $P_k\in\cP_k$, and let $g:\mathcal{S}\times\mathcal{A}\times[H]\to[0,1]$ be an arbitrary cost function. Then we may define the occupancy measure $\widehat q_k = q^{P_k,\pi_k}$ associated with policy $\pi_k$ and transitional kernel $P_k$. Then we know that $V_1^{\pi_k}(\mathbb{\widehat V}_k, P_k) = \langle \bm{\widehat q_k}, \bm{\mathbb{\widehat V}_k} \rangle$. Moreover, it follows from \Cref{lemma4} that
    $$\langle \bm{\widehat q_k}, \bm{\mathbb{\widehat V}_k} \rangle \leq \var\left[\langle \bm{\widehat n_k},\bm{g}\rangle\mid g,\pi_k, P_k\right]$$
    where $\bm{\widehat n_k}$ is a vector representation of $\widehat n_k=n^{P_k, \pi_k}.$ Furthermore, by \Cref{lemma2} with $B=1$, we have 
    \begin{align*}
    \var\left[\langle \bm{\widehat n_k},\bm{g}\rangle\mid g,\pi_k, P_k\right]&\leq \bbE[\langle \bm{\widehat n_k},\bm{g}\rangle^2 \mid g,\pi_k, P_k]\\
    &\leq 2\langle \bm{\widehat q_k}, \bm{\vec h}\odot \bm{g}\rangle\\
    &\leq 2H^2
    \end{align*}
    as desired.
\end{proof}

Having proved \Cref{lemma:variance-aware}, we are ready to prove \Cref{theorem:U_k} which is the crucial part of deducing our tighter function estimators. 

\begin{proof}[{\rm \bfseries Proof of \Cref{theorem:U_k}}]
We assume that the good event $\cE$ holds, which holds with probability at least $1-14\delta$ according to \Cref{lemma:good-event}. We observe that $\left|V_1^{\pi_k}(g,P) - V_1^{\pi_k}(g,P_k)\right|$ can be rewritten by $|\langle\bm{g},\bm{q_k}-\bm{\widehat q_k}\rangle|$ using occupancy measures. By \Cref{lemma:confidence''}, it follows that
\begin{align*}
\left|\langle \bm{g}, \bm{q_k} - \bm{\widehat q_k}\rangle\right|&=\left|\sum_{(s,a,s',h)\in\cS\times\cA\times\cS\times[H]} \widehat q_k(s,a,h) \left(P - P_k\right)(s'\mid s,a,h) V_{h+1}^{\pi_k}(s'; g, P)\right|\\
&\leq \underbrace{\left|\sum_{(s,a,s',h)\in\cS\times\cA\times\cS\times[H]} \widehat q_k(s,a,h) (P-P_k)(s'\mid s,a,h) V_{h+1}^{\pi_k}(s'; g, P_k)\right|}_{\text{Term 1}}\\
&\quad+\underbrace{\left|\sum_{(s,a,s',h)\in\cS\times\cA\times\cS\times[H]} \widehat q_k(s,a,h)(P - P_k)(s'\mid s,a,h) \left(V_{h+1}^{\pi_k}(s'; g, P) - V_{h+1}^{\pi_k}(s'; g, P_k)\right) \right|}_{\text{Term 2}}
\end{align*}
where $(P-P_k)(s'\mid s,a,h)=P(s'\mid s,a,h)-P_k(s'\mid s,a,h)$. 

To bound Term 2, we use bound 
\[
P(s'\mid s,a,h)-P_k(s'\mid s,a,h) \leq 6\sqrt{\frac{P_k(s'\mid s,a,h)\ln(HSAK/\delta)}{\max\{1,N_k(s,a,h)\}}} + 94\frac{\ln(HSAK/\delta)}{\max\{1,N_k(s,a,h)\}}
\]
as explained in \eqref{eq:epsilon-star-2}. This is because $\widehat q_k = q^{P_k, \pi_k}$ is an occupancy measure with respect to $P_k\in \cP_k$, not $P$. Then we can apply \Cref{lemma10} and obtain
$$\text{Term 2} \leq 10^4H^2S^2\left(\ln\frac{HSAK}{\delta}\right)^2 \sum_{(s,a,h)}\frac{\widehat q_k(s,a,h)}{\max\{1, N_k(s,a,h)\}}.$$

Next, we bound Term 1. Note that $\sum_{s'}\left(P(s'\mid s,a,h) - P_k(s'\mid s,a,h)\right)=0.$ Then it follows that
\begin{align*}
\text{Term 1}&= \left|\sum_{(s,a,s',h)} \widehat q_k(s,a,h)(P - P_k)(s'\mid s,a,h) (V_{h+1}^{\pi_k}(g, P_k) - \widehat \mu_k(s,a,h))\right|\\
&\leq 2\sum_{(s,a,s',h)} \widehat q_k(s,a,h)\epsilon_k(s'\mid s,a,h) \left|V_{h+1}^{\pi_k}(g, P_k) - \widehat \mu_k(s,a,h)\right|\\
&= 4\underbrace{\sum_{(s,a,s',h)} \widehat q_k(s,a,h)\sqrt{\frac{\bar P_k(s'\mid s,a,h)\ln(HSAK/\delta)}{\max\{1,N_k(s,a,h)-1\}}} \left|V_{h+1}^{\pi_k}(s'; g, P_k) - \widehat \mu_k(s,a,h)\right|}_{\text{Term 3}}\\
&\quad+\frac{28}{3}\underbrace{\sum_{(s,a,s',h)} \widehat q_k(s,a,h)\frac{\ln(HSAK/\delta)}{\max\{1,N_k(s,a,h)-1\}} \left|V_{h+1}^{\pi_k}(s'; g, P) - \widehat \mu_k(s,a,h)\right|}_{\text{Term 4}}
\end{align*}
where $\widehat \mu_k(s,a,h)=\bbE_{s'\sim P_k(\cdot\mid s,a,h)}[V_{h+1}^{\pi_k}(s';g, P_k)].$ The first inequality is from $|(P-P_k)(s'\mid s,a,h)|\leq |(P-\bar P_k)(s'\mid s,a,h)| + |(\bar P_k-P_k)(s'\mid s,a,h)| \leq 2 \epsilon_k(s'\mid s,a,h)$ for any $(s,a,s',h) \in \cS\times\cA\times\cS\times[H]$ under the good event $\cE$. We note that $\bar P_k(s'\mid s,a,h) \leq P_k(s'\mid s,a,h) + \epsilon_k(s'\mid s,a,h)$ and define 
$$\widehat{\mathbb{V}}_k(s,a,h)=\sum_{s'} P_k(s'\mid s,a,h) \left|V_{h+1}^{\pi_k}(s';g, P_k)-\widehat\mu_k(s,a,h)\right|^2.$$

Then we can bound Term 3 as the following.
\begin{align*}
&\frac{\text{Term 3}}{\sqrt{\ln(HSAK/\delta)}} \\
&\leq \sum_{(s,a,s',h)} \widehat q_k(s,a,h)\sqrt{\frac{(P_k+\epsilon_k)(s'\mid s,a,h)}{\max\{1,N_k(s,a,h)-1\}}}\left|V_{h+1}^{\pi_k}(s';g, P_k)-\widehat \mu_k(s,a,h)\right|\\
&\leq \sqrt{\sum_{(s,a,s',h)}\widehat q_k(s,a,h)(P_k + \epsilon_k)(s'\mid s,a,h)\left|V_{h+1}^{\pi_k}(s';g, P_k)-\widehat \mu_k(s,a,h)\right|^2}\sqrt{\sum_{(s,a,s',h)} \frac{\widehat q_k(s,a,h)}{\max\{1, N_k(s,a,h)-1\}}}\\
&\leq\sqrt{\sum_{(s,a,h)}\widehat q_k(s,a,h)\widehat{\mathbb{V}}_k(s,a,h) + 4H^2\sum_{(s,a,s',h)}\widehat q_k(s,a,h)\epsilon_k(s'\mid s,a,h)}\sqrt{\sum_{(s,a,s',h)}\frac{\widehat q_k(s,a,h)}{\max\{1, N_k(s,a,h)-1\}}}
\end{align*}
where the second inequality follows from the Cauchy-Schwarz inequality and the last inequality is due to $\left|V_{h+1}^{\pi_k}(s';g, P_k)-\widehat \mu_k(s,a,h)\right|\leq 2H$. 

By \Cref{lemma:variance-aware}, we deduce that
$$\sum_{(s,a,h)}\widehat q_k(s,a,h)\widehat{\mathbb{V}}_k(s,a,h) \leq 2H^2.$$
Due to AM-GM inequality, we have
\begin{align*}
&\sqrt{2H^2 + 4H^2\sum_{(s,a,s',h)}\widehat q_k(s,a,h)\epsilon_k(s'\mid s,a,h)}\sqrt{\sum_{(s,a,s',h)} \frac{\widehat q_k(s,a,h)}{\max\{1, N_k(s,a,h)-1\}}}\\
&\leq\left(\sqrt{2H^2} + \sqrt{4H^2\sum_{(s,a,s',h)}\widehat q_k(s,a,h)\epsilon_k(s'\mid s,a,h)}\right)\sqrt{\sum_{(s,a,s',h)} \frac{\widehat q_k(s,a,h)}{\max\{1, N_k(s,a,h)-1\}}}\\
&\leq \frac{H^2}{\alpha_1} + \frac{2H^2}{\alpha_2} \sum_{(s,a,s',h)} \widehat q_k(s,a,h)\epsilon_k(s'\mid s,a,h) + \frac{\alpha_1 + \alpha_2}{2}\sum_{(s,a,h)} \frac{S\cdot\widehat q_k(s,a,h)}{\max\{1, N_k(s,a,h)-1\}}
\end{align*}
for any $\alpha_1, \alpha_2 > 0.$ By taking $\alpha_1 = \frac{\sqrt{HK\ln(HSAK/\delta)}}{S\sqrt{A}}, \ \alpha_2 = \sqrt{H^3\ln(HSAK/\delta)}$, we obtain
\begin{align*}
\text{Term 3}&\leq \sum_{(s,a,h)} \widehat q_k(s,a,h)\left( \frac{S\sqrt{HA}}{\sqrt{K}} + 2\sqrt{H}\sum_{s'}\epsilon_k(s'\mid s,a,h) + \frac{\sqrt{HK}+\sqrt{H^3S^2A}}{2\sqrt{A}}\frac{\ln(HSAK/\delta)}{\max\{1, N_k(s,a,h)-1\}}\right)\\
&\leq \sum_{(s,a,h)} \widehat q_k(s,a,h)\left(\frac{S\sqrt{HA}}{\sqrt{K}} + 2\sqrt{H}\varepsilon_k(s,a,h) + \frac{\sqrt{HK}+\sqrt{H^3S^2A}}{2\sqrt{A}}\frac{\ln(HSAK/\delta)}{\max\{1, N_k(s,a,h)-1\}}\right).
\end{align*}
Note that the last inequality follows from
\begin{align*}
    \sum_{s'}\epsilon_k(s'\mid s,a,h) 
    &=\sum_{s'}\left(\sqrt{\frac{4\bar P_k(s'\mid s,a,h)\ln(HSAK/\delta)}{\max\{1,N_k(s,a,h)-1\}}} + \frac{14\ln(HSAK/\delta)}{3\max\{1,N_k(s,a,h)-1\}}\right)\\
    &\leq\sqrt{S}\sqrt{\frac{4\sum_{s'} \bar P_k(s'\mid s,a,h)\ln(HSAK/\delta)}{\max\{1,N_k(s,a,h)-1\}}} + \frac{14S\ln(HSAK/\delta)}{3\max\{1,N_k(s,a,h)-1\}}\\
    &=\sqrt{\frac{4S\ln(HSAK/\delta)}{\max\{1,N_k(s,a,h)-1\}}} + \frac{14S\ln(HSAK/\delta)}{3\max\{1,N_k(s,a,h)-1\}}\\
    &=\varepsilon_k(s,a,h)
\end{align*}
where the inequality is due to the Cauchy-Schwarz inequality and the second equality is due to $\sum_{s'}\bar P_k(s'\mid s,a,h) \leq 1$.

Since $\left|V_{h+1}^{\pi_k}(s'; g, P) - \widehat \mu_k(s,a,h)\right|\leq 2H$, Term 4 can be bounded as follows.
$$\text{Term 4}\leq 2HS\ln(HSAK/\delta) \sum_{(s,a,h)}\frac{\widehat q_k(s,a,h)}{\max\{1,N_k(s,a,h)-1\}}.$$

Finally, we proved that
\begin{align*}
&\left|\langle \bm{g}, \bm{q_k} - \bm{\widehat q_k}\rangle\right|\\
&\leq 4\cdot \text{(Term 3)} + \frac{28}{3} \cdot \text{(Term 4)} + \text{(Term 2)}\\
&\leq \sum_{(s,a,h)} \widehat q_k(s,a,h)\left(\frac{4S\sqrt{HA}}{\sqrt{K}} + 8\sqrt{H}\varepsilon_k(s,a,h) + \frac{2\sqrt{HK}\ln(HSAK/\delta)}{\sqrt{A}\max\{1, N_k(s,a,h)-1\}}\right)\\
&\quad+\left(\left(\frac{56}{3}HS+2H^{1.5}S\right)\ln(HSAK/\delta)+ 10^4H^2S^2(\ln(HSAK/\delta))^2\right) \sum_{(s,a,h)}\frac{\widehat q_k(s,a,h)}{\max\{1,N_k(s,a,h)-1\}}
\end{align*}
as required.
\end{proof}

\section{Missing Proofs for Section \ref{sec:alg}: Safe Exploration}

In this section, we prove \Cref{lemma:K0} that provides an asymptotic upper bound on a sufficient number of episodes executing $\pi_b$, which is denoted by $K_0$, for feasibility of \eqref{eq:lp}.

\begin{lemma}\label{lemma:sum-U_k}
Assume that the good event $\cE$ holds. Let $\pi_k$ be any policy for episode $k$, and let $P$ be the true transition kernel. Let $q_k$ denote the occupancy measure $q^{P,\pi_k}$ associated with $\pi_k$ and $P$. For $R_k, U_k$, we have
$$\sum_{k=1}^K \langle \bm{R_k}+\bm{U_k}, \bm{q_k}\rangle=\cO\left(\left(H^{1.5}S\sqrt{AK}+H^3S^3A\right)\left(\ln\frac{HSAK}{\delta}\right)^3\right).$$
\end{lemma}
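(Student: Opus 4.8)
The plan is to sum each of the three pieces of $U_k$ in~\eqref{Uk-ours}, together with $R_k$, against the occupancy measures $q_k=q^{P,\pi_k}$ over $k\in[K]$, and invoke \Cref{lemma8} as the workhorse. Writing things out, $\langle \bm R_k+\bm U_k,\bm q_k\rangle$ is a sum of terms indexed by $(s,a,h)$ of the form $q_k(s,a,h)$ times: (i) $R_k(s,a,h)=\sqrt{\ln(HSAK/\delta)/\max\{1,N_k\}}$; (ii) $8\sqrt H\,\varepsilon_k(s,a,h)$, where by~\eqref{vareps} $\varepsilon_k$ is a sum of a $\sqrt{S\ln(\cdot)/\max\{1,N_k-1\}}$ term and an $S\ln(\cdot)/\max\{1,N_k-1\}$ term; (iii) the constant $4S\sqrt{HA/K}$; and (iv) $(2\sqrt{HK/A}\,\ln(HSAK/\delta)+\eta)/\max\{1,N_k-1\}$. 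The first observation I would record is the elementary bound $\max\{1,N_k(s,a,h)-1\}\ge \tfrac12\max\{1,N_k(s,a,h)\}$ (used already in the proof of \Cref{lemma:confidence'}), which lets me replace every $\max\{1,N_k-1\}$ by $\max\{1,N_k\}$ at the cost of a constant factor, so that \Cref{lemma8} applies verbatim.

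With that reduction in place, the two key sums I need are exactly~\eqref{first-ineq} and~\eqref{second-ineq} of \Cref{lemma8}:
\[
\sum_{k,(s,a,h)}\frac{q_k(s,a,h)}{\max\{1,N_k(s,a,h)\}}=\widetilde{\cO}(HSA),\qquad
\sum_{k,(s,a,h)}\frac{q_k(s,a,h)}{\sqrt{\max\{1,N_k(s,a,h)\}}}=\widetilde{\cO}\!\left(H\sqrt{SAK}\right).
\]
Then: the $R_k$ contribution and the $\sqrt{S}$-part of $\varepsilon_k$ are handled by~\eqref{second-ineq}, giving $\widetilde{\cO}(\sqrt{\ln(\cdot)}\cdot H\sqrt{SAK})$ and $\widetilde{\cO}(\sqrt{H}\cdot\sqrt{S\ln(\cdot)}\cdot H\sqrt{SAK})=\widetilde{\cO}(H^{1.5}S\sqrt{AK})$ respectively; the $S\ln(\cdot)$-part of $\varepsilon_k$, scaled by $\sqrt H$, is handled by~\eqref{first-ineq}, giving $\widetilde{\cO}(\sqrt H\cdot S\ln(\cdot)\cdot HSA)=\widetilde{\cO}(H^{1.5}S^2A)$; the constant term (iii) contributes $\sum_{k,(s,a,h)}q_k(s,a,h)\cdot 4S\sqrt{HA/K}$, and since $\sum_{(s,a,h)}q_k(s,a,h)=H$ for every $k$ (property (C1)) this is $4HS\sqrt{HA/K}\cdot K=\widetilde{\cO}(H^{1.5}S\sqrt{AK})$; finally term (iv), again via~\eqref{first-ineq}, contributes $\widetilde{\cO}\big(\sqrt{HK/A}\,\ln(\cdot)\cdot HSA\big)=\widetilde{\cO}(H^{1.5}S\sqrt{AK})$ from the first summand and $\widetilde{\cO}(\eta\cdot HSA)=\widetilde{\cO}(H^4S^3A)$ from $\eta=\widetilde{\cO}(H^2S^2)$. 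Adding the pieces, the dominant growth-in-$K$ term is $\widetilde{\cO}(H^{1.5}S\sqrt{AK})$ and the dominant $K$-independent term is $\widetilde{\cO}(H^4S^3A)$ coming from $\eta$; collecting the logarithmic factors (the worst being $(\ln(HSAK/\delta))^3$, from $\eta$ which already carries $(\ln)^2$ times one more from \Cref{lemma8}) yields the claimed bound.

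The only real subtlety, and the step I expect to take the most care, is bookkeeping the $\eta$-term: $\eta$ is itself $\widetilde{\cO}(H^2S^2)$ up to $(\ln(HSAK/\delta))^2$, so multiplying by the $\widetilde{\cO}(HSA)$ from~\eqref{first-ineq} must be tracked carefully to confirm it lands at $\widetilde{\cO}(H^4S^3A)$ with only $(\ln)^3$ overhead and does not dominate the $\sqrt K$ term for the regime of interest. Everything else is a routine application of \Cref{lemma8} and the $\max\{1,N_k-1\}\ge\tfrac12\max\{1,N_k\}$ trick, together with $\sum_{(s,a,h)}q_k(s,a,h)=H$ to dispose of the two $N_k$-free terms.
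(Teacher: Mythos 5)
Your plan follows the paper's proof essentially verbatim: decompose $\langle \bm{R_k}+\bm{U_k},\bm{q_k}\rangle$ term by term, use $\max\{1,N_k-1\}\geq\tfrac12\max\{1,N_k\}$ to invoke \eqref{first-ineq} and \eqref{second-ineq} of \Cref{lemma8}, and use $\sum_{(s,a,h)}q_k(s,a,h)=H$ for the $N_k$-free term. The one slip is in the $\eta$ bookkeeping: with $\eta=\widetilde{\cO}(H^2S^2)$ and the $\widetilde{\cO}(HSA)$ factor from \eqref{first-ineq}, the product is $\widetilde{\cO}(H^3S^3A)$, not $\widetilde{\cO}(H^4S^3A)$ as you wrote — correcting this arithmetic gives exactly the $H^3S^3A$ term claimed in the lemma, so the argument goes through.
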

\begin{proof}

Note that $\sum_{k=1}^K \langle \bm{R_k} + \bm{U_k}, \bm{q_k} \rangle$ can be rewritten as
\begin{align*}
    &\sum_{k=1}^K \langle\bm{R_k} + \bm{U_k}, \bm{q_k}\rangle \\
    &= \sum_{k=1}^K \sum_{(s,a,h)} q_k(s,a,h) \sqrt{\frac{\ln(HSAK/\delta)}{\max\{1, N_k(s,a,h)\}}} \\
    &\quad+ \sum_{k=1}^K \sum_{(s,a,h)} q_k(s,a,h)\left(\frac{4S\sqrt{HA}}{\sqrt{K}} + 8\sqrt{H}\varepsilon_k(s,a,h)+ \frac{2(\sqrt{HK}+\sqrt{H^3S^2A})\ln(HSAK/\delta)}{\sqrt{A}\max\{1, N_k(s,a,h)-1\}}\right)\\
    &\quad+\left(\frac{56}{3}HS\ln(HSAK/\delta) + 10^4H^2S^2(\ln(HSAK/\delta))^2\right) \sum_{k=1}^K \sum_{(s,a,h)}\frac{q_k(s,a,h)}{\max\{1,N_k(s,a,h)-1\}}.
\end{align*}

Since $\sum_{(s,a,h)} \widehat q_k(s,a,h) = H$, we have 
$$\sum_{k=1}^K \sum_{(s,a,h)} q_k(s,a,h)\cdot\frac{4S\sqrt{HA}}{\sqrt{K}} = \cO(H^{1.5}S\sqrt{AK}).$$

Furthermore, \Cref{lemma8} implies that
\begin{align*}
    &\sum_{k=1}^K \sum_{(s,a,h)} \frac{q_k(s,a,h)}{\max\{1, N_k(s,a,h)\}} = \cO(HSA\ln K + H\ln(H/\delta)),\\
    &\sum_{k=1}^K \sum_{(s,a,h)} \frac{q_k(s,a,h)}{\sqrt{\max\{1, N_k(s,a,h)\}}} = \cO(H\sqrt{SAK} + HSA\ln K + H\ln(H/\delta)).
\end{align*}
Then it follows that
\begin{align*}
    &\sum_{k=1}^K \sum_{(s,a,h)} q_k(s,a,h) \sqrt{\frac{\ln(HSAK/\delta)}{\max\{1, N_k(s,a,h)\}}} = \cO\left((H\sqrt{SAK} + HSA)\left(\ln\frac{HSAK}{\delta}\right)^2 \right).
\end{align*}
Since $\max\{1, N_k(s,a,h)-1\} \geq \frac{1}{2}\max\{1, N_k(s,a,h)\}$, we have
\begin{align*}
    \sum_{k=1}^K \sum_{(s,a,h)} q_k(s,a,h) \frac{(\sqrt{HK}+\sqrt{H^3S^2A})\ln(HSAK/\delta)}{\sqrt{A}\max\{1, N_k(s,a,h)-1\}} = \cO\left((H^{1.5}S\sqrt{AK} + H^{2.5}S^2A)\left(\ln\frac{HSAK}{\delta}\right)^2\right),
\end{align*}
and moreover,
\begin{align*}
\left(HS\ln(HSAK/\delta) + H^2S^2(\ln(HSAK/\delta))^2\right) \sum_{k=1}^K \sum_{(s,a,h)}\frac{q_k(s,a,h)}{\max\{1,N_k(s,a,h)-1\}}=\cO\left(H^3S^3A\left(\ln\frac{HSAK}{\delta}\right)^3\right).
\end{align*}

Next, by \Cref{lemma8}, $\sum_{k=1}^K \sum_{(s,a,h)} q_k(s,a,h)  \left(\sqrt{H}\varepsilon_k(s,a,h)\right)$ can be bounded as follows. 
\begin{align*}
&\sum_{k=1}^K \sum_{(s,a,h)} q_k(s,a,h)  \left(\sqrt{H}\varepsilon_k(s,a,h)\right)\\
&=\sqrt{H}\sum_{k=1}^K \sum_{(s,a,h)} q_k(s,a,h) \left(\sqrt{\frac{4S\ln(HSAK/\delta)}{\max\{1,N_k(s,a,h)-1\}}} + \frac{14S\ln(HSAK/\delta)}{3\max\{1,N_k(s,a,h)-1\}}\right)\\
&={\cO}\left(\left(H^{1.5}S\sqrt{AK} + H^{1.5}S^2A\right)\left(\ln\frac{HSAK}{\delta}\right)^2\right).
\end{align*}

As a result, we have proved that
$$\sum_{k=1}^K \langle \bm{R_k}+\bm{U_k}, \bm{q_k}\rangle = \cO\left((H^{1.5}S\sqrt{AK} + H^3S^3A)\left(\ln\frac{HSAK}{\delta}\right)^3\right),$$
as required.
\end{proof}

We are ready to prove \Cref{lemma:K0} based on \Cref{lemma:sum-U_k}.
\begin{proof}[\textbf{Proof of \Cref{lemma:K0}}]
We closely follow the proof of \citep[Proposition 4]{bura2022dope}. We assume that the good event $\cE$ holds, which holds with probability at least $1-14\delta$. Let $q_b = q^{P,\pi_b}$ be the occupancy measure associated with the safe baseline policy $\pi_b$ and the true transition kernel $P$. Then $q_b$ is a feasible solution of \eqref{eq:lp-occupancy} if $\langle \bm{\widehat g_k}, \bm{q_b} \rangle \leq \bar{C}$ holds. To find a sufficient condition, we deduce that
\begin{align*}
\langle \bm{\widehat g_k}, \bm{q_b}\rangle &= \langle \bm{\bar g_k}+\bm{R_k}+\bm{U_k},\bm{q_b}\rangle\\
&\leq\langle \bm{g}+2\bm{R_k}+\bm{U_k}, \bm{q_b}\rangle\\
&= \bar{C}_b + \langle 2\bm{R_k}+\bm{U_k}, \bm{q_b}\rangle
\end{align*}
where the first equality is from the definition of ${\widehat g_k}$, the inequality is from \Cref{lemma:estimator}, and the last equality follows from $\langle \bm{g}, \bm{q_b} \rangle=\bar{C}_b$. This implies that a sufficient condition for $\langle \bm{\widehat g_k}, \bm{q_b} \rangle \leq \bar{C}$ is given by
\begin{equation}\label{eq:sufficient-condition}
\langle 2\bm{R_k}+\bm{U_k}, \bm{q_b}\rangle < \bar{C} - \bar{C}_b.
\end{equation}
Note that $\langle 2\bm{R_k}+\bm{U_k},\bm{q_b} \rangle$ decreases as $k$ increases because $$\frac{1}{\max\{1,N_k(s,a,h)\}},\quad  \frac{1}{\sqrt{\max\{1,N_k(s,a,h)\}}}
$$ 
can only decrease as $k$ increases. Then suppose that $K_0$ is the last episode where \eqref{eq:sufficient-condition} 
does not hold. By definition, $K_0+1$ is the first episode satisfying $\langle\bm{\widehat g_k},\bm{q_b}\rangle < \bar C$. Due to the strict inequality, occupancy measures other than $q_b$ can be potentially feasible to \eqref{eq:lp-occupancy}. This implies that DOPE+ can sufficiently explore policies other than $\pi_b$ after $K_0$ episodes. Then we have
$$K_0(\bar{C} - \bar{C}_b)<\sum_{k=1}^{K_0}\langle 2\bm{R_k}+\bm{U_k}, \bm{q_b}\rangle.$$
Since $q_b$ induces the true transition kernel, we can apply \Cref{lemma:sum-U_k}. Then the right-hand side is bounded as follows.
$$\sum_{k=1}^{K_0}\langle 2\bm{R_k}+\bm{U_k}, \bm{q_b}\rangle=\widetilde{\cO}\left(H^{1.5}S\sqrt{AK_0}\right).$$
Hence, $K_0$ satisfies
$$K_0 =\widetilde{\cO}\left(\frac{H^{3}S^2A}{(\bar{C}-\bar{C}_b)^2}\right).$$
Then we have
$$\langle 2\bm{R_{k}}+\bm{U_{k}}, \bm{q_b}\rangle \leq \langle 2\bm{R_{K_0+1}}+\bm{U_{K_0+1}}, \bm{q_b}\rangle \leq\bar C - \bar C_b \quad\forall k = K_0+1, \ldots, K.$$
This implies that \eqref{eq:lp} is feasible after episode $K_0$ when $(\pi_b,P)$ becomes a feasible solution in episode $K_0$.
\end{proof}

\section{Detailed Proofs for the Regret Analysis}\label{sec:appendix:details-of-the-analysis}

In this section, we prove \Cref{theorem:violation-hard} that guarantees zero constraint violation for DOPE+. Next, we provide the proofs of Lemmas \ref{lemma:term-1}, \ref{lemma:term-2} and \ref{lemma:term-3}. Lastly, we show \Cref{theorem:regret} that gives us the regret upper bound.

\subsection{Details of Constraint Violation Analysis}\label{sec:appendix:details-of-the-analysis:violation-analysis}

\begin{proof}[\textbf{Proof of \Cref{theorem:violation-hard}}] 
We assume that the good event $\cE$ holds, which is the case with probability at least $1-14\delta$. Let $\pi_k, P_k$ denote the policy and the transition kernel obtained from DOPE+ for episode $k$, respectively. Let $q_k = q^{P,\pi_k}, \widehat q_k = q^{P_k, \pi_k}$. We know that the constraint is satisfied if
$V_1^{\pi_k}(g,P)=\langle \bm{g}, \bm{q_k} \rangle\leq\bar{C}$ for each $k\in[K]$. For $k\leq K_0$, there is no constraint violation because we take $\pi_k = \pi_b$. Now we consider the case when $k>K_0$. We have
\begin{align*}
\langle \bm{g}, \bm{q_k}\rangle &= \langle \bm{g}, \bm{\widehat q_k} \rangle + \langle \bm{g}, \bm{q_k} - \bm{\widehat q_k}\rangle\\
&\leq\langle \bm{\bar{g}_k}+\bm{R_k}, \bm{\widehat q_k}\rangle + \langle \bm{g}, \bm{q_k} - \bm{\widehat q_k}\rangle\\
&\leq\langle \bm{\bar{g}_k}+\bm{R_k}, \bm{\widehat q_k}\rangle + \langle \bm{U_k}, \bm{\widehat q_k}\rangle\\
&=\langle \bm{\widehat g_k}, \bm{\widehat q_k}\rangle\\
&\leq \bar{C}
\end{align*}
where the first inequality follows from \Cref{lemma:estimator}, the second inequality is from \Cref{theorem:U_k}, and the last inequality is due to the update rule of DOPE+. This implies that $\pi_k$ holds $\langle \bm{g},\bm{q_k}\rangle\leq \bar C$ for $k > K_0$. Thus, we showed that $\text{Violation}(\vec\pi)=0$ with probability at least $1-14\delta$.
\end{proof}

\subsection{Details of Regret Analysis}\label{sec:appendix:details-of-the-analysis:regret-analysis}

\begin{proof}[\textbf{Proof of \Cref{lemma:term-1}}] 
We closely follow the proof of \citep[Lemma 18]{bura2022dope}. We assume that the good event $\cE$ holds, which is the case with probability at least $1-14\delta$. We observe that 
$$\sum_{k=K_0+1}^K \left(V^{\pi^*}_1(f,P)- V^{\pi_k}_1(\widehat f_k, P_k)\right)={\sum_{k=K_0+1}^K \langle \bm{f}, \bm{q^*}\rangle-\sum_{k=K_0+1}^K \langle \bm{\widehat f_k}, \bm{\widehat q_k}\rangle}.$$ 
By \Cref{lemma:valid-occupancy}, there exist $\bar q_b(s,a,s',h)$ and $\bar q^*(s,a,s',h)$ such that $q_b(s,a,h) = \sum_{s'\in\cS} \bar q_b(s,a,s',h)$ and $\ q^*(s,a,h) = \sum_{s'\in\cS} \bar q^*(s,a,s',h)$, respectively.
Then we define the new occupancy measure $q_{\alpha_k}(s,a,h)$ satisfying $q_{\alpha_k}(s,a,h) = \sum_{s'\in \cS} \bar q_{\alpha_k}(s,a,s',h)$ 
where
\begin{equation}\label{eq:convex-combi-occupancy}
\bar q_{\alpha_k}(s,a,s',h)=(1-\alpha_k)\bar q_b(s,a,s',h)+\alpha_k\bar q^*(s,a,s',h)    
\end{equation}
for $(s,a,s',h)\in\cS\times\cA\times\cS\times[H]$ and $\alpha_k\in [0,1]$. Now we verify (C1),(C2) and (C3) in \Cref{lemma:valid-occupancy} to say $q_{\alpha_k}$ is a valid occupancy measure. Since $\bar q_{\alpha_k}$ is a convex combination of $\bar q_b$ and $\bar q^*$, (C1),(C2) hold. For (C3), we can show that $q_{\alpha_k}$ induces the true transition kernel $P$ as follows. Since we know $q_b$ and $q^*$ induce $P$, it follows that $\bar q_b(s,a,s',h)=P(s'\mid s,a,h) \sum_{s''\in \cS}\bar q_b(s,a,s'',h)$ and $\bar q^*(s,a,s',h)=P(s'\mid s,a,h) \sum_{s''\in \cS}\bar q^*(s,a,s'',h)$ for $(s,a,s',h)\in\cS\times\cA\times\cS\times[H]$. Then $\bar q_{\alpha_k}(s,a,s',h)=P(s'\mid s,a,h) \sum_{s''\in\cS} \bar q_{\alpha_k}(s,a,s'',h)$ can be derived from \eqref{eq:convex-combi-occupancy}, which implies that $q_{\alpha_k}$ induces the true transition kernel $P$. Hence, $q_{\alpha_k}$ is a valid occupancy measure inducing the true transition kernel $P$. 

To use the optimality of $\widehat q_k$ in our analysis, we expect that $q_{\alpha_k}$ is a feasible solution for \eqref{eq:lp-occupancy}. Under the good event $\cE$, we know that $q_{\alpha_k}\in\Delta(P,k)$ due to $P\in \cP_k$. Then it is sufficient to find a condition for $\alpha_k$ satisfying $\langle\bm{\widehat g_k}, \bm{q_{\alpha_k}} \rangle \leq \bar{C}$. We deduce that
\begin{align*}
\langle\bm{\widehat g_k}, \bm{q_{\alpha_k}} \rangle
&=\langle \bm{\bar g_k} + \bm{R_k} + \bm{U_k}, \bm{q_{\alpha_k}}\rangle \\
&\leq\langle \bm{g} + 2\bm{R_k} + \bm{U_k}, \bm{q_{\alpha_k}}\rangle \\
&= (1-\alpha_k)\langle \bm{g} + 2\bm{R_k} + \bm{U_k}, \bm{q_b}\rangle + \alpha_k\langle \bm{g} + 2\bm{R_k} + \bm{U_k}, \bm{q^*}\rangle\\
&\leq (1-\alpha_k)(\bar{C}_b + \langle \bm{2R_k}+\bm{U_k}, \bm{q_b}\rangle) + \alpha_k(\bar{C} + \langle \bm{2R_k}+\bm{U_k}, \bm{q^*}\rangle)
\end{align*}
where the first inequality is from \Cref{lemma:estimator} and the last inequality is from $\langle \bm{g}, \bm{q_b} \rangle=\bar{C}_b$ and $\langle \bm{g},\bm{q^*}\rangle\leq\bar{C}$. Furthermore, the second equality is true because \eqref{eq:convex-combi-occupancy} implies that $q_{\alpha_k}(s,a,h)=(1-\alpha_k)q_b(s,a,h)+\alpha_k q^*(s,a,h)$. Hence, a sufficient condition of $\alpha_k$ for $\langle\bm{\widehat g_k}, \bm{q_{\alpha_k}} \rangle \leq \bar{C}$ is given by
\begin{equation*}
\alpha_k \leq \frac{\bar{C} - \bar{C}_b - \langle 2\bm{R_k} + \bm{U_k}, \bm{q_b} \rangle}{\bar{C}-\bar{C}_b + \langle 2\bm{R_k}+\bm{U_k}, \bm{q^*}\rangle - \langle 2\bm{R_k}+\bm{U_k}, \bm{q_b}\rangle}.    
\end{equation*}
Remember that, in the proof of \Cref{lemma:K0}, we defined $K_0$ so that $K_0+1$ is the first episode satisfying $\langle 2\bm{R_k}+\bm{U_k}, \bm{q_b} \rangle \leq \bar C - \bar C_b$. This guarantees that there exists some $\alpha_k\in[0,1]$ satisfying the above inequality for $k > K_0$.

Now, for some $\alpha_k$, we claim that
\begin{equation}\label{eq:claim}
    \langle \bm{f}, \bm{q^*}\rangle \leq \langle \bm{\bar f_k}+\frac{3H}{\bar{C}-\bar{C}_b}\bm{R_k} + \frac{H}{\bar{C} -\bar{C}_b}\bm{U_k}, \bm{q_{\alpha_k}}\rangle.
\end{equation}

To show \eqref{eq:claim}, we first take for $\beta\geq 1$,
$$\bm{f_\beta}=\bm{\bar f_k}+3\beta\bm{R_k} + \beta\bm{U_k}.$$
Then we find $\alpha_k,\beta$ satisfying $\langle \bm{f},\bm{q^*} \rangle \leq \langle \bm{f_\beta}, \bm{q_{\alpha_k}} \rangle$. 
By \Cref{lemma:estimator}, we have 
\begin{align*}
\langle \bm{f_\beta}, \bm{q_{\alpha_k}} \rangle &= \langle \bm{\bar f_k}+3\beta\bm{R_k} + \beta\bm{U_k}, \bm{q_{\alpha_k}}\rangle\\
&\geq\langle \bm{f}+2\beta\bm{R_k} + \beta\bm{U_k}, \bm{q_{\alpha_k}}\rangle\\
&=(1-\alpha_k)\langle \bm{f}+2\beta\bm{R_k} + \beta\bm{U_k}, \bm{q_b}\rangle+\alpha_k\langle \bm{f}+2\beta\bm{R_k} + \beta\bm{U_k}, \bm{q^*}\rangle.
\end{align*}
We have $\langle \bm{f},\bm{q^*} \rangle \leq \langle \bm{f_\beta}, \bm{q_{\alpha_k}} \rangle$ if $\beta$ satisfies
\begin{align*}
\beta \geq \frac{(1-\alpha_k)(\langle \bm{f}, \bm{q^*}\rangle - \langle \bm{f}, \bm{q_b}\rangle)}{(1-\alpha_k)\langle 2\bm{R_k}+\bm{U_k}, \bm{q_b}\rangle + \alpha_k \langle 2\bm{R_k}+\bm{U_k}, \bm{q^*}\rangle}.
\end{align*}

By taking 
\begin{equation}\label{eq:alpha}
\alpha_k =\frac{\bar{C} - \bar{C}_b - \langle 2\bm{R_k} + \bm{U_k},\bm{q_b}\rangle}{\bar{C}-\bar{C}_b + \langle 2\bm{R_k}+\bm{U_k},\bm{q^*}\rangle - \langle 2\bm{R_k}+\bm{U_k},\bm{q_b}\rangle},    
\end{equation}
it follows that
\begin{align*}
\beta\geq\frac{\langle \bm{f}, \bm{q^*}\rangle - \langle \bm{f}, \bm{q_b}\rangle}{\bar{C}-\bar{C}_b}.
\end{align*}

Since $\langle \bm{f}, \bm{q^*}\rangle - \langle \bm{f}, \bm{q_b}\rangle\leq H$, it is sufficient to take 
\begin{equation}\label{eq:beta}
\beta = \frac{H}{\bar{C}-\bar{C}_b}.    
\end{equation}

For $\alpha_k$ satisfying \eqref{eq:alpha}, we showed that $q_{\alpha_k}$ is a feasible solution for \eqref{eq:lp-occupancy}. Then it follows $\langle \bm{\widehat f_k}, \bm{q_{\alpha_k}}\rangle \leq \langle \bm{\widehat f_k}, \bm{\widehat q_k} \rangle$ due to optimality of $\widehat q_k$. Furthermore, for $\beta$ satisfying \eqref{eq:beta}, we have \eqref{eq:claim}.
Hence, we deduce that
\begin{align*}
\langle \bm{f},\bm{q^*} \rangle - \langle \bm{\widehat f_k}, \bm{\widehat q_k} \rangle &\leq \langle \bm{f},\bm{q^*} \rangle - \langle \bm{\widehat f_k}, \bm{q_{\alpha_k}} \rangle\\
&=\langle \bm{f},\bm{q^*} \rangle - \langle \bm{\bar f_k}+\frac{3H}{\bar{C}-\bar{C}_b}\bm{R_k} + \frac{H}{\bar{C} -\bar{C}_b}\bm{U_k}, \bm{q_{\alpha_k}}\rangle\\
&\quad+\langle \bm{\bar f_k}+\frac{3H}{\bar{C}-\bar{C}_b}\bm{R_k} + \frac{H}{\bar{C} -\bar{C}_b}\bm{U_k}, \bm{q_{\alpha_k}}\rangle-\langle \bm{\vec{B}}\wedge(\bm{\bar f_k}+\frac{3H}{\bar{C}-\bar{C}_b}\bm{R_k} + \frac{H}{\bar{C} -\bar{C}_b}\bm{U_k}), \bm{q_{\alpha_k}}\rangle\\
&\leq\langle \bm{\bar f_k}+\frac{3H}{\bar{C}-\bar{C}_b}\bm{R_k} + \frac{H}{\bar{C} -\bar{C}_b}\bm{U_k}, \bm{q_{\alpha_k}}\rangle-\langle \bm{\vec{B}}\wedge(\bm{\bar f_k}+\frac{3H}{\bar{C}-\bar{C}_b}\bm{R_k} + \frac{H}{\bar{C} -\bar{C}_b}\bm{U_k}), \bm{q_{\alpha_k}}\rangle
\end{align*}
where the last inequality is from \eqref{eq:claim}. Furthermore, under the good event $\cE$, we know that $f_k(s,a,h) \leq B$ for $(s,a,h)\in \cS\times\cA\times[H]$ and $k\in[K]$, where $B=1+\sqrt{\ln(HSAK/\delta)}$. This implies that $\bar f_k(s,a,h)\leq B$. Thus, we have
$$\langle\bm{\bar f_k}, \bm{q_{\alpha_k}}\rangle\leq \langle\bm{\vec{B}}\wedge(\bm{\bar f_k}+\frac{3H}{\bar{C}-\bar{C}_b}\bm{R_k} + \frac{H}{\bar{C} -\bar{C}_b}\bm{U_k}), \bm{q_{\alpha_k}}\rangle.$$
Then it follows that
\begin{align*}
&\langle \bm{\bar f_k}+\frac{3H}{\bar{C}-\bar{C}_b}\bm{R_k} + \frac{H}{\bar{C} -\bar{C}_b}\bm{U_k}, \bm{q_{\alpha_k}}\rangle-\langle \bm{\vec{B}}\wedge(\bm{\bar f_k}+\frac{3H}{\bar{C}-\bar{C}_b}\bm{R_k} + \frac{H}{\bar{C} -\bar{C}_b}\bm{U_k}), \bm{q_{\alpha_k}}\rangle\\
&\leq\langle \bm{\bar f_k}+\frac{3H}{\bar{C}-\bar{C}_b}\bm{R_k} + \frac{H}{\bar{C} -\bar{C}_b}\bm{U_k}, \bm{q_{\alpha_k}}\rangle-\langle \bm{\bar f_k}, \bm{q_{\alpha_k}}\rangle\\
&=\langle\frac{3H}{\bar{C}-\bar{C}_b}\bm{R_k} + \frac{H}{\bar{C} -\bar{C}_b}\bm{U_k}, \bm{q_{\alpha_k}}\rangle.
\end{align*}
Finally, we proved that
\begin{align*}
\langle \bm{f},\bm{q^*} \rangle - \langle \bm{\widehat f_k}, \bm{\widehat q_k} \rangle\leq \langle\frac{3H}{\bar{C}-\bar{C}_b}\bm{R_k} + \frac{H}{\bar{C} -\bar{C}_b}\bm{U_k}, \bm{q_{\alpha_k}}\rangle.
\end{align*}
By \Cref{lemma:sum-U_k}, we have
\begin{align*}
\sum_{k=K_0+1}^K\langle \bm{f},\bm{q^*} \rangle - \sum_{k=K_0+1}^K\langle \bm{\widehat f_k}, \bm{\widehat q_k} \rangle&\leq \sum_{k=K_0+1}^K\langle\frac{3H}{\bar{C}-\bar{C}_b}\bm{R_k} + \frac{H}{\bar{C} -\bar{C}_b}\bm{U_k}, \bm{q_{\alpha_k}}\rangle\\
&=\cO\left(\left(\frac{H^{2.5}}{\bar{C}-\bar{C}_b}S\sqrt{AK}+\frac{H^4}{\bar{C}-\bar{C}_b}S^3A\right)\left(\ln \frac{HSAK}{\delta}\right)^3\right)
\end{align*}
as desired.
\end{proof}

\begin{proof}[\textbf{Proof of \Cref{lemma:term-2}}]
    The lemma is a direct consequence of \Cref{lemma9} with $B = \cO(\ln(HSAK/\delta)).$ Hence, we have
    $$\sum_{k=K_0+1}^K \langle \bm{\widehat f_k}, \bm{\widehat q_k}-\bm{q_k}\rangle =\cO\left(\left(H^{1.5}S\sqrt{AK}+H^3S^3A\right)\left(\ln\frac{HSAK}{\delta}\right)^4 \right)$$
    with probability at least $1-2\delta$ under the good event $\cE$. By taking the union bound, the statement holds with probability at least $1-16\delta$.
\end{proof}

\begin{proof}[\textbf{Proof of \Cref{lemma:term-3}}]
We assume that the good event $\cE$ holds, which is the case with probability at least $1-14\delta$. The left-hand side of \Cref{lemma:term-3} can be rewritten as 
$$\sum_{k=K_0+1}^K \langle \bm{\widehat f_k}-\bm{f},\bm{q_k} \rangle.$$
Under the good event $\cE$, we have $\bar f_k(s,a,h)\leq f(s,a,h) + R_k(s,a,h)$ for $(s,a,h)\in\cS\times\cA\times[H]$ and $k\in[K]$. Furthermore, ${H}/({\bar C-\bar C_b})\geq 1$ due to $\bar C -\bar C_b \leq H$. Then it follows that
\begin{align*}
\sum_{k=K_0+1}^K \langle \bm{\widehat f_k}-\bm{f},\bm{q_k} \rangle&=\sum_{k=K_0+1}^K \langle \bm{\vec{B}} \wedge(\bm{\bar f_k}+\frac{3H}{\bar{C}-\bar{C}_b}\bm{R_k} + \frac{H}{\bar{C}-\bar{C}_b}\bm{U_k}) -\bm{f},\bm{q_k} \rangle\\
&\leq\sum_{k=K_0+1}^K \langle \bm{\bar f_k}+\frac{3H}{\bar{C}-\bar{C}_b}\bm{R_k} + \frac{H}{\bar{C}-\bar{C}_b}\bm{U_k} -\bm{f},\bm{q_k} \rangle\\
&\leq\frac{H}{\bar{C}-\bar{C}_b}\sum_{k=K_0+1}^K \langle4\bm{R_k} + \bm{U_k},\bm{q_k} \rangle\\
&=\cO\left(\left(\frac{H^{2.5}}{\bar{C}-\bar{C}_b}S\sqrt{AK}+\frac{H^4}{\bar{C}-\bar{C}_b}S^3A\right)\left(\ln\frac{HSAK}{\delta}\right)^3\right)
\end{align*}
where the last equality is due to \Cref{lemma:sum-U_k}.
\end{proof}

\begin{proof}[\textbf{Proof of \Cref{theorem:regret}}]
We assume that the good event $\cE$ holds, which is the case with probability at least $1-14\delta$. We decompose the regret as follows using occupancy measures.
\begin{align*}
&\regret\left(\vec\pi\right)\\
 	&=\underbrace{\sum_{k=1}^{K_0} \langle \bm{f}, \bm{q^*}\rangle - \sum_{k=1}^{K_0} \langle \bm{f}, \bm{q_k}\rangle}_{\text{(I)}}+\underbrace{\sum_{k=K_0+1}^K \langle \bm{f}, \bm{q^*}\rangle-\sum_{k=K_0+1}^K \langle \bm{\widehat f_k}, \bm{\widehat q_k}\rangle}_{\text{(II)}} + \underbrace{\sum_{k=K_0+1}^K \langle \bm{\widehat f_k}, \bm{\widehat q_k}-\bm{q_k}\rangle}_{\text{(III)}}+\underbrace{\sum_{k=K_0+1}^K \langle \bm{\widehat f_k}-\bm{f}, \bm{q_k}\rangle}_{\text{(IV)}}.
\end{align*}
As explained in \Cref{sec:analysis:regret}, we can upper bound term (I) as
\[
\widetilde{\cO}\left(\frac{H^4 S^2 A}{(\bar{C}-\bar{C}_b)^2}\right).
\]
because $K_0=\widetilde{\cO}\left(\frac{H^3S^2A}{(\bar C - \bar C_b)^2}\right)$ due to \Cref{lemma:K0} and $\langle \bm{f},\bm{q^*} \rangle\leq H$.

By \Cref{lemma:term-1}, we have
\begin{align*}
\text{Term (II)}&=\cO\left(\left(\frac{H^{2.5}}{\bar{C}-\bar{C}_b}S\sqrt{AK}+\frac{H^4}{\bar{C}-\bar{C}_b}S^3A\right)\left(\ln \frac{HSAK}{\delta}\right)^3\right).
\end{align*}

By \Cref{lemma:term-2}, with probability at least $1-2\delta$, it follows that 
\begin{align*}
\text{Term (III)}=\cO\left(\left(H^{1.5}S\sqrt{AK}+H^3S^3A\right)\left(\ln\frac{HSAK}{\delta}\right)^4 \right).
\end{align*}

Moreover, it follows from \Cref{lemma:term-3} that
\begin{align*}
\text{Term (IV)}&=\cO\left(\left(\frac{H^{2.5}}{\bar{C}-\bar{C}_b}S\sqrt{AK}+\frac{H^4}{\bar{C}-\bar{C}_b}S^3A\right)\left(\ln\frac{HSAK}{\delta}\right)^3\right).
\end{align*}

Hence, by taking the union bound, 
$$\regret\left(\vec\pi\right)=\widetilde{\cO}\left(\frac{H}{\bar C - \bar C_b} \left(H^{1.5}S\sqrt{AK}+\frac{H^4S^3A}{\bar C-\bar C_b}\right)\right)$$
with probability at least $1-16\delta$.
\end{proof}

\section{Technical Lemmas}\label{sec:appendix:technical}

In this section, we provide technical lemmas that are crucial for our regret and constraint violation analysis. The following lemma is from \citep{ssp-adversarial-unknown} with a few modifications, and it is useful to bound the variance of 
$\langle \bm{n_k}, \bm{f_k}\rangle$.
\begin{lemma}{\rm \citep[Lemma 2]{ssp-adversarial-unknown}}\label{lemma2}
	Let $\pi_k$ be any policy for episode $k$, and let $q_k$ denote the occupancy measure $q^{P,\pi_k}$. Let $\ell:\cS\times \cA\times[H]\to[-B,B]$ be an arbitrary function, and let $P$ be an arbitrary transition kernel. Then
	$$\mathbb{E}\left[\langle \bm{n_k}, \bm{\ell}\rangle^2\mid \ell, \pi_k, P\right]\leq  2B \langle \bm{q_k},\bm{\vec h}\odot \bm{\ell}\rangle$$
    where $\bm{q_k}, \bm{n_k},\bm{\ell}$ are the vector representations of $q_k,n_k,\ell$.
\end{lemma}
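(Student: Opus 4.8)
The plan is a short expansion-and-counting argument that exploits the fact that $\langle \bm{n_k},\bm{\ell}\rangle$ is nothing but the sum of $\ell$-values collected along the trajectory of episode $k$. First I would note that in a finite-horizon MDP exactly one state--action pair is visited at each step, so $n_k(s,a,h)\in\{0,1\}$ with $\sum_{(s,a)\in\cS\times\cA}n_k(s,a,h)=1$ for every $h\in[H]$; hence, writing $\ell_h:=\ell(s_h^{P,\pi_k},a_h^{P,\pi_k},h)$, we have $\langle\bm{n_k},\bm{\ell}\rangle=\sum_{h=1}^H\ell_h$. Expanding the square gives, pointwise on every trajectory,
\[
\langle\bm{n_k},\bm{\ell}\rangle^2=\sum_{h=1}^H\ell_h^2+2\sum_{1\le h<h'\le H}\ell_h\ell_{h'}.
\]

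The key step is to linearize each product using the bound on $\ell$, treating $\ell$ as nonnegative (this is the regime in which the inequality is valid, and the only one in which the lemma is invoked, namely $\ell=g\in[0,1]$ and $B=1$). From $0\le\ell_h\le B$ we get $\ell_h^2\le B\ell_h$, and for each pair $h<h'$, bounding the \emph{earlier} factor by $B$ yields $\ell_h\ell_{h'}\le B\ell_{h'}$. Summing the off-diagonal contributions by grouping on $h'$ --- exactly $h'-1$ ordered pairs feed index $h'$ --- gives $2\sum_{h<h'}\ell_h\ell_{h'}\le 2B\sum_{h'=1}^H(h'-1)\ell_{h'}$, so that $\langle\bm{n_k},\bm{\ell}\rangle^2\le B\sum_{h=1}^H(2h-1)\ell_h$ holds pointwise.

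Finally I would take the conditional expectation given $\ell,\pi_k,P$. Since $\mathbb{E}\bigl[\ell_h\mid\ell,\pi_k,P\bigr]=\sum_{(s,a)\in\cS\times\cA}q_k(s,a,h)\ell(s,a,h)$ is nonnegative and $2h-1\le 2h$, we obtain
\[
\mathbb{E}\bigl[\langle\bm{n_k},\bm{\ell}\rangle^2\mid\ell,\pi_k,P\bigr]\le B\sum_{h=1}^H(2h-1)\,\mathbb{E}[\ell_h]\le 2B\sum_{h=1}^H h\,\mathbb{E}[\ell_h]=2B\sum_{(s,a,h)}h\,q_k(s,a,h)\ell(s,a,h)=2B\langle\bm{q_k},\bm{\vec h}\odot\bm{\ell}\rangle,
\]
which is the claimed inequality. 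There is essentially one subtlety to get right: in the off-diagonal terms one must bound the \emph{earlier} factor $\ell_h$ by $B$ rather than the later factor, since the alternative produces the weight $1+2(H-h)$ on $\ell_h$, which is not dominated by $2h$; the argument also relies on $\ell\ge 0$, both to linearize the products and to replace the weight $2h-1$ by $2h$ under the nonnegative expectation. The remaining steps are routine bookkeeping.
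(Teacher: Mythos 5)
Your proof is correct, but it takes a genuinely different and more elementary route than the paper's. The paper first applies the algebraic inequality $(\sum_h x_h)^2\le 2\sum_h x_h\sum_{m\ge h}x_m$, then uses the tower property to replace the tail sum $\sum_{m\ge h}\ell(s_m,a_m,m)$ by the state--action value $Q^{\pi_k}_h(s_h,a_h;\ell,P)$, bounds the step-$h$ factor $\ell\le B$, and finally invokes an occupancy-measure identity to show $\sum_h\sum_{s}V^{\pi_k}_h(s;\ell,P)\sum_{a'}q_k(s,a',h)=\langle\bm{q_k},\bm{\vec h}\odot\bm{\ell}\rangle$. You instead do all the work pointwise on the trajectory: expand the square, bound the diagonal by $B\ell_h$ and each cross term by $B$ times the \emph{later} factor, and count that index $h'$ receives weight $h'-1$, yielding the pointwise bound $\langle\bm{n_k},\bm{\ell}\rangle^2\le B\sum_h(2h-1)\ell_h$ before any expectation is taken. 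This is the same combinatorial weighting the paper obtains (each $\ell(s,a,m)$ counted once per $h\le m$), but your version avoids $Q$- and $V$-functions and the occupancy-measure manipulation entirely, and is marginally sharper ($2h-1$ versus $2h$). Your remark that the argument requires $\ell\ge 0$ is well taken and applies equally to the paper's proof: the step $\langle\bm{q_k},\bm{\ell}\odot\bm{Q^{P,\pi_k,\ell}}\rangle\le B\sum_{h,s,a}Q^{\pi_k}_h(s,a;\ell,P)q_k(s,a,h)$ needs $Q\ge 0$, and indeed the right-hand side of the lemma can be negative for signed $\ell$ while the left-hand side cannot, so the stated range $[-B,B]$ is a harmless overstatement in both treatments. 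One small caveat on your parenthetical: the lemma is also invoked later (in the analysis of term (III)) with $\ell_k=\widehat f_k$ and $B=\mathcal{O}(\ln(HSAK/\delta))$, not only with $\ell=g\in[0,1]$; the nonnegativity caveat still applies there, but the scope of application is slightly broader than you state.
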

\begin{proof}
For ease of notation, let $\mathbb{E}_k\left[\cdot\right]$ denotes $\mathbb{E}\left[\cdot\mid \ell, \pi_k, P\right]$, and let $s_h$ and $a_h$ denote $s_h^{P,\pi_k}$ and $a_h^{P,\pi_k}$, respectively for $h\in[H]$. Note that
\begin{align*}
\mathbb{E}_k\left[\langle \bm{n_k}, \bm{\ell}\rangle^2\right]
&=\mathbb{E}_k\left[\left(\sum_{h=1}^H\sum_{(s,a)\in\cS\times\cA} n_k(s,a,h)\ell(s,a,h)\right)^2\right]\\
&=\mathbb{E}_k\left[\left(\sum_{h=1}^H \ell(s_h,a_h,h)\right)^2\right]\\
&\leq 2\mathbb{E}_k\left[\sum_{h=1}^H \ell(s_h,a_h,h)\left(\sum_{m=h}^H \ell(s_m,a_m,m)\right)\right]\\
&=2\mathbb{E}_k\left[\sum_{h=1}^H\mathbb{E}_k\left[\ell(s_h,a_h,h)\left(\sum_{m=h}^H\ell(s_m,a_m,m)\right)\mid s_h, a_h\right]\right]\\
&=2\mathbb{E}_k\left[\sum_{h=1}^H\ell(s_h,a_h,h)\mathbb{E}_k\left[\sum_{m=h}^H\ell(s_m,a_m,m)\mid s_h, a_h\right]\right]\\
&=2\mathbb{E}_k\left[\sum_{h=1}^H\ell(s_h,a_h,h) Q^{\pi_k}_h(s_h,a_h;\ell,P) \right]\\
&=2\mathbb{E}_k\left[\sum_{h=1}^H\sum_{(s,a)\in\cS\times\cA}n_k(s,a,h)\ell(s,a,h)Q^{\pi_k}_h(s,a;\ell,P) \right]
\end{align*}
where the first inequality holds because $(\sum_{h=1}^H x_h)^2\leq 2\sum_{h=1}^H x_h (\sum_{m=h}^H x_m)$. Moreover,
\begin{align*}
\mathbb{E}_k\left[\sum_{h=1}^H\sum_{(s,a)\in\cS\times\cA}n_k(s,a,h)\ell(s,a,h)Q^{\pi_k}_h(s,a;\ell,P) \right]
&= \sum_{h=1}^H\sum_{(s,a)\in\cS\times\cA}\ell(s,a,h)Q^{\pi_k}_h(s,a;\ell,P)\mathbb{E}_k\left[n_k(s,a,h)\right]\\
&=\sum_{h=1}^H\sum_{(s,a)\in\cS\times\cA}\ell(s,a,h)Q^{\pi_k}_h(s,a;\ell,P)q_k(s,a,h)\\
&=\langle \bm{q_k}, \bm{\ell}\odot\bm{Q^{P,\pi_k,\ell}}\rangle.
\end{align*}
Therefore, it follows that
$$\mathbb{E}_k\left[\langle \bm{n_k}, \bm{\ell}\rangle^2\right]\leq 2\langle \bm{q_k}, \bm{\ell}\odot\bm{Q^{P,\pi_k,\ell}}\rangle.$$
Next, observe that
\begin{align*}
\langle \bm{q_k},\bm{\ell}\odot \bm{Q^{P,\pi_k,\ell}}\rangle&\leq B \sum_{h=1}^H\sum_{(s,a)\in\cS\times\cA}Q^{\pi_k}_h(s,a;\ell,P)q_k(s,a,h)\\
&=B\sum_{h=1}^H\sum_{(s,a)\in\cS\times\cA} \pi_k(a\mid s, h)Q^{\pi_k}_h(s,a;\ell,P)\left(\sum_{a'\in \cA}q_k(s,a',h)\right)\\
&=B\sum_{h=1}^H\sum_{s\in\cS} V^{\pi_k}_h(s;\ell,P)\left(\sum_{a'\in \cA}q_k(s,a',h)\right)\\
&=B\sum_{h=1}^H\sum_{s\in\cS} \left(\sum_{m=h}^H\sum_{(s'',a'')\in \cS\times\cA} q_k(s'',a'',m\mid s,h)\ell(s'',a'',m)\right)\left(\sum_{a'\in \cA}q_k(s,a',h)\right)\\
&=B\sum_{h=1}^H\sum_{m=h}^H\sum_{(s'',a'')\in \cS\times\cA}\sum_{s\in\cS}q_k(s'',a'',m\mid s,h)\left(\sum_{a'\in \cA}q_k(s,a',h)\right)\ell(s'',a'',m)\\
&=B\sum_{h=1}^H\sum_{m=h}^H\sum_{(s'',a'')\in \cS\times\cA}q_k(s'',a'',m)\ell(s'',a'',m)\\
&=B\sum_{h=1}^H \sum_{(s,a)\in \cS\times\cA}h\cdot q_k(s,a,h)\ell(s,a,h)\\
&=B\langle \bm{q_k},\bm{\vec h}\odot \bm{\ell}\rangle
\end{align*}
where the first inequality holds because $\ell(s,a,h)\leq B$ for any $(s,a,h)$, the first equality holds because
$$q_k(s,a,h)= \pi_k(a\mid s, h)\sum_{a'\in \cA}q_k(s,a',h),$$
the fifth equality follows from 
$$\sum_{s\in\cS}q_k(s'',a'',m\mid s,h)\left(\sum_{a'\in \cA}q_k(s,a',h)\right)=q_k(s'',a'',m).$$
Therefore, we get that $\langle \bm{q_k}, \bm{\ell}\odot \bm{Q^{P,\pi_k,\ell}}\rangle\leq B\langle \bm{q_k},\bm{\vec h}\odot \bm{\ell}\rangle$ as required.
\end{proof}

The following lemma is from the first statement of \cite[Lemma 7]{ssp-adversarial-unknown} with a few modifications to adapt the proof to our setting.
\begin{lemma}{\rm \citep[Lemma 7]{ssp-adversarial-unknown}}\label{lemma7}
Let $\pi$ be a policy, and let $\widetilde P,\widehat P$ be two different transition kernels. We denote by $\widetilde q$ the occupancy measure $q^{\widetilde P,\pi}$ associated with $\widetilde P$ and $\pi$, and we denote by $\widehat q$ the occupancy measure $q^{\widehat P,\pi}$ associated with $\widehat P$ and $\pi$. Then
\begin{align*}
\widehat q(s,a,h) - \widetilde q(s,a,h)
&=\sum_{(s',a',s'')\in\cS\times \cA\times \cS}\sum_{m=1}^{h-1}\widetilde q(s',a',m)\left(\widehat P(s''\mid s',a',m)- \widetilde P(s''\mid s',a',m)\right)\widehat q(s,a,h\mid s'',m+1).
\end{align*}

\end{lemma}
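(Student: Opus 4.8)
The plan is to prove the identity by induction on $h$, exploiting the forward recursions satisfied by occupancy measures and by conditional occupancy measures. First I would record these recursions: for any policy $\pi$ and transition kernel $P$, and for $h\ge 2$,
\[
q^{P,\pi}(s,a,h) = \pi(a\mid s,h)\sum_{(s',a')\in\cS\times\cA}q^{P,\pi}(s',a',h-1)\,P(s\mid s',a',h-1),
\]
and likewise, for $1\le m\le h-2$,
\[
q^{P,\pi}(s,a,h\mid s'',m+1) = \pi(a\mid s,h)\sum_{(s',a')\in\cS\times\cA}q^{P,\pi}(s',a',h-1\mid s'',m+1)\,P(s\mid s',a',h-1),
\]
while $q^{P,\pi}(s,a,h\mid s'',h) = \pi(a\mid s,h)\,\mathbf{1}[s=s'']$. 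Each follows immediately from the Markov property, by conditioning the law of the step-$h$ state-action pair on the step-$(h-1)$ state-action pair. I would also use the base fact that $q^{P,\pi}(s,a,1)=p(s)\pi(a\mid s,1)$ does not depend on the interior transitions of $P$.

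For the base case $h=1$ the sum on the right-hand side is empty and $\widehat q(s,a,1)-\widetilde q(s,a,1)=0$, so both sides vanish. For the inductive step I would apply the first recursion to both $\widehat q(\cdot,\cdot,h)$ and $\widetilde q(\cdot,\cdot,h)$ and then telescope at step $h-1$ by the usual add-and-subtract: write each summand
\[
\widehat q(s',a',h-1)\,\widehat P(s\mid s',a',h-1)-\widetilde q(s',a',h-1)\,\widetilde P(s\mid s',a',h-1)
\]
as $\widetilde q(s',a',h-1)\bigl(\widehat P-\widetilde P\bigr)(s\mid s',a',h-1)+\bigl(\widehat q-\widetilde q\bigr)(s',a',h-1)\,\widehat P(s\mid s',a',h-1)$. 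The first group, after inserting $\mathbf{1}[s=s'']$ and using $\widehat q(s,a,h\mid s'',h)=\pi(a\mid s,h)\mathbf{1}[s=s'']$, is exactly the $m=h-1$ term of the target expression. In the second group I would substitute the inductive hypothesis for $\widehat q(s',a',h-1)-\widetilde q(s',a',h-1)$, interchange the order of summation, and apply the conditional-occupancy recursion $\pi(a\mid s,h)\sum_{s',a'}\widehat q(s',a',h-1\mid s'',m+1)\widehat P(s\mid s',a',h-1)=\widehat q(s,a,h\mid s'',m+1)$; this turns the second group into precisely the $m=1,\dots,h-2$ terms. Summing the two contributions yields the identity for step $h$.

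The only delicate points are bookkeeping: confirming that the transition ``peeled off'' at step $h-1$ occupies the $m=h-1$ slot, with the degenerate conditional occupancy $\widehat q(s,a,h\mid s'',h)$ playing the role of the suffix factor there, and checking that the conditional-occupancy recursion is invoked only in its valid range ($m\le h-2$, i.e.\ a nondegenerate suffix from step $m+1$ to step $h$). I do not anticipate a genuine obstacle; the substance lies entirely in keeping the five summation indices straight through the interchange of sums, and the result is \citep[Lemma 7]{ssp-adversarial-unknown} transcribed to the finite-horizon indexing used here.
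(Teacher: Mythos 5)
Your proposal is correct and follows essentially the same route as the paper's proof: induction on $h$ with the identical add-and-subtract decomposition $\widehat q\,\widehat P-\widetilde q\,\widetilde P=\widetilde q\,(\widehat P-\widetilde P)+(\widehat q-\widetilde q)\,\widehat P$ at step $h-1$, the induction hypothesis absorbed via the conditional-occupancy recursion to produce the $m\le h-2$ terms, and the degenerate factor $\widehat q(s,a,h\mid s'',h)=\pi(a\mid s,h)\mathbf{1}[s=s'']$ supplying the $m=h-1$ term. No gaps.
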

\begin{proof}
We prove the first statement by induction on $h$. When $h=1$, note that
$$\widehat q(s,a,h)= \widetilde q(s,a,h) = \pi(a\mid s,1)\cdot p(s).$$
Hence, both the left-hand side and right-hand side are equal to 0. Next, assume that the equality holds with $h-1\geq 1$. Then we consider $h$. By the definition of occupancy measures,
\begin{align*}
\widehat q(s,a,h) - \widetilde q(s,a,h)
&=\pi(a\mid s,h)\sum_{(s',a')\in\cS\times \cA}(\widehat P(s\mid s',a',h-1)\widehat q(s',a',h-1)-\widetilde P(s\mid s',a',h-1)\widetilde q(s',a',h-1))\\
&=\underbrace{\pi(a\mid s,h)\sum_{(s',a')\in\cS\times \cA}\widehat P(s\mid s',a',h-1)(\widehat q(s',a',h-1)-\widetilde q(s',a',h-1))}_{\text{Term 1}}\\
&\qquad + \underbrace{\pi(a\mid s,h)\sum_{(s',a')\in\cS\times \cA}\widetilde q(s',a',h-1)(\widehat P(s\mid s',a',h-1)-\widetilde P(s\mid s',a',h-1))}_{\text{Term 2}}.
\end{align*}
To provide an upper bound on Term 1, we use the induction hypothesis for $h-1$:
\begin{align*}
&\widehat q(s',a',h-1) - \widetilde q(s',a',h-1)\\
&=\sum_{(s'',a'',s''')\in\cS\times\cA\times\cS}\sum_{m=1}^{h-2}\widetilde q(s'',a'',m)\left((\widehat P- \widetilde P)(s'''\mid s'',a'',m)\right)\widehat q(s',a',h-1\mid s''',m+1)
\end{align*}
where 
$$(\widehat P- \widetilde P)(s'''\mid s'',a'',m)=\widehat P(s'''\mid s'',a'',m)- \widetilde P(s'''\mid s'',a'',m).$$
In addition, observe that
$$\pi(a\mid s,h)\sum_{(s',a')\in\cS\times \cA} \widehat P(s\mid s',a',h-1)\widehat q(s',a',h-1\mid s''',m+1)=\widehat q(s,a,h\mid s''',m+1).$$
Therefore, it follows that Term 1 is equal to
\begin{align*}
&\sum_{(s'',a'',s''')\in\cS\times\cA\times\cS}\sum_{m=1}^{h-2}\widetilde q(s'',a'',m)\left((\widehat P- \widetilde P)(s'''\mid s'',a'',m)\right)\widehat q(s,a,h\mid s''',m+1)\\
&=\sum_{(s',a',s'')\in\cS\times\cA\times\cS}\sum_{m=1}^{h-2}\widetilde q(s',a',m)\left(\widehat P(s''\mid s',a',m)- \widetilde P(s''\mid s',a',m)\right)\widehat q(s,a,h\mid s'',m+1).
\end{align*}

Next, we upper bound Term 2. Note that
$$\widehat q(s,a,h\mid s'',h) = \pi(a\mid s'',h)\cdot \mathbf{1}\left[s''=s\right].$$
Then it follows that
\begin{align*}
&\pi(a\mid s,h)(\widehat P(s\mid s',a',h-1)-\widetilde P(s\mid s',a',h-1))\\
&= \sum_{s''\in\cS} \mathbf{1}\left[s''=s\right]\cdot \pi(a\mid s'',h)(\widehat P(s''\mid s',a',h-1)-\widetilde P(s''\mid s',a',h-1))\\
&= \sum_{s''\in\cS}\widehat q(s,a,h\mid s'',h)(\widehat P(s''\mid s',a',h-1)-\widetilde P(s''\mid s',a',h-1)),
\end{align*}
implying in turn that Term 2 equals
$$\sum_{(s',a',s'')\in\cS\times \cA\times \cS}\widetilde q(s',a',h-1)(\widehat P(s''\mid s',a',h-1)-\widetilde P(s''\mid s',a',h-1))\widehat q(s,a,h\mid s'',h).$$
Adding the equivalent expression of Term 1 and that of Term 2 that we have obtained, we get the right-hand side of the statement.
\end{proof}

The following lemma is called value difference lemma~\citep{dann2017pac}. Based on~\Cref{lemma:confidence'} and~\Cref{lemma7}, we show the following lemma, which is a modification of \cite[Lemma 7, the second statement]{ssp-adversarial-unknown}.
\begin{lemma}\label{lemma:confidence''}
Let $\pi$ be a policy, and let $\widetilde P,\widehat P$ be two different transition kernels. We denote by $\widetilde q$ the occupancy measure $q^{\widetilde P,\pi}$ associated with $\widehat P$ and $\pi$, and we denote by $\widehat q$ the occupancy measure $q^{\widehat P,\pi}$ associated with $\widehat P$ and $\pi$. Let $\ell:\cS\times\cA\times[H]\rightarrow[-B, B]$ be an arbitrary function.
If $\widetilde P, \widehat P\in \cP_k$, then we have
\begin{align*}
\left|\langle\bm{\ell},\bm{\widehat q} - \bm{\widetilde q} \rangle\right|
&=\left|\sum_{(s,a,s',h)\in\cS\times\cA\times\cS\times[H]}\widetilde q(s,a,h)\left(\widehat P(s'\mid s,a,h)-\widetilde P(s'\mid s,a,h)\right)V^{\pi}_{h+1}(s';\ell, \widehat P)\right|\\
&\leq B H\sum_{(s,a,s',h)\in\cS\times \cA\times\cS\times[H]}\widetilde q(s,a,h)\epsilon_k^\star (s'\mid s,a,h)
\end{align*}
where $\bm{\widehat q},\bm{\widetilde q}, \bm{\ell}$ are the vector representations of $\widehat q, \widetilde q, \ell$.
\end{lemma}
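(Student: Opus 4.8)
The plan is to establish the claimed identity first and then bound it. For the identity, I would start from the value difference lemma of \cite{dann2017pac}, which expresses $\langle \bm{\ell}, \bm{\widehat q} - \bm{\widetilde q}\rangle = V_1^\pi(\ell, \widehat P) - V_1^\pi(\ell, \widetilde P)$ as $\mathbb{E}[\sum_h (\widehat P - \widetilde P)(\cdot \mid s_h, a_h, h) V_{h+1}^\pi(\cdot; \ell, \widehat P) \mid \pi, \widetilde P]$, which rewritten in occupancy-measure notation over $\widetilde q$ is exactly the middle expression. Alternatively — and this is the route I expect the paper takes, to stay consistent with its earlier setup — I would invoke \Cref{lemma7}: substitute the stated formula for $\widehat q(s,a,h) - \widetilde q(s,a,h)$ into $\sum_{(s,a,h)} \ell(s,a,h)(\widehat q(s,a,h) - \widetilde q(s,a,h))$, swap the order of summation so that the sum over $(s,a,h)$ with the conditional occupancy $\widehat q(s,a,h \mid s'', m+1)$ is collected together, and recognize that $\sum_{(s,a)} \sum_{h > m} \widehat q(s,a,h \mid s'', m+1) \ell(s,a,h) = V_{m+1}^\pi(s''; \ell, \widehat P)$ by the definition of the value function in terms of conditional occupancy measures. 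Reindexing $m+1 \mapsto h$ then yields the middle expression.

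For the inequality, the key step is the triangle inequality together with \Cref{lemma:confidence'}. Since $\widehat P, \widetilde P \in \cP_k$, \Cref{lemma:confidence'} (or rather its symmetric form \eqref{eq:epsilon-star-2}, valid under the good event $\cE$) gives $|\widehat P(s' \mid s,a,h) - \widetilde P(s' \mid s,a,h)| \le |\widehat P(s' \mid s,a,h) - P(s' \mid s,a,h)| + |P(s' \mid s,a,h) - \widetilde P(s' \mid s,a,h)| \le \epsilon_k^\star(s' \mid s,a,h)$ — one needs to check the constants absorb the factor of $2$, or equivalently apply the lemma once with the empirical kernel as the pivot so that each deviation is at most $\tfrac12 \epsilon_k^\star$. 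Then bound $|V_{h+1}^\pi(s'; \ell, \widehat P)| \le B H$, since the value function is a sum of at most $H$ terms each bounded by $B$ in absolute value. Pulling these bounds out of the sum gives the right-hand side $B H \sum_{(s,a,s',h)} \widetilde q(s,a,h) \epsilon_k^\star(s' \mid s,a,h)$.

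The main obstacle, I expect, is the bookkeeping in the swap-of-summations step: one must carefully track that the inner index $m$ ranges over $1, \ldots, h-1$ in \Cref{lemma7} but the value function aggregates the "future" steps $m+1, \ldots, H$, so after interchanging the two sums the roles invert and the range conditions have to be matched exactly for the conditional-occupancy telescoping to produce $V_{m+1}^\pi$ rather than a truncated partial sum. A secondary subtlety is making sure the constant in $\epsilon_k^\star$ genuinely dominates the sum of the two individual deviations from $P$; this is precisely why \Cref{lemma:confidence'} was stated with the inflated constants $6$ and $94$, so invoking it (rather than $\epsilon_k$ directly) is what makes the triangle inequality go through cleanly.
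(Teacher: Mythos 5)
Your proposal matches the paper's proof: the identity is obtained by substituting \Cref{lemma7} into $\langle\bm{\ell},\bm{\widehat q}-\bm{\widetilde q}\rangle$, interchanging sums, and recognizing $\sum_{(s,a,h):h>m}\widehat q(s,a,h\mid s'',m+1)\ell(s,a,h)=V^{\pi}_{m+1}(s'';\ell,\widehat P)$, and the bound follows from $|\widehat P-\widetilde P|\leq 2\epsilon_k\leq\epsilon_k^\star$ (pivoting through $\bar P_k$, exactly the route you flagged as the clean one) together with $|V^{\pi}_{h+1}|\leq BH$. You also correctly anticipated the one subtlety — that pivoting through the true kernel $P$ would cost a factor of $2$, which is why the paper uses $\epsilon_k\leq\tfrac12\epsilon_k^\star$ from the proof of \Cref{lemma:confidence'} instead.
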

\begin{proof}
First, observe that
\begin{align*}
\langle\bm{\ell},\bm{\widehat q} - \bm{\widetilde q} \rangle&=\sum_{(s,a,h)\in\cS\times\cA\times[H]}\left(\widehat q(s,a,h)-\widetilde q(s,a,h)\right)\ell(s,a,h).
\end{align*}
By \Cref{lemma7}, the right-hand side can be rewritten so that we obtain the following.
\begin{align*}
\langle\bm{\ell},\bm{\widehat q} - \bm{\widetilde q} \rangle
&=\sum_{(s,a,h)}
\sum_{(s',a',s'')}\sum_{m=1}^{h-1}\widetilde q(s',a',m)\left((\widehat P- \widetilde P)(s''\mid s',a',m)\right)\widehat q(s,a,h\mid s'',m+1)
\ell(s,a,h)\\
&=\sum_{m=1}^{H}\sum_{(s',a',s'')}\widetilde q(s',a',m)\left((\widehat P- \widetilde P)(s''\mid s',a',m)\right)\sum_{(s,a,h):h>m}\widehat q(s,a,h\mid s'',m+1)
\ell(s,a,h)\\
&=\sum_{m=1}^{H}\sum_{(s',a',s'')}\widetilde q(s',a',m)\left((\widehat P- \widetilde P)(s''\mid s',a',m)\right)V^{\pi}_{m+1}(s'';\ell, \widehat P)\\
&=\sum_{h=1}^{H}\sum_{(s',a',s'')}\widetilde q(s',a',h)\left(\widehat P(s''\mid s',a',h)-\widetilde P(s''\mid s',a',h)\right)V^{\pi}_{h+1}(s'';\ell, \widehat P).
\end{align*}
Since $\widetilde P, \widehat P\in\cP_k$, \Cref{lemma:confidence'} implies that
\begin{align*}
\left|\langle\bm{\ell},\bm{\widehat q} - \bm{\widetilde q} \rangle\right|
&\leq\sum_{h=1}^{H}\sum_{(s',a',s'')}\widetilde q(s',a',h)\left|\widehat P(s''\mid s',a',h)-\widetilde P(s''\mid s',a',h)\right|V^{\pi}_{h+1}(s'';\ell, \widehat P)\\
&\leq \sum_{h=1}^{H}\sum_{(s',a',s'')}\widetilde q(s',a',h)\left(2\epsilon_k(s''\mid s',a',h)
\right) V^{\pi}_{h+1}(s'';\ell, \widehat P)\\
&\leq \sum_{h=1}^{H}\sum_{(s',a',s'')}\widetilde q(s',a',h)\epsilon_k^\star(s''\mid s',a',h)V^{\pi}_{h+1}(s'';\ell, \widehat P)\\
&\leq B H\sum_{h=1}^{H}\sum_{(s',a',s'')}\widetilde q(s',a',h)\epsilon_k^\star(s''\mid s',a',h)\\
&=B H\sum_{(s,a,s',h)\in\cS\times\cA\times\cS\times[H]}\widetilde q(s,a,h)\epsilon_k^\star(s'\mid s,a,h)
\end{align*}
where the third inequality holds because $V^{\pi}_{h+1}(s'';\ell, \widehat P)\leq B H$, as required.
\end{proof}

\begin{lemma}\label{lemma:confidence'''}
Let $\pi$ be a policy, and let $\widetilde P,\widehat P$ be two different transition kernels. We denote by $\widetilde q$ the occupancy measure $q^{\widetilde P,\pi}$ associated with $\widetilde P$ and $\pi$, and we denote by $\widehat q$ the occupancy measure $q^{\widehat P,\pi}$ associated with $\widehat P$ and $\pi$. Let $(s,h)\in \cS\times [H]$, and consider $\widetilde q(\cdot \mid s,h), \widehat q(\cdot \mid s,h):\cS\times\cA\times\{h,\ldots, H\}$.
If $\widetilde P, \widehat P\in \cP_k$, then we have
$$\left|\langle\bm{\ell_{(h)}}, \bm{\widehat q_{(s,h)}} - \bm{\widetilde q_{(s,h)}}\rangle\right|\leq B H\sum_{(s',a',s'',m)\in\cS\times \cA\times\cS\times\{h,\ldots,H\}}\widetilde q(s',a',m\mid s,h)\epsilon_k^\star (s''\mid s',a',m)$$
where $\bm{\widetilde q_{(s,h)}},\bm{\widehat q_{(s,h)}}, \bm{\ell_{(h)}}$ are the vector representations of $\widehat q(\cdot\mid s,h), \widetilde q(\cdot\mid s,h):\cS\times\cA\times\{h,\ldots, H\}\to[0,1]$ and $\ell_{(h)}:\cS\times\cA\times[H]\rightarrow [-B, B].$
\end{lemma}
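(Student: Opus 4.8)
The plan is to replay the proof of \Cref{lemma:confidence''} after the observation that, for a fixed $(s,h)\in\cS\times[H]$, the conditional measures $\widehat q(\cdot\mid s,h)$ and $\widetilde q(\cdot\mid s,h)$ are themselves valid occupancy measures of a finite-horizon MDP whose horizon is $\{h,h+1,\ldots,H\}$, whose initial distribution at step $h$ is the point mass at $s$, and whose transition kernels are the restrictions of $\widehat P$ and $\widetilde P$ to steps $h,\ldots,H-1$. Thus \Cref{lemma7} and the argument of \Cref{lemma:confidence''} apply verbatim once the starting step is shifted from $1$ to $h$ and $p$ is replaced by $\mathbf{1}[\cdot=s]$; the only care needed is to track the index ranges, and the value function $V^{\pi}_{j+1}(\cdot;\ell_{(h)},\widehat P)$ only sees $\ell_{(h)}$ on steps $\ge j+1\ge h$, so its restriction to $\{h,\ldots,H\}$ is all that matters.

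Concretely, first I would establish the conditional analog of \Cref{lemma7}: for all $(s',a')\in\cS\times\cA$ and $m\in\{h,\ldots,H\}$,
\begin{align*}
\widehat q(s',a',m\mid s,h)-\widetilde q(s',a',m\mid s,h)
&=\sum_{(s'',a'',s''')\in\cS\times\cA\times\cS}\sum_{j=h}^{m-1}\widetilde q(s'',a'',j\mid s,h)\\
&\quad\times\left(\widehat P(s'''\mid s'',a'',j)-\widetilde P(s'''\mid s'',a'',j)\right)\widehat q(s',a',m\mid s''',j+1),
\end{align*}
proven by induction on $m$: the base case $m=h$ holds because $\widehat q(s',a',h\mid s,h)=\widetilde q(s',a',h\mid s,h)=\pi(a'\mid s',h)\mathbf{1}[s'=s]$, which does not depend on the transition kernel, and the inductive step repeats the Term~1/Term~2 split in the proof of \Cref{lemma7} with $h-1$ replaced by $j$. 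Pairing this identity with $\ell_{(h)}$, swapping the order of summation, and using $\sum_{(s',a',m):m>j}\widehat q(s',a',m\mid s''',j+1)\ell_{(h)}(s',a',m)=V^{\pi}_{j+1}(s''';\ell_{(h)},\widehat P)$, I would obtain
\begin{align*}
\langle\bm{\ell_{(h)}},\bm{\widehat q_{(s,h)}}-\bm{\widetilde q_{(s,h)}}\rangle
=\sum_{j=h}^{H}\sum_{(s',a',s'')}\widetilde q(s',a',j\mid s,h)\left(\widehat P(s''\mid s',a',j)-\widetilde P(s''\mid s',a',j)\right)V^{\pi}_{j+1}(s'';\ell_{(h)},\widehat P),
\end{align*}
which is the localized form of the displayed identity inside \Cref{lemma:confidence''}.

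Finally, taking absolute values and using that $\widehat P,\widetilde P\in\cP_k$ both lie within $\epsilon_k$ of $\bar P_k$, so $|\widehat P-\widetilde P|\le 2\epsilon_k\le\epsilon_k^\star$ by the computation in~\eqref{eq:lemma:confidence'1} (valid under the good event $\cE$), together with $|V^{\pi}_{j+1}(s'';\ell_{(h)},\widehat P)|\le BH$ since it is a sum of at most $H$ terms each in $[-B,B]$, gives
\begin{align*}
\left|\langle\bm{\ell_{(h)}},\bm{\widehat q_{(s,h)}}-\bm{\widetilde q_{(s,h)}}\rangle\right|
\le BH\sum_{(s',a',s'',m)\in\cS\times\cA\times\cS\times\{h,\ldots,H\}}\widetilde q(s',a',m\mid s,h)\,\epsilon_k^\star(s''\mid s',a',m),
\end{align*}
which is the claim. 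I expect no genuine difficulty: the whole content already appears in \Cref{lemma7} and \Cref{lemma:confidence''}, and the only real task is the careful re-indexing of the induction so that the sum over $j$ starts at $h$ and the boundary term $m=j$ is handled correctly; the one minor check is that $V^{\pi}_{j+1}$ over the shortened horizon still obeys the crude $BH$ bound, which it does.
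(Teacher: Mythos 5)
Your proposal is correct and matches the paper's proof, which simply states that the result follows by the same argument as Lemmas \ref{lemma7} and \ref{lemma:confidence''}; you have merely written out the re-indexed induction (starting at step $h$ with point-mass initial distribution at $s$) that the paper leaves implicit. No issues.
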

\begin{proof}
The proof follows the same argument used to prove Lemmas \ref{lemma7} and \ref{lemma:confidence''}.
\end{proof}

The following lemma is called a Bellman-type law of total variance lemma~\citep{azar2017, ssp-adversarial-unknown}. We follow the proof of \cite[Lemma 4]{ssp-adversarial-unknown} after some changes to adapt to our setting.
\begin{lemma}{\rm \citep[Lemma 4]{ssp-adversarial-unknown}}\label{lemma4}
Let $\pi_k$ be the policy for episode $k$, $P$ be an arbitrary transition kernel, and let $q_k$ denote the occupancy measure $q^{P,\pi_k}$. Let $\ell:\cS\times \cA\times[H]\to[-B,B]$ be an arbitrary reward function, and define $\mathbb{V}_k(s,a,h)=\var_{s'\sim P(\cdot\mid s,a,h)}\left[V^{\pi_k}_{h+1}(s';\ell,P)\right]$. Then
$$\langle \bm{q_k},\bm{\mathbb{V}_k}\rangle\leq \var\left[\langle \bm{n_k},\bm{\ell}\rangle\mid \ell,\pi_k, P\right]$$
where $\bm{q_k},\bm{\mathbb{V}_k}, \bm{n_k},\bm{\ell}$ are the vector representations of $q_k, \mathbb{V}_k,n_k,\ell.$ %
\end{lemma}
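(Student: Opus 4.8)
The plan is to apply the classical law of total variance along the trajectory generated by $\pi_k$ and $P$, keep only the contributions coming from the random transitions, and recognize these as the conditional variances defining $\mathbb{V}_k$. First I would note that $\langle\bm{n_k},\bm{\ell}\rangle=\sum_{h=1}^H\ell(s_h,a_h,h)=:R$, where $(s_h,a_h)=(s_h^{P,\pi_k},a_h^{P,\pi_k})$, so the statement is about $\var[R]$ over the randomness of $s_1\sim p$, $a_h\sim\pi_k(\cdot\mid s_h,h)$, and $s_{h+1}\sim P(\cdot\mid s_h,a_h,h)$. I would then introduce the filtration that reveals the trajectory one coordinate at a time, alternating state then action: $\mathcal{F}_h^-=\sigma(s_1,a_1,\dots,s_{h-1},a_{h-1},s_h)$ and $\mathcal{F}_h^+=\sigma(s_1,a_1,\dots,s_{h-1},a_{h-1},s_h,a_h)$, and form the Doob martingale obtained by conditioning $R$ on the increasing chain $\{\emptyset,\Omega\}\subset\mathcal{F}_1^-\subset\mathcal{F}_1^+\subset\mathcal{F}_2^-\subset\cdots\subset\mathcal{F}_H^+$, which starts at $\mathbb{E}[R]=V_1^{\pi_k}(\ell,P)$ and ends at $R$ (using $V_{H+1}^{\pi_k}=0$).

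Using the Bellman recursions $Q_h^{\pi_k}(s,a;\ell,P)=\ell(s,a,h)+\sum_{s'}P(s'\mid s,a,h)V_{h+1}^{\pi_k}(s';\ell,P)$ and $V_h^{\pi_k}(s;\ell,P)=\sum_a\pi_k(a\mid s,h)Q_h^{\pi_k}(s,a;\ell,P)$, the martingale values are, explicitly,
\[
\mathbb{E}[R\mid\mathcal{F}_h^-]=\sum_{j<h}\ell(s_j,a_j,j)+V_h^{\pi_k}(s_h;\ell,P),\qquad \mathbb{E}[R\mid\mathcal{F}_h^+]=\sum_{j<h}\ell(s_j,a_j,j)+Q_h^{\pi_k}(s_h,a_h;\ell,P).
\]
Since the increments of a Doob martingale are pairwise orthogonal in $L^2$, $\var[R]$ equals the sum of the expected squared increments. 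The \emph{action} increments $\mathbb{E}[R\mid\mathcal{F}_h^+]-\mathbb{E}[R\mid\mathcal{F}_h^-]=Q_h^{\pi_k}(s_h,a_h;\ell,P)-V_h^{\pi_k}(s_h;\ell,P)$ contribute nonnegative amounts, so I would discard them. The \emph{state} increments, for $h\ge2$, are
\[
\mathbb{E}[R\mid\mathcal{F}_h^-]-\mathbb{E}[R\mid\mathcal{F}_{h-1}^+]=V_h^{\pi_k}(s_h;\ell,P)-\sum_{s'}P(s'\mid s_{h-1},a_{h-1},h-1)V_h^{\pi_k}(s';\ell,P),
\]
and, conditioning on $\mathcal{F}_{h-1}^+$ — which fixes $(s_{h-1},a_{h-1})$ and makes $s_h\sim P(\cdot\mid s_{h-1},a_{h-1},h-1)$ — this increment has conditional mean $0$ and conditional second moment exactly $\mathbb{V}_k(s_{h-1},a_{h-1},h-1)$ by the definition of $\mathbb{V}_k$.

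Dropping also the nonnegative $h=1$ state increment $\var_{s_1\sim p}[V_1^{\pi_k}(s_1;\ell,P)]$, I obtain
\[
\var[R]\ \ge\ \sum_{h=2}^{H}\mathbb{E}\left[\mathbb{V}_k(s_{h-1},a_{h-1},h-1)\right]\ =\ \sum_{h=1}^{H-1}\mathbb{E}\left[\mathbb{V}_k(s_h,a_h,h)\right],
\]
and since $V_{H+1}^{\pi_k}=0$ forces $\mathbb{V}_k(s,a,H)=0$ for all $(s,a)$, the right-hand side equals $\sum_{h=1}^{H}\sum_{(s,a)\in\cS\times\cA}q_k(s,a,h)\mathbb{V}_k(s,a,h)=\langle\bm{q_k},\bm{\mathbb{V}_k}\rangle$, which is the claim. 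The step I expect to need the most care is the bookkeeping: choosing the alternating state/action filtration so that each state increment isolates exactly one step of transition noise, and verifying that its conditional second moment is precisely the stated conditional variance; there is no real analytic difficulty, since the whole argument rests only on orthogonality of martingale differences together with the Bellman equations.
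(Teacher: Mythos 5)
Your proof is correct and is essentially the paper's argument: both rest on a Bellman-type law of total variance along the trajectory, with the transition-noise contributions identified as $\mathbb{V}_k(s_h,a_h,h)$ and the remaining nonnegative (action- and initial-state-) variance terms discarded. The only difference is presentational — you package the telescoping and cross-term cancellations as orthogonality of Doob martingale increments over the alternating state/action filtration, whereas the paper carries out the same expansion step by step and verifies each cross term vanishes by conditioning.
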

\begin{proof}
For ease of notation, let $s_h$ and $a_h$ denote $s_h^{P,\pi_k}$ and $a_h^{P,\pi_k}$, respectively for $h\in[H]$. Moreover, let $V(s,h)$ denote $V^{\pi}_h(s;\ell,P)$ for $(s,h)\in\cS\times[H]$.
Note that
\begin{align*}
\langle \bm{n_k},\bm{\ell}\rangle= \sum_{(s,a,h)\cS\times\cA\times[H]}\ell(s,a,h)n_k(s,a,h)=\sum_{h=1}^H \ell\left(s_h, a_h,h\right).
\end{align*}
For ease of notation, let $\mathbb{E}_k\left[\cdot\right]$ and $\var_k\left[\cdot\right]$ denote $\mathbb{E}\left[\cdot \mid \ell,\pi_k, P\right]$ and $\var\left[\cdot \mid \ell,\pi_k, P\right]$, respectively. 
Then
\begin{align*}
    \mathbb{E}_k\left[\langle \bm{n_k},\bm{\ell}\rangle\right] = \mathbb{E}_k\left[\sum_{h=1}^H \ell\left(s_h, a_h,h\right)\right]= \mathbb{E}_k\left[\mathbb{E}\left[\sum_{h=1}^H \ell\left(s_h, a_h,h\right)\mid \ell,\pi_k,P, s_1\right]\right]
     =\mathbb{E}_k\left[V(s_1,1)\right].
    \end{align*}
Moreover, 
\begin{align*}
\text{Var}_k\left[\langle \bm{n_k},\bm{\ell}\rangle\right]
&=\mathbb{E}_k\left[\left(\sum_{h=1}^H \ell\left(s_h, a_h,h\right)- \mathbb{E}_k\left[V(s_1,1)\right]\right)^2\right]\\
&=\mathbb{E}_k\left[\left(\sum_{h=1}^H \ell\left(s_h, a_h,h\right)- V(s_1,1)+V(s_1,1)-\mathbb{E}_k\left[V(s_1,1)\right] \right)^2\right]\\
&=\mathbb{E}_k\left[\left(\sum_{h=1}^H \ell\left(s_h, a_h,h\right)- V(s_1,1)\right)^2\right]+\mathbb{E}_k\left[\left(V(s_1,1)-\mathbb{E}_k\left[V(s_1,1)\right] \right)^2\right]\\
&\quad + 2\mathbb{E}_k\left[\left(\sum_{h=1}^H \ell\left(s_h, a_h,h\right)- V(s_1,1)\right)\left(V(s_1,1)-\mathbb{E}_k\left[V(s_1,1)\right] \right)\right]\\
&\geq\mathbb{E}_k\left[\left(\sum_{h=1}^H \ell\left(s_h, a_h,h\right)- V(s_1,1)\right)^2\right]
\end{align*}
where the inequality is by
$\mathbb{E}_k\left[V(s_1,1)-\mathbb{E}_k\left[V(s_1,1)\right]\mid s_1\right]=0$ and $\left(V(s_1,1)-\mathbb{E}_k\left[V(s_1,1)\right] \right)^2\geq 0$.
Therefore,
\begin{align*}
\text{Var}_k\left[\langle \bm{n_k},\bm{\ell}\rangle\right]
&\geq\mathbb{E}_k\left[\left(\sum_{h=2}^H \ell\left(s_h, a_h,h\right)- V(s_2,2)+\ell\left(s_1, a_1,1\right)+V(s_2,2)-V(s_1,1)\right)^2\right].
\end{align*}
Note that 
\begin{align}\label{CLlemma4:eq1}
\begin{aligned}
\mathbb{E}_k\left[\sum_{h=2}^H \ell\left(s_h, a_h,h\right)- V(s_2,2)\mid s_1,a_1,s_2\right]&=\mathbb{E}_k\left[\sum_{h=2}^H \ell\left(s_h, a_h,h\right)\mid s_2\right]- V(s_2,2)= 0.
\end{aligned}
\end{align}
Then 
\begin{align*}
\text{Var}_k\left[\langle \bm{n_k},\bm{\ell}\rangle\right]
&\geq\mathbb{E}_k\left[\left(\sum_{h=2}^H \ell\left(s_h, a_h,h\right)- V(s_2,2)\right)^2\right]+\mathbb{E}_k\left[\left(\ell\left(s_1,a_1,1\right)+V(s_2,2)-V(s_1,1)\right)^2\right]\\
&\quad + 2\mathbb{E}_k\left[\mathbb{E}_k\left[\left(\sum_{h=2}^H \ell\left(s_h, a_h,h\right)- V(s_2,2)\right)\left(\ell\left(s_1,a_1,1\right)+V(s_2,2)-V(s_1,1) \right)\mid s_1,a_1,s_2\right]\right]\\
&=\mathbb{E}_k\left[\left(\sum_{h=2}^H \ell\left(s_h, a_h,h\right)- V(s_2,2)\right)^2\right]+\mathbb{E}_k\left[\left(\ell\left(s_1,a_1,1\right)+V(s_2,2)-V(s_1,1) \right)^2\right]\\
&\quad + 2\mathbb{E}_k\left[\left(\ell\left(s_1,a_1,1\right)+V(s_2,2)-V(s_1,1) \right)\mathbb{E}_k\left[\sum_{h=2}^H \ell\left(s_h, a_h,h\right)- V(s_2,2)\mid s_1,a_1,s_2\right]\right]\\
&=\mathbb{E}_k\left[\left(\sum_{h=2}^H \ell\left(s_h, a_h,h\right)- V(s_2,2)\right)^2\right]+\mathbb{E}_k\left[\left(\ell\left(s_1,a_1,1\right)+V(s_2,2)-V(s_1,1)\right)^2\right]
\end{align*}
where the last equality follows from~\eqref{CLlemma4:eq1}.
Here, the second term from the right-most side can be bounded from below as follows.
\begin{align*}
&\mathbb{E}_k\left[\left(\ell\left(s_1,a_1,1\right)+V(s_2,2)-V(s_1,1) \right)^2\right]\\
&=\mathbb{E}_k\left[\left(\ell\left(s_1,a_1,1\right)+\sum_{s'\in \cS}P(s'\mid s_1, a_1,1)V(s',2)-V(s_1,1) +V(s_2,2)-\sum_{s'\in \cS}P(s'\mid s_1, a_1,1)V(s',2)\right)^2\right]\\
&=\mathbb{E}_k\left[\left(\ell\left(s_1,a_1,1\right)+\sum_{s'\in \cS}P(s'\mid s_1, a_1,1)V(s',2)-V(s_1,1) \right)^2\right]\\
&\quad + \mathbb{E}_k\left[\left(V(s_2,2)-\sum_{s'\in \cS}P(s'\mid s_1, a_1,1)V(s',2)\right)^2\right]\\
&\quad + 2\mathbb{E}_k\left[\left(\ell\left(s_1,a_1,1\right)+\sum_{s'\in \cS}P(s'\mid s_1, a_1,1)V(s',2)-V(s_1,1) \right)\left(V(s_2,2)-\sum_{s'\in \cS}P(s'\mid s_1, a_1,1)V(s',2)\right)\right]\\
&=\mathbb{E}_k\left[\left(\ell\left(s_1,a_1,1\right)+\sum_{s'\in \cS}P(s'\mid s_1, a_1,1)V(s',2)-V(s_1,1) \right)^2\right]\\
&\quad + \mathbb{E}_k\left[\left(V(s_2,2)-\sum_{s'\in \cS}P(s'\mid s_1, a_1,1)V(s',2)\right)^2\right]\\
&\geq \mathbb{E}_k\left[\mathbb{V}_k(s_1,a_1,1)\right]
\end{align*}
where third equality holds because
\begin{align*}
&\mathbb{E}_k\left[\left(\ell\left(s_1,a_1,1\right)+\sum_{s'\in \cS}P(s'\mid s_1, a_1,1)V(s',2)-V(s_1,1) \right)\left(V(s_2,2)-\sum_{s'\in \cS}P(s'\mid s_1, a_1,1)V(s',2)\right)\mid s_1,a_1\right]\\
&=\left(\ell\left(s_1,a_1,1\right)+\sum_{s'\in \cS}P(s'\mid s_1, a_1,1)V(s',2)-V(s_1,1)  \right)\mathbb{E}_k\left[V(s_2,2)-\sum_{s'\in \cS}P(s'\mid s_1, a_1,1)V(s',2)\mid s_1,a_1\right]\\
&=\left(\ell\left(s_1,a_1,1\right)+\sum_{s'\in \cS}P(s'\mid s_1, a_1,1)V(s',2)-V(s_1,1)  \right)\times 0
\end{align*}
and the last inequality holds because
\begin{align*}
\mathbb{E}_k\left[\left(V(s_2,2) -\sum_{s'\in \cS}P(s'\mid s_1, a_1,1)V(s',2)\right)^2\right]= \mathbb{E}_k\left[\mathbb{V}_k(s_1,a_1,1)\right].
\end{align*}
Then it follows that
\begin{align*}
\text{Var}_k\left[\langle \bm{n_k},\bm{\ell}\rangle\right]&\geq\mathbb{E}_k\left[\left(\sum_{h=1}^H \ell\left(s_h, a_h,h\right)- V(s_1,1)\right)^2\right]\geq \mathbb{E}_k\left[\left(\sum_{h=2}^H \ell\left(s_h, a_h,h\right)- V(s_2,2)\right)^2\right]+ \mathbb{E}_k\left[\mathbb{V}_k(s_1,a_1,1)\right].
\end{align*}
Repeating the same argument, we deduce that
\begin{align*}
\text{Var}_k\left[\langle \bm{n_k},\bm{\ell}\rangle\right]&\geq\sum_{h=1}^H \mathbb{E}_k\left[\mathbb{V}_k(s_h,a_h,h)\right]= \sum_{(s,a,h)\in\cS\times \cA\times [H]}q_k(s,a,h)\mathbb{V}_k(s,a,h)=\langle \bm{q_k},\bm{\mathbb{V}_k}\rangle,
\end{align*}
as required.
\end{proof}

The following lemma is useful when we prove \Cref{theorem:U_k}. The proof is inspired by \cite[Lemma 10]{ssp-adversarial-unknown} with a few modifications.
\begin{lemma}\label{lemma10}
Assume that the good event $\cE$ holds. Let $\pi_k$ be any policy for episode $k\in[K]$, and let $q_k$ denote the occupancy measure $q^{P,\pi_k}:\cS\times\cA\times[H]\to [0,1]$. Let $\ell:\cS\times\cA\times[H]\to[-B, B]$ be an arbitrary reward function. Then
\begin{align*}
&\left|\sum_{(s,a,s',h)}q_k(s,a,h)\left(P-P_k\right)(s'\mid s,a,h)\left(V^{\pi_k}_{h+1}(s';\ell,P_k)-V^{\pi_k}_{h+1}(s';\ell,P)\right)\right|\\
&\leq 10^4 BH^2S^2 \left(\ln \frac{HSAK}{\delta}\right)^2 \sum_{(s,a,h)} \frac{q_k(s,a,h)}{\max\{1, N_k(s,a,h)\}}
\end{align*}
for any $P_k\in \cP_k$ where $\left(P-P_k\right)(s'\mid s,a,h)=P(s'\mid s,a,h)-P_k(s'\mid s,a,h)$.

\end{lemma}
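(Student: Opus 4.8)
The plan is to bound the left-hand side by
\[
\sum_{(s,a,s',h)}q_k(s,a,h)\,\bigl|(P-P_k)(s'\mid s,a,h)\bigr|\,\bigl|W_{h+1}(s')\bigr|,\qquad W_{h+1}(s'):=V^{\pi_k}_{h+1}(s';\ell,P_k)-V^{\pi_k}_{h+1}(s';\ell,P),
\]
and to exploit that \emph{both} the transition error and the value-function error $W_{h+1}(s')$ scale like $N_k^{-1/2}$, so their product scales like $N_k^{-1}$. Under $\cE$ we have $P,P_k\in\cP_k$, so \Cref{lemma:confidence'} gives $|(P-P_k)(s'\mid s,a,h)|\le\epsilon_k^\star(s'\mid s,a,h)=6\sqrt{P(s'\mid s,a,h)L/\max\{1,N_k(s,a,h)\}}+94L/\max\{1,N_k(s,a,h)\}$ with $L=\ln(HSAK/\delta)$, which we may assume is at least $1$. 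I would split $\epsilon_k^\star$ into its ``$\sqrt{P/N}$'' part and its ``$L/N$'' part and handle them separately.

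For the $L/N$ part, the crude bound $|W_{h+1}(s')|\le 2BH$ (a value function of $\ell\in[-B,B]$ over at most $H$ steps) suffices; summing over $s'$ costs a factor $S$, so this piece contributes at most $\mathcal{O}(BHSL)\sum_{(s,a,h)}q_k(s,a,h)/\max\{1,N_k(s,a,h)\}$, already of the target form. For the $\sqrt{P/N}$ part, I would instead bound $|W_{h+1}(s')|$ via the value-difference lemma in the form of \Cref{lemma:confidence'''}, applied with $\widetilde P=P$, $\widehat P=P_k$ and conditioning index $(s',h+1)$ (so $\langle\bm{\ell},\bm{\widehat q_{(s',h+1)}}-\bm{\widetilde q_{(s',h+1)}}\rangle=W_{h+1}(s')$), and using $\sum_{s'''}\sqrt{P(s'''\mid s'',a'',m)}\le\sqrt S$ to get $\sum_{s'''}\epsilon_k^\star(s'''\mid s'',a'',m)\le 100SL/\sqrt{\max\{1,N_k(s'',a'',m)\}}$; this yields $|W_{h+1}(s')|\le 100BHSL\sum_{(s'',a'',m):m\ge h+1}q^{P,\pi_k}(s'',a'',m\mid s',h+1)/\sqrt{\max\{1,N_k(s'',a'',m)\}}$. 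Substituting, the $\sqrt{P/N}$ part is bounded by $\mathcal{O}(BHSL^{3/2})$ times
\[
\Sigma:=\sum_{(s,a,s',h)}\ \sum_{(s'',a'',m):m\ge h+1}\frac{q_k(s,a,h)\sqrt{P(s'\mid s,a,h)}\,q^{P,\pi_k}(s'',a'',m\mid s',h+1)}{\sqrt{\max\{1,N_k(s,a,h)\}}\,\sqrt{\max\{1,N_k(s'',a'',m)\}}}.
\]

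The heart of the argument, and the step I expect to be the main obstacle, is bounding $\Sigma$, because its ``downstream'' factor $N_k(s'',a'',m)^{-1/2}$ is attached to states visited after step $h+1$ rather than to $(s,a,h)$; a plain AM--GM on the two $N_k^{-1/2}$ factors would leave a bare $\sqrt{P(s'\mid s,a,h)}$ that blocks telescoping. The key is a \emph{weighted} AM--GM, $\frac{\sqrt{P(s'\mid s,a,h)}}{\sqrt{\max\{1,N_k(s,a,h)\}}\sqrt{\max\{1,N_k(s'',a'',m)\}}}\le\frac12\bigl(\frac{1}{\max\{1,N_k(s,a,h)\}}+\frac{P(s'\mid s,a,h)}{\max\{1,N_k(s'',a'',m)\}}\bigr)$, which retains a \emph{full} factor $P(s'\mid s,a,h)$ in the second term. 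In the first resulting term, summing $q^{P,\pi_k}(\cdot\mid s',h+1)$ over $(s'',a'',m)$ with $m\ge h+1$ gives at most $H$ and summing over $s'$ costs $S$, so it is $\mathcal{O}(SH)\sum_{(s,a,h)}q_k(s,a,h)/\max\{1,N_k(s,a,h)\}$. In the second term I would collapse $\sum_{(s,a)}q_k(s,a,h)P(s'\mid s,a,h)=\mathbb{P}[s^{P,\pi_k}_{h+1}=s']$, apply the Markov identity $\sum_{s'}\mathbb{P}[s^{P,\pi_k}_{h+1}=s']\,q^{P,\pi_k}(s'',a'',m\mid s',h+1)=q_k(s'',a'',m)$ for $m\ge h+1$, and sum over $h<m$ (a factor $\le H$), which recovers $\mathcal{O}(H)\sum_{(s'',a'',m)}q_k(s'',a'',m)/\max\{1,N_k(s'',a'',m)\}$. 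Hence $\Sigma=\mathcal{O}(SH)\sum_{(s,a,h)}q_k(s,a,h)/\max\{1,N_k(s,a,h)\}$, the $\sqrt{P/N}$ part is $\mathcal{O}(BH^2S^2L^{3/2})\sum_{(s,a,h)}q_k(s,a,h)/\max\{1,N_k(s,a,h)\}$, and adding the $L/N$ piece and using $L\ge1$ and $H,S\ge1$, the explicit constants fit comfortably under $10^4BH^2S^2L^2$, which proves the claim.
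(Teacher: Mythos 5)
Your proposal is correct and follows essentially the same route as the paper's proof: bound $|(P-P_k)|$ by $\epsilon_k^\star$ via \Cref{lemma:confidence'}, bound the value-function difference $W_{h+1}(s')$ by a second application of the value-difference machinery (\Cref{lemma:confidence'''} with $\widetilde P = P$, $\widehat P = P_k$), and then combine the two $N_k^{-1/2}$ factors into an $N_k^{-1}$ factor while using the Chapman--Kolmogorov identity $\sum_{s'}\mathbb{P}[s_{h+1}^{P,\pi_k}=s']\,q^{P,\pi_k}(s'',a'',m\mid s',h+1)=q_k(s'',a'',m)$ to transfer the occupancy weight to the downstream counts. The only (immaterial) difference is that you collapse the inner sum over $s'''$ first and use a weighted AM--GM where the paper expands the product $\epsilon_k^\star\cdot\epsilon_k^\star$ into four terms and applies the Cauchy--Schwarz inequality; both yield the same $\mathcal{O}(HS)\sum_{(s,a,h)}q_k(s,a,h)/\max\{1,N_k(s,a,h)\}$ bound on the bilinear sum and comfortably fit under the stated constant.
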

\begin{proof}
Let $\bm{q^{P_k,\pi_k}_{(s',h+1)}},\bm{q^{P,\pi_k}_{(s',h+1)}},\bm{\ell}$ be the vector representations of $q^{P_k,\pi_k}(\cdot \mid s',h+1),q^{P,\pi_k}(\cdot \mid s',h+1):\cS\times\cA\times\{h+1,\ldots, H\}\to[0,1]$, and $\ell_{(h+1)}:\cS\times\cA\times\{h+1,\ldots, H\}\to[-B, B]$ respectively. Note that 
\begin{align*}
&\left|\sum_{(s,a,s',h)}q_k(s,a,h)\left(\left(P-P_k\right)(s'\mid s,a,h)\right)\left(V^{\pi_k}_{h+1}(s';\ell,P_k)-V^{\pi_k}_{h+1}(s';\ell,P)\right)\right|\\
&\leq \sum_{(s,a,s',h)}q_k(s,a,h)\epsilon_k^\star(s'\mid s,a,h)\left|\left(V^{\pi_k}_{h+1}(s';\ell,P_k)-V^{\pi_k}_{h+1}(s';\ell,P)\right)\right|\\
&= \sum_{(s,a,s',h)}q_k(s,a,h)\epsilon_k^\star(s'\mid s,a,h)\left|\langle\bm{q^{P_k,\pi_k}_{(s',h+1)}}-\bm{q^{P,\pi_k}_{(s',h+1)}}, \bm{\ell_{(h+1)}}\rangle\right|\\
&\leq B H\sum_{(s,a,s',h)}q_k(s,a,h)\epsilon_k^\star(s'\mid s,a,h) \sum_{(s'',a'',s'''), m\geq h+1}q_k(s'',a'',m\mid s',h+1)\epsilon_k^\star(s'''\mid s'',a'',m)\\
\end{align*}
where the first inequality is from~\Cref{lemma:confidence'}, the first equality holds because $V^{\pi_k}_{h+1}(s';\ell,P_k)=\langle\bm{q^{P_k,\pi_k}_{(s',h+1)}}, \bm{\ell_{(h+1)}}\rangle$ and $V^{\pi_k}_{h+1}(s';\ell,P)=\langle\bm{q^{P,\pi_k}_{(s',h+1)}}, \bm{\ell_{(h+1)}}\rangle$, the second inequality is due to \Cref{lemma:confidence'''}. Remember that the definition of $\epsilon^\star_k$ is given by
$$\epsilon^\star_k(s'\mid s,a,h) = 6\sqrt{\frac{P(s'\mid s,a,h) \ln(HSAK/\delta)}{\max\{1,N_k(s,a,h)\}}}+94\frac{\ln(HSAK/\delta)}{\max\{1,N_k(s,a,h)\}}.$$

Then it follows that
{\allowdisplaybreaks\begin{align*}
&\left(\ln\frac{HSAK}{\delta}\right)^{-2}\sum_{(s,a,s',h)} q_k(s,a,h)\epsilon_k^\star(s'\mid s,a,h) \sum_{(s'', a'', s'''),m\geq h+1} q_k(s'',a'',m\mid s', h+1)\epsilon_k^\star(s'''\mid s'',a'',m)\\
&\leq36\underbrace{\sum_{\substack{(s,a,s',h),\\(s'',a'',s'''),\\ m\geq h+1}} \sqrt{\frac{q_k(s,a,h)^2 P(s'\mid s,a,h)}{\max\{1, N_k(s,a,h)\}}} \sqrt{\frac{q_k(s'',a'',m\mid s',h+1)^2 P(s'''\mid s'',a'',m)}{\max\{1,N_k(s'',a'',m)\}}}}_{\text{Term 1}}\\
&\quad+564\underbrace{\sum_{\substack{(s,a,s',h),\\(s'',a'',s'''),\\ m\geq h+1}}\sqrt{\frac{q_k(s,a,h)^2 P(s'\mid s,a,h)}{\max\{1, N_k(s,a,h)\}}}\frac{q_k(s'',a'',m \mid s',h+1)}{\max\{1,N_k(s'',a'',m)\}}}_{\text{Term 2}}\\
&\quad+564\underbrace{\sum_{\substack{(s,a,s',h),\\(s'',a'',s'''),\\ m\geq h+1}}\frac{q_k(s,a,h)}{\max\{1, N_k(s,a,h)\}}\sqrt{\frac{q_k(s'',a'',m\mid s',h+1)^2 P(s'''\mid s'',a'',m)}{\max\{1,N_k(s'',a'',m)\}}}}_{\text{Term 3}}\\
&\quad+8836\underbrace{\sum_{\substack{(s,a,s',h),\\(s'',a'',s'''),\\ m\geq h+1}}\frac{q_k(s,a,h)}{\max\{1, N_k(s,a,h)\}}\frac{q_k(s'',a'',m \mid s',h+1)}{\max\{1,N_k(s'',a'',m)\}}}_{\text{Term 4}}.
\end{align*}}

Term 1 can be bounded as follows.
\begin{align*}
\text{Term 1}\leq&\sqrt{\sum_{\substack{(s,a,s',h),\\(s'',a'',s'''),\\ m\geq h+1}} {\scriptstyle \frac{q_k(s,a,h) P(s'''\mid s'',a'',m)q_k(s'',a'',m\mid s',h+1)}{\max\{1, N_k(s,a,h)\}}}} \sqrt{\sum_{\substack{(s,a,s',h),\\(s'',a'',s'''),\\ m\geq h+1}} {\scriptstyle\frac{q_k(s'',a'',m\mid s',h+1) P(s'\mid s,a,h)q_k(s,a,h)}{\max\{1,N_k(s'',a'',m)\}}}}\\
\leq&\sqrt{HS\sum_{(s,a,h)} \frac{q_k(s,a,h)}{\max\{1, N_k(s,a,h)\}}} \sqrt{HS\sum_{(s'',a'',m)} \frac{q_k(s'',a'',m)}{\max\{1,N_k(s'',a'',m)\}}}\\
=&HS\sum_{(s,a,h)} \frac{q_k(s,a,h)}{\max\{1,N_k(s,a,h)\}}
\end{align*}
where the first inequality is from the Cauchy-Schwarz inequality.

We can bound Term 2 as the following argument.
\begin{align*}
\text{Term 2}\leq&\sqrt{\sum_{\substack{(s,a,s',h),\\(s'',a'',s'''),\\ m\geq h+1}}{\scriptstyle\frac{q_k(s,a,h)q_k(s'',a'',m\mid s',h+1)}{\max\{1,N_k(s,a,h)\}\max\{1,N_k(s'',a'',m)\}}}}\sqrt{\sum_{\substack{(s,a,s',h),\\(s'',a'',s'''),\\ m\geq h+1}}{\scriptstyle\frac{q_k(s'',a'',m\mid s',h+1)P(s'\mid s,a,h)q_k(s,a,h)}{\max\{1,N_k(s'',a'',m)\}}}}\\
\leq&\sqrt{HS^2\sum_{(s,a,h)}\frac{q_k(s,a,h)}{\max\{1,N_k(s,a,h)\}}}\sqrt{HS\sum_{(s'',a'',m)} \frac{q_k(s'',a'',m)}{\max\{1,N_k(s'',a'',m)\}}}\\
=& HS^{1.5} \sum_{(s,a,h)} \frac{q_k(s,a,h)}{\max\{1, N_k(s,a,h)\}}.
\end{align*}
Similar to Term 2, we have an upper bound on Term 3 as follows.
$$\text{Term 3} = HS^{1.5} \sum_{(s,a,h)} \frac{q_k(s,a,h)}{\max\{1, N_k(s,a,h)\}}.$$

Since $1/{\max\{1,N_k(s,a,h)\}} \leq 1$, we bound Term 4 in the following way.
\begin{align*}
\text{Term 4} \leq& HS^2\sum_{(s,a,h)} \frac{q_k(s,a,h)}{\max\{1,N_k(s,a,h)\}}.
\end{align*}

Finally, we deduce that
\begin{align*}
&\left|\sum_{(s,a,s',h)}q_k(s,a,h)\left(P-P_k\right)(s'\mid s,a,h)\left(V^{\pi_k}_{h+1}(s';\ell,P_k)-V^{\pi_k}_{h+1}(s';\ell,P)\right)\right|\\
&\leq 10^4 BH^2S^2 \left(\ln \frac{HSAK}{\delta}\right)^2 \sum_{(s,a,h)} \frac{q_k(s,a,h)}{\max\{1, N_k(s,a,h)\}}
\end{align*} 
as desired.
\end{proof}

Next, we provide \Cref{lemma9}, which is a modification of \citep[Lemma 9]{ssp-adversarial-unknown} to our finite-horizon MDP setting.

\begin{lemma}\label{lemma9}
	Assume that the good event $\cE$ holds. Let $\pi_k$ be any policy for episode $k$, let $P_k$ be any transition kernel from $\cP_k$ for episode $k$, and let $P$ be the true transition kernel. Let $q_k,\widehat q_k$ denote the occupancy measures $q^{P,\pi_k},q^{P_k,\pi_k}$, respectively. Let $\ell_k:\cS\times \cA\times[H]\to[-B,B]$ be an arbitrary reward function for episode $k$. With probability at least $1-2\delta$, 
	\begin{align*}
		\sum_{k=1}^K \left|\langle  \bm{\ell_k}, \bm{q_k}-\bm{\widehat q_k}\rangle\right|=\cO\left(B\left(H^{1.5}S\sqrt{AK}+H^3S^3A\right)\left(\ln\frac{HSAK}{\delta}\right)^3 \right).
	\end{align*}
    where $\bm{q_k},\bm{\widehat q_k},\bm{\ell_k}$ are the vector representations of $q_k, \widehat q_k, \ell_k$.
\end{lemma}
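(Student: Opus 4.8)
The plan is to mirror the template behind \Cref{theorem:U_k} and \Cref{lemma:variance-aware}, now carried out episode by episode and then summed over $k$. First I would rewrite $\langle\bm{\ell_k},\bm{\widehat q_k}-\bm{q_k}\rangle$ using the value difference lemma (\Cref{lemma:confidence''}) with $\widetilde P=P$ and $\widehat P=P_k$, so that $\widetilde q=q_k$ and $\widehat q=\widehat q_k$, giving
\[
\langle\bm{\ell_k},\bm{\widehat q_k}-\bm{q_k}\rangle=\sum_{(s,a,s',h)}q_k(s,a,h)\,(P_k-P)(s'\mid s,a,h)\,V^{\pi_k}_{h+1}(s';\ell_k,P_k).
\]
I would then decompose $V^{\pi_k}_{h+1}(s';\ell_k,P_k)=V^{\pi_k}_{h+1}(s';\ell_k,P)+\big(V^{\pi_k}_{h+1}(s';\ell_k,P_k)-V^{\pi_k}_{h+1}(s';\ell_k,P)\big)$ and split the sum accordingly. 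The contribution of the value-difference piece is exactly what \Cref{lemma10} controls: for each $k$ it is at most $10^4 BH^2S^2\big(\ln(HSAK/\delta)\big)^2\sum_{(s,a,h)}q_k(s,a,h)/\max\{1,N_k(s,a,h)\}$.

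For the dominant piece, I would use $\sum_{s'}(P_k-P)(s'\mid s,a,h)=0$ to subtract the mean $\mu_k(s,a,h):=\bbE_{s'\sim P(\cdot\mid s,a,h)}[V^{\pi_k}_{h+1}(s';\ell_k,P)]$, then bound $|(P_k-P)(s'\mid s,a,h)|\le\epsilon_k^\star(s'\mid s,a,h)$ by \Cref{lemma:confidence'} (valid since $P_k\in\cP_k$), and split $\epsilon_k^\star$ into its $\sqrt{P/\max\{1,N_k\}}$ part and its $1/\max\{1,N_k\}$ part. For the leading $\sqrt{P}$ part, Cauchy--Schwarz over $s'$ turns $\sum_{s'}\sqrt{P(s'\mid s,a,h)}\,|V^{\pi_k}_{h+1}(s';\ell_k,P)-\mu_k(s,a,h)|$ into $\sqrt{S\,\bbV_k(s,a,h)}$ with $\bbV_k(s,a,h):=\var_{s'\sim P(\cdot\mid s,a,h)}[V^{\pi_k}_{h+1}(s';\ell_k,P)]$, and a second Cauchy--Schwarz over $(s,a,h)$ reduces the per-episode bound to $\cO\big(BH\sqrt{S\ln(HSAK/\delta)}\big)\,\sqrt{\sum_{(s,a,h)}q_k(s,a,h)/\max\{1,N_k(s,a,h)\}}$, where the prefactor $BH$ comes from the crucial fact that $\langle\bm{q_k},\bm{\bbV_k}\rangle=\cO(B^2H^2)$ rather than the naive $\cO(B^2H^3)$: \Cref{lemma4} gives $\langle\bm{q_k},\bm{\bbV_k}\rangle\le\var[\langle\bm{n_k},\bm{\ell_k}\rangle\mid\ell_k,\pi_k,P]$, and \Cref{lemma2} gives $\var[\langle\bm{n_k},\bm{\ell_k}\rangle\mid\ell_k,\pi_k,P]\le 2B\,\langle\bm{q_k},\bm{\vec h}\odot\bm{\ell_k}\rangle\le 2B^2H^2$ since $\sum_{(s,a,h)}q_k(s,a,h)=H$. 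The residual $1/\max\{1,N_k\}$ part of $\epsilon_k^\star$ is lower order, handled by the crude bound $|V^{\pi_k}_{h+1}(\cdot;\ell_k,P)-\mu_k(s,a,h)|\le 2BH$ together with $\sum_{s'}1=S$.

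Finally I would sum over $k=1,\dots,K$. All terms that are linear in $\sum_{(s,a,h)}q_k(s,a,h)/\max\{1,N_k(s,a,h)\}$ — the \Cref{lemma10} contribution and the lower-order parts — sum via the first inequality of \Cref{lemma8} to $\widetilde\cO\big(BH^3S^3A\big)$. The leading term needs $\sum_{k=1}^K\sqrt{\sum_{(s,a,h)}q_k(s,a,h)/\max\{1,N_k(s,a,h)\}}$, which by Cauchy--Schwarz over $k$ is at most $\sqrt{K}\,\sqrt{\sum_{k=1}^K\sum_{(s,a,h)}q_k(s,a,h)/\max\{1,N_k(s,a,h)\}}=\widetilde\cO(\sqrt{HSAK})$ again by \Cref{lemma8}; multiplying by the $\cO(BH\sqrt S)$ prefactor yields the $\widetilde\cO\big(BH^{1.5}S\sqrt{AK}\big)$ term, and combining gives the claimed bound. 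The extra $2\delta$ failure probability comes from the Bernstein-type martingale concentration invoked in the course of controlling these sums (cf.\ the proof of \Cref{lemma8}). The step I expect to be the main obstacle is the two-layer Cauchy--Schwarz bookkeeping: ensuring the $\sqrt S$ produced by summing $\epsilon_k^\star$ over $s'$ combines with the $\sqrt{HSA}$ from \Cref{lemma8} and the law-of-total-variance bound so that exactly one power of $S$ and the improved $H^{1.5}$ (rather than $H^2$) survive in the leading term, matching the gain already seen in \Cref{theorem:U_k}.
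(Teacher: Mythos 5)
Your proposal is correct and follows the paper's decomposition almost step for step: value difference lemma, splitting off the $V(\cdot;P_k)-V(\cdot;P)$ residual via \Cref{lemma10}, mean subtraction, the $\epsilon_k^\star$ split, the law-of-total-variance bound via \Cref{lemma4} and \Cref{lemma2}, and \Cref{lemma8} for the sums. The one genuine difference is in how you sum the leading term. The paper first converts $q_k$ to the realized counts $n_k$ using a martingale concentration (\Cref{cohen-concentration}), shifts $N_k$ to $N_{k+1}$, applies Cauchy--Schwarz jointly over all of $(k,s,a,s',h)$, and then controls $\sum_k\sum n_k P(V-\mu)^2$ by relating it back to $\sum_k\langle\bm{q_k},\bm{\mathbb{V}_k}\rangle$ via Azuma's inequality — these two concentration steps are exactly where the paper's extra $2\delta$ comes from. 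You instead apply Cauchy--Schwarz twice per episode (over $s'$, then over $(s,a,h)$), invoke the deterministic bound $\langle\bm{q_k},\bm{\mathbb{V}_k}\rangle\leq 2B^2H^2$ episode by episode, and only then apply Cauchy--Schwarz over $k$ together with \Cref{lemma8}. Since \Cref{lemma4} and \Cref{lemma2} are pointwise inequalities and \Cref{lemma8} is already part of the good event $\cE$, your route needs no additional concentration at all, so the conclusion holds deterministically on $\cE$ — your attribution of the $2\delta$ to the Bernstein step inside \Cref{lemma8} is therefore slightly off (that randomness is already absorbed into $\cE$), but this only makes your statement stronger, not weaker. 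Both routes give the same $\widetilde{\cO}(BH^{1.5}S\sqrt{AK}+BH^3S^3A)$ bound; yours is marginally cleaner, at the cost of an extra $\sqrt{K}$-style Cauchy--Schwarz over episodes that the paper's joint application avoids.
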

\begin{proof}
We define $\xi_1$ as 
$\xi_1=\left\{\ell_1,\pi_1\right\}$
and for $k\geq 2$, we define $\xi_k$ as 
$$\left\{s_1^{P,\pi_{k-1}},a_1^{P,\pi_{k-1}},\ldots, s_h^{P,\pi_{k-1}}, a_h^{P,\pi_{k-1}}, \ell_k,\pi_k\right\}$$
where $\pi_{k-1}$ and $\pi_k$ denote the policies for episode $k-1$ and episode $k$, respectively, and $$\left(s_1^{P,\pi_{k-1}},a_1^{P,\pi_{k-1}},\ldots, s_h^{P,\pi_{k-1}}, a_h^{P,\pi_{k-1}}\right)$$
is the trajectory generated under policy $\pi_{k-1}$ and transition kernel $P$. Then for $k\in[K]$, let $\cH_k$ be defined as the $\sigma$-algebra generated by the random variables in $\xi_1\cup\cdots\cup \xi_{k}$. Then it follows that $\cH_1,\ldots, \cH_k$ give rise to a filtration.

Let us define $$\mu_k(s,a,h)=\mathbb{E}_{s'\sim P(\cdot\mid s,a,h)}\left[V^{\pi_k}_{h+1}(s';\ell_k, P)\right].$$

Note that
\begin{align*}
\sum_{k=1}^K \left|\langle \bm{\ell_k}, \bm{q_k}-\bm{\widehat q_k}\rangle\right|
&=\sum_{k=1}^K\left|\sum_{(s,a,s',h)\in\cS\times\cA\times\cS\times[H]}q_k(s,a,h)\left(P(s'\mid s,a,h)-P_k(s'\mid s,a,h)\right)V^{\pi_k}_{h+1}(s';\ell_k, P_k)\right|\\
&\leq \sum_{k=1}^K\left|\sum_{(s,a,s',h)\in\cS\times\cA\times\cS\times[H]}q_k(s,a,h)\left(P(s'\mid s,a,h)-P_k(s'\mid s,a,h)\right)V^{\pi_k}_{h+1}(s';\ell_k, P)\right|\\
&\quad + \cO\left(B H^3S^3A\left(\ln(HSAK/\delta)\right)^3 \right)
\end{align*}
where the equality is due to \Cref{lemma:confidence''} and the inequality is due to Lemmas \ref{lemma10} and \ref{lemma8}.

Moreover,
\begin{align*}
&\sum_{k=1}^K\left|\sum_{(s,a,s',h)\in\cS\times\cA\times\cS\times[H]}q_k(s,a,h)\left(P(s'\mid s,a,h)-P_k(s'\mid s,a,h)\right)V^{\pi_k}_{h+1}(s';\ell_k,P)\right|\\
&=\sum_{k=1}^K\left|\sum_{(s,a,s',h)\in\cS\times\cA\times\cS\times[H]}q_k(s,a,h)\left(\left(P-P_k\right)(s'\mid s,a,h)\right)\left(V^{\pi_k}_{h+1}(s';\ell_k,P)-\mu_k(s,a,h)\right)\right|\\
&\leq \sum_{k=1}^K\sum_{(s,a,s',h)\in\cS\times\cA\times\cS\times[H]}q_k(s,a,h)\epsilon_k^\star(s'\mid s,a,h)\left|V^{\pi_k}_{h+1}(s';\ell_k,P)-\mu_k(s,a,h)\right|\\
&\leq \cO\left(\sum_{k=1}^K\sum_{\substack{(s,a,s',h)\in\\ \cS\times\cA\times\cS\times[H]}}q_k(s,a,h)\sqrt{\frac{P(s'\mid s,a,h)\ln(HSAK/\delta)}{\max\{1,N_k(s,a,h)\}}\left(V^{\pi_k}_{h+1}(s';\ell_k,P)-\mu_k(s,a,h)\right)^2}\right)\\
&\quad + \cO\left(B HS\sum_{k=1}^K\sum_{(s,a,h)\in\cS\times\cA\times[H]}\frac{q_k(s,a,h)\ln(HSAK/\delta)}{\max\{1,N_k(s,a,h)\}}\right)\\
&\leq \cO\left(\sum_{k=1}^K\sum_{\substack{(s,a,s',h)\in\\ \cS\times\cA\times\cS\times[H]}}q_k(s,a,h)\sqrt{\frac{P(s'\mid s,a,h)\ln(HSAK/\delta)}{\max\{1,N_k(s,a,h)\}}\left(V^{\pi_k}_{h+1}(s';\ell_k,P)-\mu_k(s,a,h)\right)^2}\right)\\
&\quad +  \cO\left(B H^2S^2A\left(\ln(HSAK/\delta)\right)^2 \right)
\end{align*}
where the first equality holds because $\sum_{s'\in\cS}\left(P-P_k\right)(s'\mid s,a,h)=0$ and $\mu_k(s,a,h)$ is independent of $s'$, the first inequality is due to~\Cref{lemma:confidence'},  the second inequality is from $\left|V^{\pi_k}_{h+1}(s';\ell_k,P)-\mu_k(s,a,h)\right|\leq 2B H$, and the last inequality is from \Cref{lemma8}. Recall that
$q_k(s,a,h)=\mathbb{E}\left[n_k(s,a,h)\mid  \pi_k, P\right]$, which implies that
\begin{align*}
\sum_{k=1}^K\sum_{\substack{(s,a,s',h)\in\\ \cS\times\cA\times\cS\times[H]}}q_k(s,a,h)\sqrt{\frac{P(s'\mid s,a,h)\ln(HSAK/\delta)}{\max\{1,N_k(s,a,h)\}}\left(V^{\pi_k}_{h+1}(s';\ell_k,P)-\mu_k(s,a,h)\right)^2}=\sum_{k=1}^K \mathbb{E}\left[X_k\mid \cH_k, P\right]
\end{align*}
where 
$$X_k= \sum_{\substack{(s,a,s',h)\in\\ \cS\times\cA\times\cS\times[H]}}n_k(s,a,h)\sqrt{\frac{P(s'\mid s,a,h)\ln(HSAK/\delta)}{\max\{1,N_k(s,a,h)\}}\left(V^{\pi_k}_{h+1}(s';\ell_k,P)-\mu_k(s,a,h)\right)^2}.$$
Here, we have
$$0\leq X_k \leq \cO\left(B HS\sum_{(s,a,h)\in \cS\times \cA\times [H]}n_k(s,a,h) \sqrt{\ln(HSAK/\delta)}\right)= \cO(B H^2S\sqrt{\ln(HSAK/\delta)}).$$
Then it follows from \Cref{cohen-concentration} that with probability at least $1-\delta$, 
\begin{align*}
&\sum_{k=1}^K \mathbb{E}\left[X_k\mid \cH_k,  P\right]\\
&\leq 2\sum_{k=1}^K \sum_{\substack{(s,a,s',h)\in\\ \cS\times\cA\times\cS\times[H]}}n_k(s,a,h)\sqrt{\frac{P(s'\mid s,a,h)\ln(HSAK/\delta)}{\max\{1,N_k(s,a,h)\}}\left(V^{\pi_k}_{h+1}(s';\ell_k,P)-\mu_k(s,a,h)\right)^2}\\
&\quad + \cO\left(B H^2S \left(\ln(HSAK/\delta)\right)^{1.5}\right).
\end{align*}
Note that
\begin{align*}
&\sum_{k=1}^K \sum_{\substack{(s,a,s',h)\in\\ \cS\times\cA\times\cS\times[H]}}n_k(s,a,h)\sqrt{\frac{P(s'\mid s,a,h)\ln(HSAK/\delta)}{\max\{1,N_k(s,a,h)\}}\left(V^{\pi_k}_{h+1}(s';\ell_k,P)-\mu_k(s,a,h)\right)^2}\\
&\leq \sum_{k=1}^K \sum_{\substack{(s,a,s',h)\in\\ \cS\times\cA\times\cS\times[H]}}n_k(s,a,h)\sqrt{\frac{P(s'\mid s,a,h)\ln(HSAK/\delta)}{\max\{1,N_{k+1}(s,a,h)\}}\left(V^{\pi_k}_{h+1}(s';\ell_k,P)-\mu_k(s,a,h)\right)^2}\\
&\quad + B H\sum_{k=1}^K \sum_{\substack{(s,a,s',h)\in\\ \cS\times\cA\times\cS\times[H]}}n_k(s,a,h)\left(\sqrt{\frac{P(s'\mid s,a,h)\ln(HSAK/\delta)}{\max\{1,N_{k}(s,a,h)\}}}-\sqrt{\frac{P(s'\mid s,a,h)\ln(HSAK/\delta)}{\max\{1,N_{k+1}(s,a,h)\}}}\right)\\
&\leq \sum_{k=1}^K \sum_{\substack{(s,a,s',h)\in\\ \cS\times\cA\times\cS\times[H]}}n_k(s,a,h)\sqrt{\frac{P(s'\mid s,a,h)\ln(HSAK/\delta)}{\max\{1,N_{k+1}(s,a,h)\}}\left(V^{\pi_k}_{h+1}(s';\ell_k,P)-\mu_k(s,a,h)\right)^2}\\
&\quad +B H\sqrt{S}\sum_{k=1}^K \sum_{{(s,a,h)\in \cS\times\cA\times[H]}}\left(\sqrt{\frac{\ln(HSAK/\delta)}{\max\{1,N_{k}(s,a,h)\}}}-\sqrt{\frac{\ln(HSAK/\delta)}{\max\{1,N_{k+1}(s,a,h)\}}}\right)\\
&\leq \sum_{k=1}^K \sum_{\substack{(s,a,s',h)\in\\ \cS\times\cA\times\cS\times[H]}}n_k(s,a,h)\sqrt{\frac{P(s'\mid s,a,h)\ln(HSAK/\delta)}{\max\{1,N_{k+1}(s,a,h)\}}\left(V^{\pi_k}_{h+1}(s';\ell_k,P)-\mu_k(s,a,h)\right)^2}\\
&\quad +  \cO\left(B H^2S^{1.5}A \sqrt{\ln(HSAK/\delta)}\right).
\end{align*}
where the first inequality holds because $\left|V^{\pi_k}_{h+1}(s';\ell_k,P)-\mu_k(s,a,h)\right|\leq B H$, the second inequality holds because $n_k(s,a,h)\leq 1$ and the Cauchy-Schwarz inequality implies that
$$\sum_{s'\in \cS}\sqrt{P(s'\mid s,a,h)}\leq \sqrt{S\sum_{s'\in \cS}P(s'\mid s,a,h)}=\sqrt{S},$$
and the third inequality follows from 
$$\sum_{k=1}^K \left(\sqrt{\frac{1}{\max\{1,N_{k}(s,a,h)\}}}-\sqrt{\frac{1}{\max\{1,N_{k+1}(s,a,h)\}}}\right)\leq \sqrt{\frac{1}{\max\{1,N_{1}(s,a,h)\}}}=1.$$
Next, the Cauchy-Schwarz inequality implies the following. 
\begin{align*}
& \sum_{k=1}^K \sum_{\substack{(s,a,s',h)\in\\ \cS\times\cA\times\cS\times[H]}}n_k(s,a,h)\sqrt{\frac{P(s'\mid s,a,h)\ln(HSAK/\delta)}{\max\{1,N_{k+1}(s,a,h)\}}\left(V^{\pi_k}_{h+1}(s';\ell_k,P)-\mu_k(s,a,h)\right)^2}\\
&\leq  \sqrt{\sum_{k=1}^K \sum_{\substack{(s,a,s',h)\in\\ \cS\times\cA\times\cS\times[H]}}n_k(s,a,h){P(s'\mid s,a,h)}\left(V^{\pi_k}_{h+1}(s';\ell_k,P)-\mu_k(s,a,h)\right)^2}\\
&\quad \times\sqrt{ \sum_{k=1}^K \sum_{\substack{(s,a,s',h)\in\\ \cS\times\cA\times\cS\times[H]}}n_k(s,a,h)\frac{\ln(HSAK/\delta)}{\max\{1,N_{k+1}(s,a,h)\}}}
\end{align*}
Here, the second term can be bounded as follows.
\begin{align*}
\sum_{k=1}^K \sum_{\substack{(s,a,s',h)}}n_k(s,a,h)\frac{\ln(HSAK/\delta)}{\max\{1,N_{k+1}(s,a,h)\}}&=S\ln\left(HSAK/\delta\right)\sum_{k=1}^K \sum_{\substack{(s,a,h)}}\frac{n_k(s,a,h)}{\max\{1,N_{k+1}(s,a,h)\}}\\
&=S\ln\left(HSAK/\delta\right) \sum_{\substack{(s,a,h)}}\sum_{k=1}^K\frac{n_k(s,a,h)}{\max\{1,N_{k+1}(s,a,h)\}}\\
&=\cO\left(HS^2 A\left(\ln\left({HSAK}/{\delta}\right)\right)^2\right).
\end{align*}
For $(s,a,h)\in \cS\times \cA\times [H]$, we define 
$$\mathbb{V}_k(s,a,h)=\var_{s'\sim P(\cdot\mid s,a,h)}\left[V^{\pi_k}_{h+1}(s';\ell_k,P)\right].$$
Then 
\begin{align*}
\mathbb{V}_k(s,a,h)&= \mathbb{E}_{s'\sim P(\cdot\mid s,a,h)}\left[\left(V^{\pi_k}_{h+1}(s';\ell_k,P)-\mu_k(s,a,h)\right)^2\right]\\
&= \sum_{s'\in\cS} P(s'\mid s,a,h)\left(V^{\pi_k}_{h+1}(s';\ell_k,P)-\mu_k(s,a,h)\right)^2
\end{align*}
Furthermore, with probability at least $1-\delta$,
\begin{align*}
&\sum_{k=1}^K \sum_{\substack{(s,a,s',h)\in\\ \cS\times\cA\times\cS\times[H]}}n_k(s,a,h){P(s'\mid s,a,h)}\left(V^{\pi_k}_{h+1}(s';\ell_k,P)-\mu_k(s,a,h)\right)^2\\
&=\sum_{k=1}^K \sum_{\substack{(s,a,h)\in \cS\times\cA\times[H]}}n_k(s,a,h) \mathbb{V}_k(s,a,h)\\
&=\sum_{k=1}^K \langle\bm{q_k},\bm{\mathbb{V}_k}\rangle+\sum_{k=1}^K \sum_{\substack{(s,a,h)\in \cS\times\cA\times[H]}}(n_k(s,a,h)-q_k(s,a,h)) \mathbb{V}_k(s,a,h)\\
&\leq \sum_{k=1}^K\var\left[\langle n_k,\ell_k\rangle\mid \ell_k,\pi_k,P\right]+\cO\left(B^2 H^3\sqrt{K\ln(1/\delta)}\right)
\end{align*}
where $\bm{\mathbb{V}_k}\in\mathbb{R}^{SAH}$ is the vector representation of $\mathbb{V}_k$ and the  inequality follows from \Cref{lemma4}, $ \mathbb{V}_k(s,a,h)\leq B^2 H^2$,
\begin{align*}\sum_{\substack{(s,a,h)\in \cS\times\cA\times[H]}}(n_k(s,a,h)-q_k(s,a,h)) \mathbb{V}_k(s,a,h)&\leq \sum_{\substack{(s,a,h)\in \cS\times\cA\times[H]}}(n_k(s,a,h)+q_k(s,a,h)) B^2 H^2\\
&\leq 2B^2H^3,
\end{align*}
and \Cref{azuma}.
Therefore, we finally have proved that
\begin{align*}
\sum_{k=1}^K \left|\langle \bm{\ell_k}, \bm{q_k}-\bm{\widehat q_k}\rangle\right|
&=\cO\left(\sqrt{HS^2A\left(\ln\frac{HSAK}{\delta}\right)^2\left(\sum_{k=1}^K\var\left[\langle n_k,\ell_k\rangle\mid \ell_k,\pi_k,P\right]+B^2 H^3\sqrt{K\ln\frac{1}{\delta}}\right)}\right)\\
&\quad + \cO\left(B H^3S^3A\left(\ln\frac{HSAK}{\delta}\right)^3 \right).
\end{align*}
Moreover, we know from~\Cref{lemma2} that
$$\var\left[\langle \bm{n_k}, \bm{\ell_k}\rangle\mid  \ell_k, \pi_k, P\right]\leq \mathbb{E}\left[\langle \bm{n_k}, \bm{\ell_k}\rangle^2\mid \ell_k, \pi_k, P\right]\leq 2 B\langle \bm{q_k},\bm{\vec h}\odot \bm{\ell_k}\rangle,$$
and therefore, it follows that
\begin{align*}
\sum_{k=1}^K \left|\langle \bm{\ell_k}, \bm{q_k}-\bm{\widehat q_k}\rangle\right|
&=\cO\left(\left(\sqrt{HS^2A\left(B\sum_{k=1}^K\langle \bm{q_k},\bm{\vec h}\odot \bm{\ell_k}\rangle+B^2 H^3\sqrt{K}\right)}+B H^3S^3A\right)\left(\ln\frac{HSAK}{\delta}\right)^3 \right)\\
&=\cO\left(\left(\sqrt{B^2 H^3S^2AK+ B^2 H^4S^2A\sqrt{K}}+B H^3S^3A\right)\left(\ln\frac{HSAK}{\delta}\right)^3 \right)\\
&=\cO\left(\left(\sqrt{B^2 H^3S^2AK+B^2 H^3S^2AK + B^2 H^5S^2A}+B H^3S^3A\right)\left(\ln\frac{HSAK}{\delta}\right)^3 \right)\\
&=\cO\left(B\left(H^{1.5}S\sqrt{AK} + H^3S^3A\right)\left(\ln\frac{HSAK}{\delta}\right)^3 \right)
\end{align*}
where the second equality holds because $\langle \bm{q_k},\bm{\vec h}\odot \bm{\ell_k} \rangle=\cO(B H^2)$ and the third equality holds because $B^2 H^4S^2A\sqrt{K}=\cO\left(B^2\left( H^3S^2AK + H^5S^2A\right)\right)$.
\end{proof}

\section{Concentration Inequalities}

\begin{lemma}{\rm (Hoeffding's inequality)}\label{hoeffding} 
For i.i.d. random variables $Z_1, \ldots, Z_n$ following $1/2$-sub-Gaussian with zero mean,
\begin{align*}
    &\mathbb{P}\left(\frac{1}{n}\sum_{j=1}^n Z_j \geq \epsilon\right) \leq \exp\left(-n\epsilon^2\right), \\
    &\mathbb{P}\left(\frac{1}{n}\sum_{j=1}^n Z_j \leq -\epsilon\right) \leq \exp\left(-n\epsilon^2\right).
\end{align*}
\end{lemma}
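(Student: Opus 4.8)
The plan is to apply the standard Cram\'er--Chernoff (exponential Markov) argument, using only the moment generating function estimate for zero-mean $1/2$-sub-Gaussian variables recalled in \Cref{sec:problem-setting}, namely $\mathbb{E}[\exp(\lambda Z_j)]\le \exp(\lambda^2/4)$ for every $\lambda\in\mathbb{R}$. First I would fix $\epsilon>0$ and an auxiliary parameter $\lambda>0$, and apply Markov's inequality to the nonnegative random variable $\exp(\lambda\sum_{j=1}^n Z_j)$:
\[
\mathbb{P}\!\left(\frac{1}{n}\sum_{j=1}^n Z_j \ge \epsilon\right)
=\mathbb{P}\!\left(\exp\Big(\lambda\sum_{j=1}^n Z_j\Big)\ge e^{\lambda n\epsilon}\right)
\le e^{-\lambda n\epsilon}\,\mathbb{E}\!\left[\exp\Big(\lambda\sum_{j=1}^n Z_j\Big)\right].
\]

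Next, I would use the independence of $Z_1,\dots,Z_n$ to factor the moment generating function, $\mathbb{E}[\exp(\lambda\sum_{j=1}^n Z_j)]=\prod_{j=1}^n\mathbb{E}[\exp(\lambda Z_j)]$, and bound each factor by $\exp(\lambda^2/4)$ via the sub-Gaussian property. This yields
\[
\mathbb{P}\!\left(\frac{1}{n}\sum_{j=1}^n Z_j \ge \epsilon\right)\le \exp\!\left(-\lambda n\epsilon + \frac{n\lambda^2}{4}\right).
\]
Then I would optimize the exponent over $\lambda>0$: the quadratic $\tfrac{n}{4}\lambda^2 - n\epsilon\lambda$ attains its minimum at $\lambda=2\epsilon$, which is indeed positive since $\epsilon>0$, so the Markov step is legitimate. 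Substituting $\lambda=2\epsilon$ gives exponent $-2n\epsilon^2+n\epsilon^2 = -n\epsilon^2$, which is exactly the first claimed bound.

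For the second (lower-tail) inequality, I would run the identical argument on the variables $-Z_1,\dots,-Z_n$: they are again i.i.d.\ with zero mean, and they satisfy the same moment generating function bound because $\mathbb{E}[\exp(\lambda(-Z_j))]=\mathbb{E}[\exp((-\lambda)Z_j)]\le\exp(\lambda^2/4)$. Since $\mathbb{P}(\tfrac1n\sum_j Z_j\le -\epsilon)=\mathbb{P}(\tfrac1n\sum_j(-Z_j)\ge\epsilon)$, the bound $\exp(-n\epsilon^2)$ follows immediately.

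There is no substantive obstacle here — the proof is entirely routine. The only points requiring a moment's care are (i) checking that the optimizing choice $\lambda=2\epsilon$ lies in the range $\lambda>0$ over which the exponential Markov inequality was applied, and (ii) observing that the class of zero-mean $1/2$-sub-Gaussian variables is closed under negation, so that the lower-tail estimate reduces to the upper-tail one; both are immediate.
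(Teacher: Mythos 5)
Your proof is correct and is the standard Cram\'er--Chernoff argument; the paper states \Cref{hoeffding} without proof precisely because this is the routine derivation from the moment generating function bound $\mathbb{E}[\exp(\lambda Z_j)]\leq\exp(\lambda^2/4)$ given in \Cref{sec:problem-setting}. The optimization at $\lambda=2\epsilon$ and the reduction of the lower tail to the upper tail by negation are both handled correctly.
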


\begin{lemma}{\rm \citep[Theorem 4]{Maurer-bernstein}}\label{bernstein} Let $Z_1,\ldots, Z_n\in[0,1]$ be i.i.d. random variables with mean $z$, and let $\delta>0$. Then with probability at least $1-\delta$,  
$$z- \frac{1}{n}\sum_{j=1}^n Z_j \leq \sqrt{\frac{2 V_n\ln(2/\delta)}{n}} + \frac{7\ln (2/\delta)}{3(n-1)}$$
where $V_n$ is the sample variance given by
$$ V_n=\frac{1}{n(n-1)}\sum_{1\leq j<k\leq n} (Z_j-Z_k)^2.$$
\end{lemma}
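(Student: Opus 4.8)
The plan is to recover this empirical Bernstein bound by composing two ingredients: the classical variance-dependent Bernstein inequality for the deviation of the empirical mean, applied with the \emph{true} variance $\sigma^2:=\bbE[(Z_1-z)^2]$, and a one-sided concentration bound for the sample standard deviation $\sqrt{V_n}$ around $\sigma$, which lets us replace $\sigma$ by $\sqrt{V_n}$ at the cost of a lower-order additive term.

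First I would apply Bernstein's (equivalently Bennett's) inequality to the bounded i.i.d.\ variables $Z_1,\dots,Z_n\in[0,1]$: since $z-Z_j$ has mean $0$, variance $\sigma^2$, and satisfies $z-Z_j\le 1$, its moment generating function obeys $\bbE[e^{\lambda(z-Z_j)}]\le\exp\!\big(\sigma^2(e^{\lambda}-1-\lambda)\big)$ for $\lambda>0$, and a Chernoff argument plus inversion yields that with probability at least $1-\delta/2$,
$$ z-\frac1n\sum_{j=1}^n Z_j\ \le\ \sqrt{\frac{2\sigma^2\ln(2/\delta)}{n}}+\frac{c_0\ln(2/\delta)}{n} $$
for a small absolute constant $c_0$.

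Second — and this is the heart of the argument — I would establish that with probability at least $1-\delta/2$,
$$ \sigma\ \le\ \sqrt{V_n}+\sqrt{\frac{2\ln(2/\delta)}{n-1}}. $$
Note that $V_n=\frac{1}{n-1}\sum_j (Z_j-\bar Z_n)^2$ is the unbiased sample variance, so $\bbE[V_n]=\sigma^2$. The naive approach of applying bounded differences (McDiarmid) to $(z_1,\dots,z_n)\mapsto\sqrt{V_n}$ only controls the deviation of $\sqrt{V_n}$ from $\bbE[\sqrt{V_n}]$, and $\bbE[\sqrt{V_n}]\le\sigma$ by Jensen — the wrong direction; likewise, bounding $|V_n-\sigma^2|$ via concentration of the degree-two statistic $V_n$ yields only an $O\!\big((\ln(1/\delta)/n)^{1/4}\big)$ gap, which is too weak. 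The correct route is to exploit that $\sqrt{V_n}$ is a \emph{self-bounding} function in the sense of Boucheron--Lugosi--Massart: changing a single coordinate perturbs $\sqrt{V_n}$ by at most $1/\sqrt{n-1}$, and the sum of these one-coordinate perturbations is itself controlled by $\sqrt{V_n}$, which gives a Bennett-type lower-tail inequality controlling $\sigma-\sqrt{V_n}$ directly at the sharp $\Theta\!\big(\sqrt{\ln(1/\delta)/(n-1)}\big)$ rate. (Equivalently, one invokes the standard sample-variance concentration bound for bounded random variables.)

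Finally I would take a union bound over the two events and simplify. Substituting the bound on $\sigma$ into the Bernstein bound gives
$$ z-\frac1n\sum_{j=1}^n Z_j\ \le\ \sqrt{\frac{2V_n\ln(2/\delta)}{n}}+\frac{2\ln(2/\delta)}{\sqrt{n(n-1)}}+\frac{c_0\ln(2/\delta)}{n}, $$
and, using $\sqrt{n(n-1)}\ge n-1$ and $n\ge n-1$ together with careful constant bookkeeping (which is where the precise form of the Bernstein and sample-variance bounds is used), the two trailing terms collapse into $\frac{7\ln(2/\delta)}{3(n-1)}$, yielding the claim. The main obstacle is the second step: getting the self-bounding, Bennett-type concentration of $\sqrt{V_n}$ at the right $\Theta(\sqrt{\ln(1/\delta)/(n-1)})$ rate rather than the weaker bounds that a direct McDiarmid or degree-two concentration argument would produce.
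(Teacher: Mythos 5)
Your proof is correct and is essentially the canonical argument for this result: the paper does not prove this lemma but simply cites it as \citep[Theorem 4]{Maurer-bernstein}, whose proof is exactly your two-step combination of Bernstein's inequality with the true variance $\sigma^2$ and the self-bounding (lower-tail) concentration bound $\sigma \le \sqrt{V_n} + \sqrt{2\ln(2/\delta)/(n-1)}$ for the sample standard deviation, followed by a union bound. The constant bookkeeping also checks out, since the cross term plus the Bernstein remainder give $\frac{2\ln(2/\delta)}{\sqrt{n(n-1)}} + \frac{\ln(2/\delta)}{3n} \le \frac{2\ln(2/\delta)}{n-1} + \frac{\ln(2/\delta)}{3(n-1)} = \frac{7\ln(2/\delta)}{3(n-1)}$.
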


Next, we need the following Bernstein-type concentration inequality for martingales due to \cite{beygelzimer11a}. We take the version used in~\cite[Lemma 9]{Jin2020}.
\begin{lemma}{\rm \citep[Theorem 1]{beygelzimer11a}}\label{bernstein2}
Let $Y_1,\ldots, Y_n$ be a martingale difference sequence with respect to a filtration $\cF_1,\ldots, \cF_n$. Assume that $Y_j\leq R$ almost surely for all $j\in [n]$. Then for any $\delta\in(0,1)$ and $\lambda\in(0,1/R]$, with probability at least $1-\delta$, we have
$$\sum_{j=1}^n Y_j\leq \lambda \sum_{j=1}^n\mathbb{E}\left[Y_j^2\mid \cF_j\right]+ \frac{\ln(1/\delta)}{\lambda}.$$
\end{lemma}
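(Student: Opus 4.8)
The plan is the classical exponential-supermartingale argument behind Freedman-type inequalities. Fix $\lambda\in(0,1/R]$. The starting point is the elementary bound $e^x\le 1+x+x^2$, which holds for every $x\le 1$ (the map $x\mapsto(e^x-1-x)/x^2$ is increasing and takes the value $e-2<1$ at $x=1$). Since $Y_j\le R$ and $\lambda\le 1/R$, we have $\lambda Y_j\le 1$, so applying this to $x=\lambda Y_j$, taking $\mathbb{E}[\cdot\mid\cF_j]$, and using that $Y_j$ has conditional mean zero given $\cF_j$ gives
\[
\mathbb{E}\!\left[e^{\lambda Y_j}\mid\cF_j\right]\le 1+\lambda^2\,\mathbb{E}\!\left[Y_j^2\mid\cF_j\right]\le\exp\!\left(\lambda^2\,\mathbb{E}\!\left[Y_j^2\mid\cF_j\right]\right),
\]
the last step being $1+t\le e^t$.

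Next I would introduce the process
\[
M_k=\exp\!\left(\lambda\sum_{j=1}^k Y_j-\lambda^2\sum_{j=1}^k\mathbb{E}\!\left[Y_j^2\mid\cF_j\right]\right),\qquad M_0=1,
\]
and show that $(M_k)_{k=0}^n$ is a supermartingale with respect to $(\cF_k)$. The key observation is that $\sum_{j=1}^{k-1}Y_j$ and the whole compensator $\sum_{j=1}^k\mathbb{E}[Y_j^2\mid\cF_j]$ are $\cF_k$-measurable, so they factor out of $\mathbb{E}[\,\cdot\mid\cF_k]$, and the one-step bound above then yields $\mathbb{E}[M_k\mid\cF_k]\le M_{k-1}$. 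Iterating gives $\mathbb{E}[M_n]\le\mathbb{E}[M_0]=1$.

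The final step is Markov's inequality: $\mathbb{P}(M_n\ge 1/\delta)\le\delta\,\mathbb{E}[M_n]\le\delta$. On the complementary event, which has probability at least $1-\delta$, taking logarithms and dividing by $\lambda>0$ gives exactly $\sum_{j=1}^n Y_j\le\lambda\sum_{j=1}^n\mathbb{E}[Y_j^2\mid\cF_j]+\ln(1/\delta)/\lambda$, as claimed.

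There is no serious obstacle; the proof is short and standard. The only points that require care are the measurability bookkeeping needed to make $(M_k)$ a genuine supermartingale (each $Y_j$ must be adapted so that $\sum_{j<k}Y_j$ is $\cF_k$-measurable and the conditional-variance compensator is predictable enough to be pulled out of the conditional expectation), and the verification of $e^x\le 1+x+x^2$ on $(-\infty,1]$, which is precisely what forces the restriction $\lambda\le 1/R$ in the hypothesis.
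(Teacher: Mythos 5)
The paper does not prove this lemma; it imports it verbatim as Theorem 1 of \cite{beygelzimer11a} (in the form used by \cite{Jin2020}). Your argument is the standard and correct proof of that result: the bound $e^x\le 1+x+x^2$ on $(-\infty,1]$ combined with $\lambda Y_j\le 1$ (note that only the one-sided bound $Y_j\le R$ is needed, which you correctly exploit), the exponential supermartingale $M_k$, and Markov's inequality. The only cosmetic point is that $M_k$ is $\cF_{k+1}$-measurable under the paper's convention $\bbE[Y_j\mid\cF_j]=0$, so it is a supermartingale with respect to the shifted filtration; this does not affect the conclusion $\bbE[M_n]\le 1$.
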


\begin{lemma}[Azuma's inequality]\label{azuma}
Let $Y_1,\ldots, Y_n$ be a martingale difference sequence with respect to a filtration $\cF_1,\ldots, \cF_n$. Assume that $|Y_j|\leq B$ for $j\in[n]$. Then with probability at least $1-\delta$, we have
$$\left|\sum_{j=1}^n Y_j\right|\leq B\sqrt{2n\ln(2/\delta)}.$$
\end{lemma}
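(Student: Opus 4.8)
The plan is to prove this by the classical Azuma--Hoeffding argument: bound a conditional moment generating function one step at a time, chain the bounds together through the filtration, apply a Chernoff (exponential Markov) inequality, and finish with a union bound over the upper and lower tails.

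First I would establish the one-step estimate. Fix $\lambda\in\mathbb{R}$. For each $j$, the martingale difference property ensures that $Y_j$ has conditional mean zero given the past, and $|Y_j|\le B$ almost surely; Hoeffding's lemma for bounded zero-mean random variables then yields $\mathbb{E}[\exp(\lambda Y_j)\mid\cF_{j-1}]\le\exp(\lambda^2 B^2/2)$, where $\cF_{j-1}$ denotes the $\sigma$-algebra encoding $Y_1,\ldots,Y_{j-1}$ (with $\cF_0$ trivial). The proof of this estimate is the standard convexity computation: write $Y_j=\tfrac12(1+Y_j/B)\cdot B+\tfrac12(1-Y_j/B)\cdot(-B)$, bound $\exp(\lambda Y_j)$ by the corresponding convex combination of $\exp(\lambda B)$ and $\exp(-\lambda B)$, take conditional expectations using $\mathbb{E}[Y_j\mid\cF_{j-1}]=0$, and dominate the resulting hyperbolic-cosine expression by $\exp(\lambda^2 B^2/2)$ via a Taylor estimate.

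Next I would iterate. Writing $S_m=\sum_{j=1}^m Y_j$ and using the tower property, $\mathbb{E}[\exp(\lambda S_n)]=\mathbb{E}\!\left[\exp(\lambda S_{n-1})\,\mathbb{E}[\exp(\lambda Y_n)\mid\cF_{n-1}]\right]\le\exp(\lambda^2 B^2/2)\,\mathbb{E}[\exp(\lambda S_{n-1})]$, so by induction $\mathbb{E}[\exp(\lambda S_n)]\le\exp(n\lambda^2 B^2/2)$. For $\lambda>0$ and $t>0$, Markov's inequality applied to $\exp(\lambda S_n)$ gives $\mathbb{P}(S_n\ge t)\le\exp(n\lambda^2 B^2/2-\lambda t)$, and choosing $\lambda=t/(nB^2)$ optimizes this to $\mathbb{P}(S_n\ge t)\le\exp(-t^2/(2nB^2))$. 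Running the identical argument on $-Y_1,\ldots,-Y_n$, which is again a martingale difference sequence bounded by $B$, gives $\mathbb{P}(S_n\le-t)\le\exp(-t^2/(2nB^2))$, and a union bound yields $\mathbb{P}(|S_n|\ge t)\le 2\exp(-t^2/(2nB^2))$. Taking $t=B\sqrt{2n\ln(2/\delta)}$ makes the right-hand side equal to $\delta$, which is the claim.

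Every step is a textbook computation, so there is no genuine obstacle; the only points that require a little care are the conditional version of Hoeffding's lemma in the first step and making sure the conditioning in the tower-property induction is aligned with the martingale filtration so that the one-step bound applies at each stage. Alternatively one could simply invoke a standard reference for Azuma's inequality, but the short self-contained argument above is preferable here.
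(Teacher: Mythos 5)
Your proof is correct: it is the standard Azuma--Hoeffding argument (conditional Hoeffding lemma, tower-property induction on the moment generating function, Chernoff bound, union bound over the two tails), and the constant works out exactly since $t=B\sqrt{2n\ln(2/\delta)}$ gives $2\exp(-t^2/(2nB^2))=\delta$. The paper states this lemma as a standard concentration inequality without proof, so your self-contained derivation is precisely the textbook argument it implicitly invokes.
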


Next, we need the following concentration inequalities due to \cite{cohen2020}.

\begin{lemma}{\rm \citep[Theorem D.3]{cohen2020}}\label{cohen-concentration0}
	Let $\{X_n\}_{n=1}^\infty$ be a sequence of i.i.d. random variables with expectation $\mu$. Suppose that $0\leq X_n\leq B$ holds almost surely for all $n$. Then with probability at least $1-\delta$, the following holds for all $n\geq 1$ simultaneously:
	\begin{align*}
		\left|\sum_{i=1}^n(X_i-\mu)\right|&\leq 2\sqrt{B\mu n\ln \frac{2n}{\delta}} + B\ln\frac{2n}{\delta},\\
	\left|\sum_{i=1}^n(X_i-\mu)\right|&\leq 2\sqrt{B\sum_{i=1}^n X_i \ln \frac{2n}{\delta}} + 7B\ln\frac{2n}{\delta}.
\end{align*}
\end{lemma}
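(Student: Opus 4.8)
The plan is to read \Cref{cohen-concentration0} as a time-uniform (anytime-valid) Bernstein inequality and to obtain it by (i) a variance bound specific to bounded variables, (ii) a fixed-horizon Bernstein bound promoted to a maximal bound, (iii) a stratification union bound over dyadic time blocks to get the ``for all $n\ge 1$ simultaneously'' guarantee, and (iv) a self-bounding step to replace the true-variance proxy by the empirical one. The only property of $\{X_n\}$ beyond independence that is needed is $\var(X_i)\le\mathbb{E}[X_i^2]\le B\mu$, which follows from $0\le X_i\le B$ because then $X_i^2\le BX_i$, hence $\mathbb{E}[X_i^2]\le B\mathbb{E}[X_i]=B\mu$.

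First I would fix $m$ and apply Bernstein's inequality to $S_m=\sum_{i=1}^m(X_i-\mu)$ with variance proxy $B\mu$, giving $|S_m|\le\sqrt{2B\mu m\ln(2/\delta')}+\tfrac{B}{3}\ln(2/\delta')$ with probability at least $1-\delta'$. To control a whole block of indices at once I would instead use the martingale (Freedman-type) form of this bound, or \Cref{bernstein2} upgraded via a stopping-time argument, to control $\max_{m\le n}|S_m|$ by essentially the same expression with predictable quadratic variation at most $B\mu n$; it is this maximal version that makes the time-uniform conclusion reachable.

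Next I would partition $\mathbb{N}$ into dyadic blocks $I_j=\{2^j,\dots,2^{j+1}-1\}$, apply the maximal bound on each $I_j$ with quadratic-variation budget $B\mu\,2^{j+1}$ and failure probability $\delta_j$ chosen so that $\sum_{j\ge 0}\delta_j\le\delta$ (for example $\delta_j=\tfrac{\delta}{2(j+1)^2}$, using $\sum_{j\ge 0}(j+1)^{-2}<2$), and then take a union bound. For $n\in I_j$ we have $j=\lfloor\log_2 n\rfloor$, so $n\le 2^{j+1}\le 2n$ and $\ln(1/\delta_j)=\cO(\ln((j+1)/\delta))=\cO(\ln(2n/\delta))$; substituting these into the block bound and collecting the sub-root and linear terms gives the first displayed inequality, with the block weights $\delta_j$ tuned so the constants come out as the advertised $2$ and $1$. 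An alternative to the peeling argument is to build the exponential supermartingale $\exp(\lambda S_n-n\psi(\lambda))$ with the Bernstein rate function $\psi$, apply Ville's maximal inequality, and take a mixture over $\lambda$; this also yields a $\ln(2n/\delta)$-type dependence.

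Finally, for the second bound I would use self-bounding: the first inequality in the lower direction gives $n\mu\le\sum_{i=1}^n X_i+\sqrt{2B\mu n\ln(2n/\delta)}+B\ln(2n/\delta)$, which, viewed as $x^2\le\sqrt{2B\ln(2n/\delta)}\,x+\bigl(\sum_{i=1}^n X_i+B\ln(2n/\delta)\bigr)$ in $x=\sqrt{n\mu}$ and combined with the elementary fact that $x^2\le ax+b$ implies $x\le a+\sqrt b$, yields $n\mu\le\sum_{i=1}^n X_i+\cO\bigl(\sqrt{B(\sum_{i=1}^n X_i)\ln(2n/\delta)}\bigr)+\cO(B\ln(2n/\delta))$. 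Plugging this back into the first displayed inequality in place of $n\mu$ and simplifying with $\sqrt{a+b}\le\sqrt a+\sqrt b$ produces the second displayed inequality, the larger additive constant $7$ absorbing the self-bounding slack. The conceptual steps are all standard; the genuine obstacle is the bookkeeping needed to land the exact constants $2,1,7$ and the clean $\ln(2n/\delta)$ (rather than, say, $\ln(n^2/\delta)$) dependence, which is precisely the computation carried out in \cite{cohen2020}, and whose stratification and constant choices I would follow.
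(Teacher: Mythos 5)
Your proposal is essentially correct, but note that the paper does not prove this statement at all: it is imported verbatim as \citep[Theorem D.3]{cohen2020}, so there is no in-paper proof to compare against. Your reconstruction follows the standard route and works. Two remarks. First, the dyadic-peeling/maximal-inequality machinery in your third step is more than is needed: since the claim is only that the bound holds at every $n$ simultaneously (not a bound on a running maximum over a block evaluated against a single block-level budget), a direct union bound over each $n$ separately with $\delta_n=\delta/(2n(n+1))$ already suffices, and $\ln(1/\delta_n)\leq\ln(4n^2/\delta)\leq 2\ln(2n/\delta)$ turns the fixed-$n$ Bernstein bound $\sqrt{2B\mu n\ln(1/\delta_n)}$ into $2\sqrt{B\mu n\ln(2n/\delta)}$, which is exactly the advertised leading constant. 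Second, your self-bounding step is not merely a sketch that ``lands near'' the constants: carrying it out with the quadratic fact $x^2\leq ax+b\Rightarrow x\leq a+\sqrt{b}$ gives $\sqrt{n\mu}\leq\sqrt{\sum_{i\leq n}X_i}+3\sqrt{B\ln(2n/\delta)}$, and substituting into the first display yields $2\sqrt{B\sum_{i\leq n}X_i\ln(2n/\delta)}+(6+1)B\ln(2n/\delta)$, i.e., the constants $2$ and $7$ exactly. The only genuine bookkeeping risk is the coefficient of the linear term in the first inequality, which depends on whether one uses the $\tfrac{B}{3}$ or $\tfrac{2B}{3}$ form of Bernstein's inversion together with the chosen union-bound weights; this is the kind of constant-chasing you correctly defer to \cite{cohen2020}.
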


\begin{lemma}{\rm \citep[Lemma D.4]{cohen2020}}\label{cohen-concentration}
Let $\{X_n\}_{n=1}^\infty$ be a sequence of random variables adapted to the filtration $\{\cF_n\}_{n=1}^\infty$. Suppose that $0\leq X_n\leq B$ holds almost surely for all $n$. Then with probability at least $1-\delta$, the following holds for all $n\geq 1$ simultaneously:
$$\sum_{i=1}^n \mathbb{E}\left[ X_i\mid \cF_i\right]\leq 2 \sum_{i=1}^n X_i + 4B\ln\left(2n/\delta\right).$$
\end{lemma}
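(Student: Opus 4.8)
The plan is to recast the claim as a one-sided, self-bounding martingale concentration bound and then run an exponential-supermartingale (Freedman-type) argument. First I would set $\mu_i = \bbE[X_i \mid \cF_i]$, which is well-defined and satisfies $0 \le \mu_i \le B$ because $0 \le X_i \le B$ almost surely, and form the martingale difference sequence $D_i = \mu_i - X_i$, so that $\bbE[D_i \mid \cF_i] = 0$ and $-B \le D_i \le B$. Writing $S_n := \sum_{i=1}^n D_i = \sum_{i=1}^n \mu_i - \sum_{i=1}^n X_i$, the target inequality $\sum_{i=1}^n \mu_i \le 2\sum_{i=1}^n X_i + 4B\ln(2n/\delta)$ is equivalent to the upper-tail bound $S_n \le \sum_{i=1}^n X_i + 4B\ln(2n/\delta)$. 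The crucial structural observation is a self-bounding variance control: since $X_i \le B$ forces $X_i^2 \le B X_i$, we get $\bbE[D_i^2 \mid \cF_i] = \var(X_i \mid \cF_i) \le \bbE[X_i^2 \mid \cF_i] \le B\mu_i$, so the predictable quadratic variation $W_n := \sum_{i=1}^n \bbE[D_i^2 \mid \cF_i]$ satisfies $W_n \le B\sum_{i=1}^n \mu_i$, coupling the variance proxy back to the very quantity we wish to bound.

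Next I would build the standard exponential supermartingale. Using the one-sided Bennett bound $\bbE[e^{\lambda D_i}\mid \cF_i] \le \exp\!\big(\psi(\lambda)\,\bbE[D_i^2\mid \cF_i]\big)$, valid for $\lambda > 0$ whenever $D_i \le B$, with $\psi(\lambda) = (e^{\lambda B} - \lambda B - 1)/B^2$, the process $M_n = \exp\!\big(\lambda S_n - \psi(\lambda) W_n\big)$ is a nonnegative supermartingale with $\bbE[M_0] = 1$. Ville's maximal inequality (or a time-uniform union bound over $n$ with harmonic weighting, which is what produces the $\ln(2n/\delta)$ factor) then yields that, with probability at least $1-\delta$, the bound $\lambda S_n \le \psi(\lambda) W_n + \ln(2n/\delta)$ holds for all $n\ge 1$ simultaneously.

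Finally I would substitute $W_n \le B\sum_i \mu_i$ and $S_n = \sum_i \mu_i - \sum_i X_i$ and optimize $\lambda$. A short rearrangement gives $\big(2\lambda - (e^{\lambda B}-1)/B\big)\sum_i \mu_i \le \lambda \sum_i X_i + \ln(2n/\delta)$; the choice $\lambda = 1/(2B)$ makes the bracketed coefficient at least $\lambda/2 = 1/(4B)$, so dividing through yields exactly $\sum_i \mu_i \le 2\sum_i X_i + 4B\ln(2n/\delta)$, as required, with the constants $2$ and $4$ falling out cleanly. I expect the main obstacle to be not the constants but securing the uniform-in-$n$ (\emph{for all $n$ simultaneously}) guarantee with precisely the $\ln(2n/\delta)$ dependence: the raw supermartingale gives a fixed-$n$ bound with $\ln(1/\delta)$, and upgrading to the time-uniform statement while keeping the $n$-dependence at $\ln(2n/\delta)$ rather than a looser $\ln(n^2/\delta)$ requires a careful stopping/peeling argument, whereas the exponential-supermartingale construction and the self-bounding variance step are routine.
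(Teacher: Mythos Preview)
The paper does not supply its own proof of this lemma; it is quoted verbatim from \cite[Lemma D.4]{cohen2020} in the appendix of concentration inequalities and used as a black box. So there is no ``paper's proof'' to compare against here.

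Your Freedman/self-bounding argument is the standard route and is essentially how such lemmas are proved. The martingale-difference setup, the variance bound $\bbE[D_i^2\mid\cF_i]\le B\mu_i$, the exponential supermartingale $M_n=\exp(\lambda S_n-\psi(\lambda)W_n)$, and the final algebra with $\lambda=1/(2B)$ are all correct (indeed $2\lambda-(e^{\lambda B}-1)/B=(1-(e^{1/2}-1))/B\approx 0.351/B\ge 1/(4B)$, so the constants work out exactly as you claim).

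One comment on the point you flagged as the ``main obstacle'': it is not one. Because you fix $\lambda=1/(2B)$ independently of $n$, Ville's maximal inequality for the single nonnegative supermartingale $M_n$ already gives $\lambda S_n\le \psi(\lambda)W_n+\ln(1/\delta)$ for \emph{all} $n$ simultaneously with probability at least $1-\delta$. This yields the conclusion with $\ln(1/\delta)$ in place of $\ln(2n/\delta)$, which is strictly stronger. No peeling or harmonic union bound over $n$ is needed; the $\ln(2n/\delta)$ in the stated form is simply slack (and, as you implicitly noticed, a naive union bound with $\delta_n\propto \delta/n$ would not even be summable, so that is not how the $2n$ arises).
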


\section{Experimental Setup Detailes}\label{sec:appendix:numerical}

We evaluate DOPE+ via the following numerical experiment. We first explain the details of our CMDP setting, which is a modification of the three-state CMDP instances of~\cite{zheng2020constrained, simao2021always, bura2022dope}. We define the state space $\{s_1,s_2,s_3\}$ and the action space $\{a_1,a_2\}$. In \Cref{fig:numerical:transition}, we illustrate the transition probability. For taking $a_1$ at $s_1$, the agent remains in $s_1$ with probability $0.8$, and moves to $s_2$ with probability $0.2$. For taking $a_2$ at $s_1$, the agent moves to $s_2$ with probability $0.8$, and remains in $s_2$ with probability $0.2$. Furthermore, the same transition rule is applied to $s_2$ and $s_3$. 
\begin{figure}[ht]
    \centering
        \begin{tikzpicture}[roundnode/.style={circle, draw=black!60, fill=white!5, thick, minimum size = 10mm}]    
            \node[roundnode] (s1) at (2,2) {\small$s_1$};
            \node[roundnode] (s2) at (4,0) {\small$s_2$};
            \node[roundnode] (s3) at (0,0) {\small$s_3$};
        
            \draw[->] (s1) edge node[right] {$0.2$} (s2);
            \draw[->] (s2) edge node[above] {$0.2$} (s3);
            \draw[->] (s3) edge node[left] {$0.2$} (s1);
        
            \draw[->, loop above] (s1) edge node[above] {$0.8$} (s1);
            \draw[->, loop right] (s2) edge node[right] {$0.8$}(s2);
            \draw[->, loop left] (s3) edge node[left] {$0.8$} (s3);
        \end{tikzpicture} 
    
        \begin{tikzpicture}[roundnode/.style={circle, draw=black!60, fill=white!5, thick, minimum size = 10mm}]    
            \node[roundnode] (s1) at (2,2) {\small$s_1$};
            \node[roundnode] (s2) at (4,0) {\small$s_2$};
            \node[roundnode] (s3) at (0,0) {\small$s_2$};
        
            \draw[->] (s1) edge node[right] {$0.8$} (s2);
            \draw[->] (s2) edge node[above] {$0.8$} (s3);
            \draw[->] (s3) edge node[left] {$0.8$} (s1);
        
            \draw[->, loop above] (s1) edge node[above] {$0.2$} (s1);
            \draw[->, loop right] (s2) edge node[right] {$0.2$}(s2);
            \draw[->, loop left] (s3) edge node[left] {$0.2$} (s3);
        \end{tikzpicture} 
     
    \caption{Transition Probability for Taking $a_1$ and $a_2$ at Each State: Taking $a_1$ (Left) and Taking $a_2$ (Right)}
    \label{fig:numerical:transition}
\end{figure}
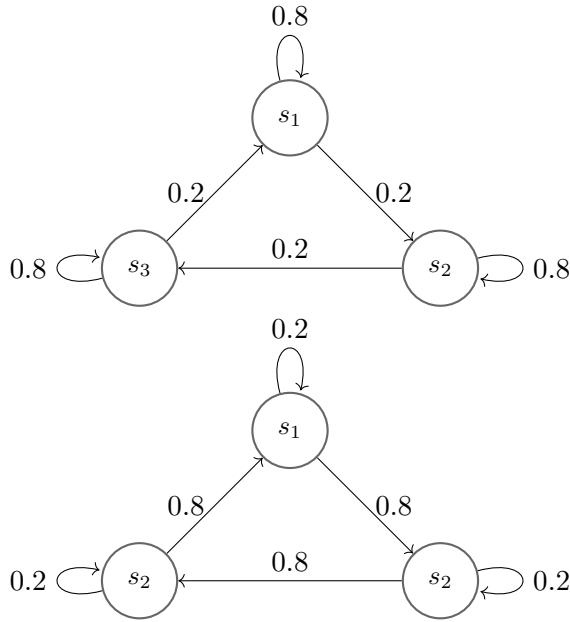

Next, we present the reward function $f$ and the cost function $g$. When the agent takes $a_1$, no reward or cost occurs. Then it can be written as $f(s,a_1)=g(s,a_1)=0$ for $s=s_1,s_2,s_3$. When $a_2$ is taken, the reward occurs depending on the current state. Specifically, we set $f(s_1,a_2)={1}/{3}, \ f(s_2,a_2)={2}/{3}$, and $f(s_3,a_2)=1$. On the other hand, for any state, the same amount of cost is incurred for $a_2$, i.e,  $g(s_1,a_2)=g(s_2,a_2)=g(s_3,a_2)=1$. Hence, $a_2$ is an action with a high reward and a high cost while $a_1$ is an action with zero reward and zero cost. Furthermore, for taking action $a$ at state $s$, the agent can observe the noisy reward $f(s,a)+\zeta_1$ and the noisy cost $g(s,a)+\zeta_2$, where $\zeta_1,\zeta_2$ are independently drawn from a zero-mean $1/2$-sub-Gaussian distribution.

In \Cref{fig:numerical:plot}, we compare regret and constraint violation under DOPE+ and DOPE for $200,000$ episodes when $H=30$. We consider DOPE as a benchmark algorithm because it provides the best-known regret bound among the existing algorithms while ensuring zero hard constraint violation. For the parameters of the experiment, we use $H=30$, $K=200,000$, $\bar C=18$, $\bar C_b=15$, $\delta=0.01$, and the uniform initial distribution of states. To obtain safe baseline policies, we sample a random policy whose expected cost is less than $\bar C_b$. Furthermore, we run the safe baseline policies until the LP becomes feasible for both DOPE+ and DOPE. 
In \Cref{fig:numerical:plot}, to observe the learning process easily, we consider the regret and constraint violations incurred after each LP becomes feasible.
Our results are averaged across 5 runs with different random seeds, and we display the $95\%$ confidence interval with shaded regions. The experiment was conducted on an {Apple M2 Pro}.%

\end{document}